\definecolor{myred}{rgb}{0.8, 0.2, 0.2}
\title{From Logistic Regression to the Perceptron Algorithm: \\ Exploring Gradient Descent with Large Step Sizes}
\author {
    Alexander Tyurin
}
\newcommand{\eqdef}{:=} 
\newcommand{\norm}[1]{\left\| #1 \right\|}
\newcommand{\inp}[2]{\left\langle#1,#2\right\rangle} 
\newcommand{\abs}[1]{\left| #1 \right|}
\newcommand{\R}{\mathbb{R}} 
\newcommand{\N}{\mathbb{N}} 
\newcommand{\cN}{\mathcal{N}}
\newcommand{\cO}{\mathcal{O}}
\newcommand*{\refalgone}[1]{\ref{#1}}
\theoremstyle{plain}
\newtheorem{theorem}{Theorem}[section]
\theoremstyle{definition}
\newtheorem{assumption}[theorem]{Assumption}
\theoremstyle{remark}
\begin{document}

\maketitle

\begin{abstract}
    We focus on the classification problem with a separable dataset, one of the most important and classical problems from machine learning.  The standard approach to this task is \emph{logistic regression with gradient descent} (LR+GD). Recent studies have observed that LR+GD can find a solution with arbitrarily large step sizes, defying conventional optimization theory. Our work investigates this phenomenon and makes three interconnected key observations about LR+GD with large step sizes.
    First, we find a remarkably simple explanation of why LR+GD with large step sizes solves the classification problem: LR+GD reduces to a batch version of the celebrated perceptron algorithm 
    when the step size $\gamma \to \infty.$ 
    Second, we observe that larger step sizes lead LR+GD to \emph{higher} logistic losses when it tends to the perceptron algorithm, but larger step sizes also lead to \emph{faster} convergence to a solution for the classification problem, meaning that logistic loss is an unreliable metric of the proximity to a solution. Surprisingly, high loss values can actually indicate faster convergence. Third, since the convergence rate in terms of loss function values of LR+GD is unreliable, we examine the iteration complexity required by LR+GD with large step sizes to solve the classification problem and prove that this complexity is suboptimal. To address this, we propose a new method, Normalized LR+GD---based on the connection between LR+GD and the perceptron algorithm---with much better theoretical guarantees.
\end{abstract}

\section{Introduction}
We consider the classical classification problem from machine learning\footnote{In this paper, we examine five algorithms, referred to as \refalgone{eq:gd}, \refalgone{eq:batch_perceptron}, \refalgone{eq:perceptron}, \refalgone{eq:norm_perceptron}, and \refalgone{eq:norm_gd}.} with a dataset $\{(a_i, y_i)\}_{i=1}^n$ and two classes, where $a_i \in \R^d$ and $y_i \in \{-1, 1\}$ for all $i \in [n] \eqdef \{1, \dots, n\}.$ The goal of the classification problem is to
\begin{align}
    \label{eq:linear_task}
    \textnormal{find a vector $\theta \in \R^d$ such that } y_i a_i^\top \theta > 0 \quad \forall i \in [n].
\end{align}
This is the supervised learning problem that finds a linear model (hyperplane) that separates the dataset. In general, this problem is infeasible, and one can easily find an example when the dataset is not linearly separable. We focus on the setup where the data is separable, which is formalized by the assumption:
\begin{assumption}
\label{ass:separable}
\begin{align*}
    \mu \eqdef \max_{\norm{\theta} = 1} \min_{i \in [n]} y_i a_i^\top \theta > 0.
\end{align*}
\end{assumption}
This condition ensures that for some $\theta \in \R^d,$ the dataset can be perfectly classified. The quantity $\mu$ is a \emph{margin} \citep{novikoff1962convergence,pattern}, which characterizes the distance between the two classes. This assumption is practical in modern machine learning problems \citep{soudry2018implicit,ji2018risk}. Albeit it is mostly attributed to large-scale nonlinear models \citep{brown2020language}, where the number of parameters is huge, the analysis of the methods in the linear case is equally important as it serves as a foundation for the nonlinear case. Let us define $R \eqdef \max_{i \in [n]} \norm{a_i}.$

There is a huge number of ways \citep{bishop2006pattern} how one can solve the problem, including support vector machines (SVMs) \citep{cortes1995support}, logistic regression, and the perceptron algorithm \citep{novikoff1962convergence}. This work focuses on the latter two, starting with logistic regression, which can be formalized by the following optimization problem:
\begin{align}
    \label{eq:logistic_regression}
    f(\theta) \eqdef \frac{1}{n} \sum_{i=1}^{n} \log\left(1 + \exp(-y_i a_i^\top \theta)\right) \rightarrow \min_{\theta \in \R^d}.
\end{align}
This optimization problem does not have a finite minimum when the data is separable. Indeed, if $\theta$ separates the dataset, then $y_i a_i^\top \theta > 0$ for all $i \in [n]$ and $f(c \cdot \theta) \to 0,$ when $c \to \infty.$

\subsection{Gradient Descent}

The logistic regression problem \eqref{eq:logistic_regression} can be solved with gradient descent (GD) \citep{nesterov2018lectures}, stochastic gradient descent \citep{robbins1951stochastic}, L-BFGS \citep{liu1989limited}, and variance-reduced methods (e.g., SAG, SVRG) \citep{schmidt2017minimizing,johnson2013accelerating}. We consider the GD method, arguably one of the simplest and most well-understood methods:
\begin{align}
    \label{eq:gd}\tag{LR+GD}
    \theta_{t+1} = \theta_{t} - \gamma \nabla f(\theta_t),
\end{align}
where $\theta_0$ is a starting point, $\gamma > 0$ is a step size, and $\nabla f(\theta_t)$ is the gradient of \eqref{eq:logistic_regression} at the point $\theta_t.$ 

\begin{quote}
\textbf{What do we know about GD in the context of logistic regression (\refalgone{eq:gd})? Surprisingly, despite the huge popularity of GD and logistic regression, we still lack a comprehensive understanding.}
\end{quote}

\subsection{Previous Work}
\textbf{Classical convex and nonconvex optimization theory.} Let us recall the classical result for GD: it is well-known that if $\gamma < \nicefrac{2}{L},$ and a function $f$ is $L$--smooth and lower bounded, which is true for \eqref{eq:logistic_regression}, then\footnote{Note that we can not directly apply the results, for instance, from \citep{nesterov2018lectures} because \eqref{eq:logistic_regression} does not have a \emph{finite} minimum. We need a minor modification of the classical analysis \citep{farfar,ji2018risk}.} $f(\theta_T) - \inf_{\theta \in \R^d} f(\theta) = \widetilde{\cO}\left(\nicefrac{1}{\gamma T}\right)$ \citep{ji2018risk} for convex problems, or $\min_{t \in [T]} \norm{\nabla f(\theta_t)}^2 \leq \cO\left(\nicefrac{1}{\gamma T}\right)$ for nonconvex problems. At the same time, if $\gamma > \nicefrac{2}{L},$ then one can find a $L$--smooth function such that GD diverges (see \citep[Sec.2]{cohen2020gradient}). Under $L$--smothness, the value $\nicefrac{2}{L}$ is special because it divides GD into the \emph{convergence} and \emph{divergence} regimes. \\~\\
\textbf{The edge of stability (EoS) and large step sizes.} Nonetheless, in practice, it was many times observed (e.g., \citep{lewkowycz2020large,cohen2020gradient}) that when a step size is large, $\gamma > \nicefrac{2}{L},$ GD not only not diverges, but non-monotonically with oscillation converges on the task \eqref{eq:logistic_regression}. This phenomenon was coined as \emph{the edge of stability} \citep{cohen2020gradient}. This means that there is something special about the practical machine learning problems. 

The mathematical aspects of the large step size regime have attracted significant attention within the research community, which analyzes the phenomenon through the sharpness of loss functions (the largest eigenvalue of Hessians) \citep{kreisler2023gradient}, small dimension problems \citep{zhu2022understanding,chen2022gradient,ahn2024learning}, bifurcation theory \citep{song2023trajectory}, sharpness behavior in networks with normalization \citep{lyu2022understanding}, 2-layer linear diagonal networks \citep{even2023s}, non-separable data \citep{ji2018risk,meng2024gradient}, self-stabilization \citep{damian2022self, ahn2022understanding,ma2022beyond,wang2022analyzing}. The papers by \citet{wu2024implicit,wu2024large} are the closest to our research since they also analyze GD and logistic regression (\refalgone{eq:gd}). They demonstrate that GD can converge with an arbitrary step size $\gamma > 0.$ 

In particular, the results from \citep{wu2024implicit} show that for any fixed $\gamma > 0$, $f(\theta_t)$ is approximately less than or equal to $\nicefrac{\textnormal{poly}(e^{\gamma})}{t}$ (plus additional terms that depend on other parameters). \citet{wu2024large} refined the dependence on $\gamma$ and demonstrated that GD with a large step size initially operates in \emph{the non-stable regime}, where $f(\theta_t) = \widetilde{\cO}(\nicefrac{(1 + \gamma^2)}{\gamma t})$. After approximately $\widetilde{\Theta}(\nicefrac{\max\{n, \gamma\}}{\mu^2})$ iterations, GD transitions to \emph{the stable regime}, where $f(\theta_t) = \widetilde{\cO}(\nicefrac{1}{\gamma t}).$ By tuning and taking the fixed step size $\gamma = \Theta(T),$ they get the accelerated rate $f(\theta_T) = \widetilde{\cO}(\nicefrac{1}{T^2}).$

\section{Contributions}
This paper delves deeper into understanding the dynamics of \emph{the non-stable regime}, where the loss chaotically oscillates due to large step sizes. We explore logistic regression with gradient descent (\refalgone{eq:gd}) and find that \textbf{1)} \refalgone{eq:gd} reduces to a batch version of the perceptron algorithm (\ref{eq:batch_perceptron}), \textbf{2)} the fastest convergence of \refalgone{eq:gd} to a solution of \eqref{eq:linear_task} is achieved when step sizes and \emph{loss values} are large, and \textbf{3)} \refalgone{eq:gd} is a suboptimal method, does not scale with the number of data points, and can be improved. Let us clarify:

\textbf{1)} We begin with a key observation that the iterates of \refalgone{eq:gd}, when divided by the step size $\gamma,$ converge to the iterates of a batch version \eqref{eq:batch_perceptron} of the celebrated perceptron algorithm \citep{block1962perceptron, novikoff1962convergence} when $\gamma \to \infty.$ In other words, \refalgone{eq:gd} reduces to \ref{eq:batch_perceptron}. The proof of this fact is straightforward and occupies less than half a page. This is an advantage of our paper because the typical proofs on this topic are technical and non-intuitive. When combined with the classical convergence results \citep{novikoff1962convergence}, it offers a clear intuition and explanation for why the method solves \eqref{eq:linear_task} with large step sizes in \emph{the non-stable regime}—a detail that, to our knowledge, has been previously overlooked and nonproven in the literature. (see Section~\ref{sec:gd_is_perceptron})

\begin{figure}[h]
\centering
\begin{subfigure}[t]{0.4\textwidth}
    \centering
    \includegraphics[width=\textwidth]{./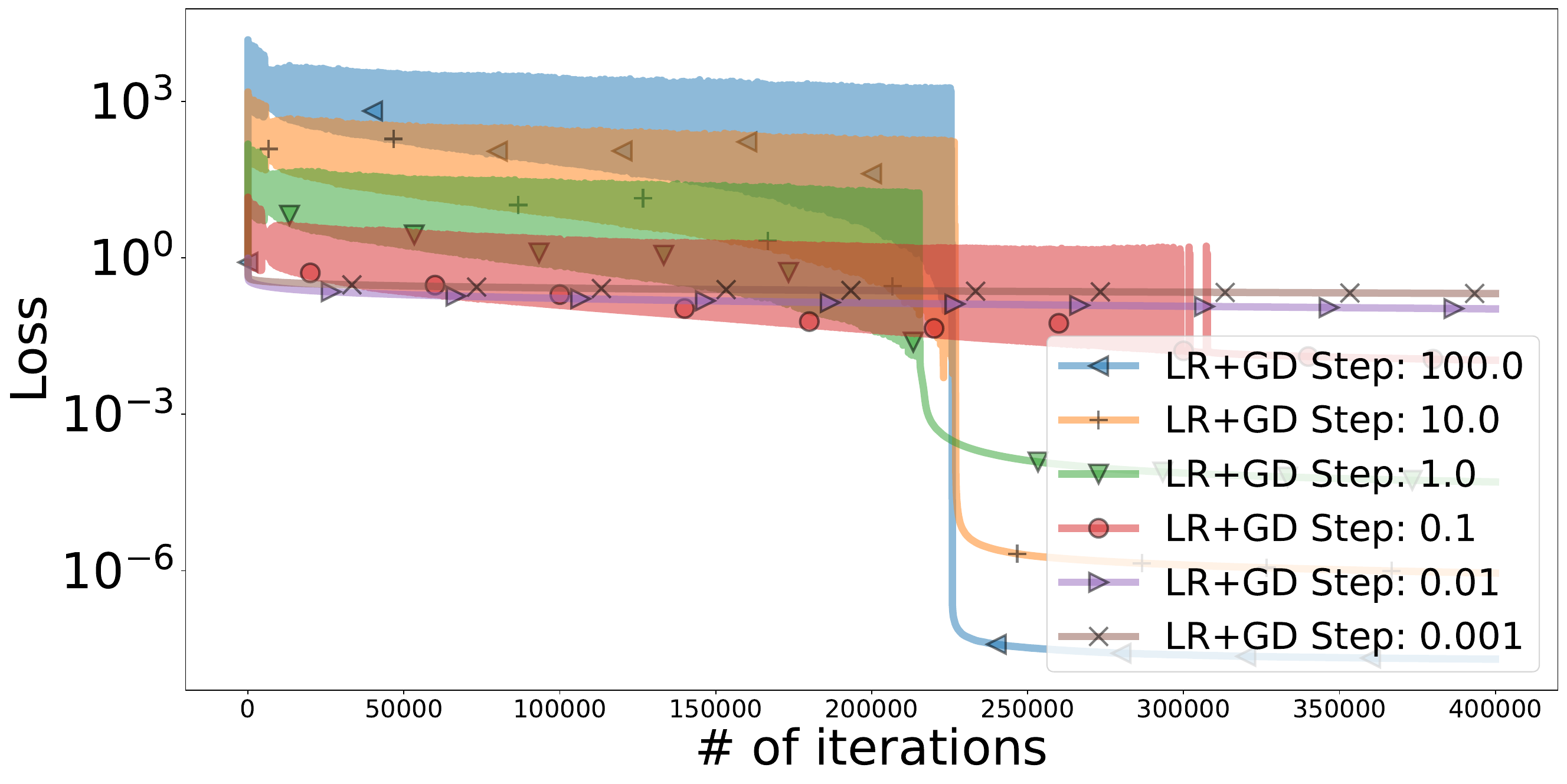}
    \caption{}
    \label{fig:loss}
\end{subfigure}
\begin{subfigure}[t]{0.4\textwidth}
    \centering
    \includegraphics[width=\textwidth]{./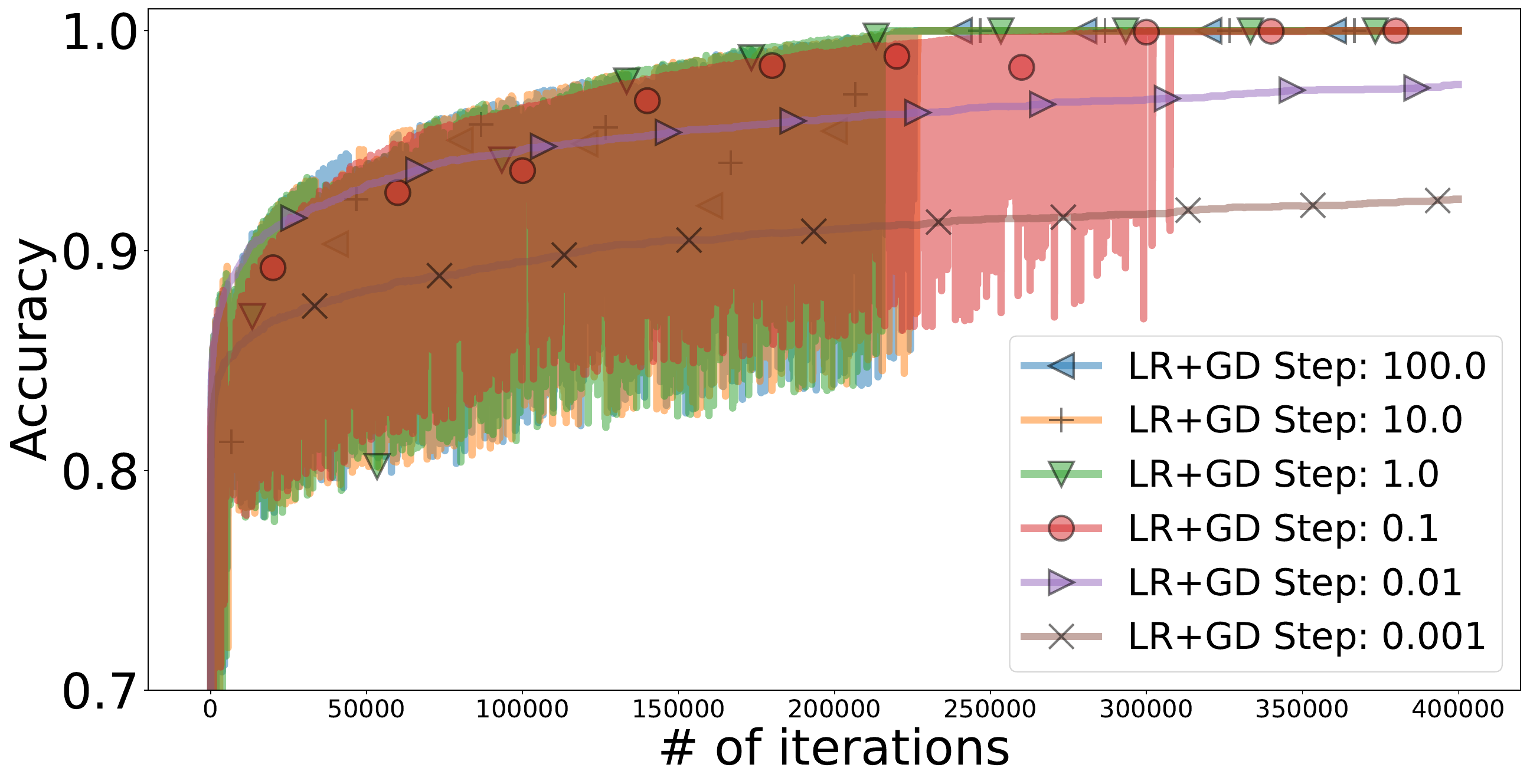}
    \caption{}
    \label{fig:accuracy}
\end{subfigure}
\caption{Illustration on a subset of \emph{CIFAR-10} dataset \citep{krizhevsky2009learning} with $5\,000$ samples and two classes. We run \ref{eq:gd} with various step sizes. Note that there is no randomness involved in the process. Oscillation is a natural behavior of \ref{eq:gd} with separable data and large step sizes.}
\label{fig:main}
\end{figure}

When we test this fact, Theorem~\ref{theorem:reduction}, with numerical experiments (see Figure~\ref{fig:main}), we observe that larger step sizes result in higher loss values (Figure~\ref{fig:loss}) before the moment when \refalgone{eq:gd} attains Accuracy=$1.0.$ Additionally, both loss and accuracy oscillate more (Figure~\ref{fig:loss} and \ref{fig:accuracy}). And despite that, \refalgone{eq:gd} solves \eqref{eq:linear_task} faster with large step sizes. Indeed, notice that \refalgone{eq:gd} has the fastest convergence to Accuracy=$1.0$ with the step sizes $\gamma \in \{1.0, 10.0, 100.0\},$ but at the same time, it has the highest loss values with these steps (more experiments in the next sections and appendix).

\textbf{2)} We investigate this phenomenon further and show that \emph{the logistic loss and the norm of gradients are unreliable metrics}. We argue that the fact that a loss value $f(\theta_t)$ is small does not necessarily indicate that $\theta_t$ is close to solving \eqref{eq:linear_task}. Surprisingly, the opposite can be true. Our experiments and theorem show that high loss values may indicate fast convergence. (see Section~\ref{sec:unrealiable}) 

\textbf{3)} This finding implies that when analyzing and developing methods for solving \eqref{eq:linear_task}, we have to look at the number of iterations required by methods to solve \eqref{eq:linear_task} rather than relying solely on loss and gradient values. Therefore, we looked at the iteration complexity $\nicefrac{n R^2}{\mu^2}$ of \refalgone{eq:gd} with $\gamma \to \infty$ and noticed that it is suboptimal with respect to $n$ since the iteration complexity $\nicefrac{R^2}{\mu^2}$ can be attained by the classical (non-batch) perceptron algorithm (\ref{eq:perceptron}). Moreover, we prove a lower bound, showing that the dependence on $n$ cannot be avoided in \refalgone{eq:gd} with $\gamma \to \infty.$ Provably, \refalgone{eq:gd} is a suboptimal method with large step sizes. Finally, we slightly modify \refalgone{eq:gd} and develop a new method, \refalgone{eq:norm_gd}, basing on the connection between \refalgone{eq:gd} and \refalgone{eq:batch_perceptron}.
This new method provably improves the iteration rate of \refalgone{eq:gd} to $\nicefrac{R^2}{\mu^2}$ when $\gamma \to \infty.$ The new iteration rate to solve \eqref{eq:linear_task} is $n$ times better.

\section{Reduction to the Batch Perceptron Algorithm}
\label{sec:gd_is_perceptron}

Before we state our first result, let us recall a batch version of the perceptron algorithm \citep{novikoff1962convergence,pattern}:
\begin{equation}
    \label{eq:batch_perceptron}\tag{Batch Perceptron}
\begin{aligned}
    &\textnormal{Take the first step } \hat{\theta}_{1} = \hat{\theta}_{0} + \frac{1}{2 n}\sum_{i=1}^n y_i a_i. \\
    &\textnormal{For all $t \geq 1,$ find the set } S_t \eqdef \{i \in [n]\,:y_i \,{a_i}^\top \hat{\theta}_t \leq 0\,\}, \\
    &\textnormal{and take the step } \hat{\theta}_{t+1} = \hat{\theta}_{t} + \frac{1}{n}\sum_{i \in S_t} y_i a_i \textnormal{ while $\abs{S_t} \neq 0,$}
\end{aligned}
\end{equation} 
where $\hat{\theta}_{0}$ is a starting point. This method finds all misclassified samples and uses them to find the next iterate $\hat{\theta}_{t+1}$ of the algorithm. Note that the classical version of the perceptron algorithm does the step only with one misclassified sample, as presented in (\ref{eq:perceptron}). 

We will require the following technical assumption in Theorem~\ref{theorem:reduction}:
\begin{assumption}(Non-Degenerate Dataset)
    \label{ass:bad_dataset}
    For all $j \in [n],$ the hyperplane $\{x \in \R^n:\sum_{i=1}^n x_i \inp{y_i a_i}{y_j a_j} = 0\}$ \emph{does not} intersect the point $(0.5 + k_1, \dots, 0.5 + k_n)$ for all $k_1, ..., k_n \in \N_0.$
\end{assumption}

This is a very weak assumption that cuts off pathological datasets since the chances that any hyperplane will intersect the \emph{countable} set are zero in practice. 
Indeed, assume that $\{x \in \R^n:\sum_{i=1}^n x_i \inp{y_i a_i}{y_j a_j} = 0\}$ intersects some point $(0.5 + k_1, \dots, 0.5 + k_n).$ For any arbitrarily small $\sigma > 0,$ let us take i.i.d.~normal noises $\xi_1, \dots, \xi_n \sim \cN(0, \sigma).$ Then the probability that a slightly perturbed hyperplane $\{x \in \R^n:\sum_{i=1}^n x_i (\inp{y_i a_i}{y_j a_j} + \xi_i) = 0\}$ intersects any point $(0.5 + k_1, \dots, 0.5 + k_n)$ is zero.
We are ready to state and prove the first result:
\begin{theorem}
    \label{theorem:reduction}
    Let Assumption~\ref{ass:separable} hold. For $\gamma \to \infty$ and\footnote{The theorem is true for $\theta_0 \neq 0,$ but we have to modify Assumption~\ref{ass:bad_dataset}, and change $0.5$ to $(1 + \exp(y_i \,a_i^\top \theta_0))^{-1}$.} $\theta_0 = 0,$ the logistic regression with gradient descent (\refalgone{eq:gd}) reduces to the batch perceptron algorithm (\refalgone{eq:batch_perceptron}), i.e., $\theta_t / \gamma \to \hat{\theta}_t$ for all $t \geq 0,$ with 
    $\hat{\theta}_0 = 0$ 
    if the dataset satisfies Assumption~\ref{ass:bad_dataset} (almost all datasets).
\end{theorem}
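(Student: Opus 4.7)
The plan is to rescale the LR+GD iterates by $\gamma$ and show that in the limit $\gamma \to \infty$, the GD recursion with the logistic sigmoid turns into the hard-threshold recursion of \eqref{eq:batch_perceptron}. Concretely, I would first compute the gradient of \eqref{eq:logistic_regression}, which gives $\nabla f(\theta) = -\frac{1}{n}\sum_{i=1}^n \sigma(-y_i a_i^\top \theta)\, y_i a_i$ with $\sigma(z) = 1/(1+e^{-z})$, so the LR+GD update becomes $\theta_{t+1} = \theta_t + \frac{\gamma}{n}\sum_{i=1}^n \sigma(-y_i a_i^\top \theta_t)\, y_i a_i$. Dividing by $\gamma$ and letting $\tilde\theta_t \eqdef \theta_t/\gamma$, I get the key recursion
\begin{equation*}
\tilde\theta_{t+1} \;=\; \tilde\theta_t \;+\; \frac{1}{n}\sum_{i=1}^n \sigma\bigl(-\gamma\, y_i a_i^\top \tilde\theta_t\bigr)\, y_i a_i,
\end{equation*}
and I want to show $\tilde\theta_t \to \hat\theta_t$ pointwise in $t$ as $\gamma \to \infty$.

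The proof proceeds by induction on $t$. The base case $\tilde\theta_0 = 0 = \hat\theta_0$ is immediate. The first step is special: at $\tilde\theta_0 = 0$ the sigmoid argument is exactly $0$ for every $\gamma$, so $\sigma(0) = \tfrac12$ exactly, producing $\tilde\theta_1 = \frac{1}{2n}\sum_i y_i a_i = \hat\theta_1$, which matches the designated first step of \eqref{eq:batch_perceptron} (and explains the $1/(2n)$ factor there). For the inductive step at $t \geq 1$, assuming $\tilde\theta_t \to \hat\theta_t$, I rewrite $\gamma\, y_i a_i^\top \tilde\theta_t = \gamma\bigl(y_i a_i^\top \hat\theta_t + o(1)\bigr)$; if $y_i a_i^\top \hat\theta_t > 0$ this tends to $+\infty$ so $\sigma(-\gamma \cdot) \to 0$, while if $y_i a_i^\top \hat\theta_t < 0$ it tends to $-\infty$ so $\sigma(-\gamma \cdot) \to 1$. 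Summing, $\tilde\theta_{t+1} \to \hat\theta_t + \frac{1}{n}\sum_{i \in S_t} y_i a_i = \hat\theta_{t+1}$.

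The main obstacle, and the only reason Assumption~\ref{ass:bad_dataset} is needed, is to rule out the borderline case $y_j a_j^\top \hat\theta_t = 0$ for some $j \geq 1$ and some $t \geq 1$: there $\sigma(-\gamma \cdot 0) = \tfrac12$ would not converge to a $0/1$ indicator, and the limit would not match \eqref{eq:batch_perceptron}. To discharge this, I would show by a secondary induction that every \eqref{eq:batch_perceptron} iterate has the form $\hat\theta_t = \frac{1}{n}\sum_{i=1}^n (0.5 + k_i^{(t)})\, y_i a_i$ with $k_i^{(t)} \in \N_0$, since the first step contributes the $0.5$ and each subsequent step adds an integer count of times sample $i$ has been misclassified. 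Thus $y_j a_j^\top \hat\theta_t = \frac{1}{n}\sum_i (0.5 + k_i^{(t)}) \inp{y_i a_i}{y_j a_j}$, and Assumption~\ref{ass:bad_dataset} is precisely the statement that this value is nonzero for every $j$ and every admissible $(k_1,\dots,k_n) \in \N_0^n$. Combining this with the pointwise sigmoid limit closes the induction and yields $\theta_t/\gamma \to \hat\theta_t$ for all $t \geq 0$.
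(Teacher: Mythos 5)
Your proposal is correct and follows essentially the same route as the paper's own proof: the same rescaling $\tilde\theta_t = \theta_t/\gamma$, the same induction with the exact $\sigma(0)=\tfrac12$ first step, the same pointwise sigmoid limit in the inductive step, and the same use of the representation $\hat\theta_t = \frac{1}{n}\sum_i(0.5+k_i)y_ia_i$ together with Assumption~\ref{ass:bad_dataset} to exclude the borderline case $y_ja_j^\top\hat\theta_t=0$. The only (minor, welcome) difference is that you spell out the secondary induction justifying that representation, which the paper merely asserts.
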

\begin{proof}
    Clearly, we have 
    \begin{align}
        \label{eq:grad}
        \nabla f(\theta) = - \frac{1}{n} \sum_{i=1}^n (1 + \exp(y_i \,a_i^\top \theta))^{-1} y_i a_i
    \end{align} and 
    $\theta_1 = \theta_0 - \gamma \nabla f(\theta_0).$
    Thus $\theta_1 / \gamma \to \hat{\theta}_1 = \frac{1}{2 n} \sum_{i=1}^n y_i a_i$ and $\theta_0 / \gamma \to \hat{\theta}_0 = 0$ when $\gamma \to \infty.$ 
    We now use mathematical induction, and assume that $\theta_t / \gamma$ converges to $\hat{\theta}_t$ when $\gamma \to \infty.$ Using simple algebra, we get
    \begin{align*}
        \frac{\theta_{t+1}}{\gamma} 
        &= \frac{\theta_{t} + \gamma \frac{1}{n} \sum_{i=1}^n\frac{1}{1 + \exp(y_i \,a_i^\top \theta_t)} y_i a_i}{\gamma} \\
        &= \frac{\theta_{t}}{\gamma} + \frac{1}{n} \sum_{i=1}^n\frac{1}{1 + \exp(\gamma \cdot y_i \,a_i^\top \frac{\theta_t}{\gamma})} y_i a_i.
    \end{align*}
    For all $t \geq 1,$ notice that $\hat{\theta}_t = \frac{1}{n} \sum_{i=1}^n (0.5 + k_i) y_i a_i$ in \refalgone{eq:batch_perceptron} for some $k_1, \dots, k_n \in \N_0.$ Using Assumption~\ref{ass:bad_dataset}, we have\footnote{This statement is the only and the main reason why we need Assumption~\ref{ass:bad_dataset}. This assumption helps to avoid the corner case when the samples lie on the hyperplane.} $y_i a_i^\top \hat{\theta}_t \neq 0$ for all $i \in [n].$ Since $\theta_t / \gamma \to \hat{\theta}_t$ when $\gamma \to \infty,$ we get $\textnormal{sign} (y_i a_i^\top \theta_t / \gamma) = \textnormal{sign} (y_i a_i^\top \hat{\theta}_t) \neq 0$  for all $i \in [n]$ and $\gamma$ large enough. Therefore $(1 + \exp(\gamma \cdot y_i \,a_i^\top \frac{\theta_t}{\gamma}))^{-1} \to 1$ if $y_i \,a_i^\top \hat{\theta}_t < 0,$ and $(1 + \exp(\gamma \cdot y_i \,a_i^\top \frac{\theta_t}{\gamma}))^{-1} \to 0$ if $y_i \,a_i^\top \hat{\theta}_t > 0$  when $\gamma \to \infty $ for all $i \in [n],$ meaning
        $\frac{\theta_{t+1}}{\gamma} 
        \overset{\gamma \to \infty}{=} \hat{\theta}_t + \frac{1}{n} \sum_{i \in S_t}  y_i a_i.$
    We have showed that $\theta_{t+1} / \gamma \to \hat{\theta}_{t+1}.$
\end{proof}

Thus, indeed, \refalgone{eq:gd} reduces to \refalgone{eq:batch_perceptron} when $\gamma \to \infty.$ It is left to recall the following classical result, which we prove in Section~\ref{sec:theorem:perceptron} for completeness.

\begin{figure*}[t]
\centering
\begin{subfigure}{0.4\textwidth}
    \centering
    \includegraphics[width=\textwidth]{./results_2024/gd_two_layer_eos_linear_one_no_bias_loss_bce_logits_num_samples_5000_more_iters_more_iters_filter_classes_0_1_repeat_accuracy.pdf}
\end{subfigure}
\begin{subfigure}{0.4\textwidth}
    \centering
    \includegraphics[width=\textwidth]{./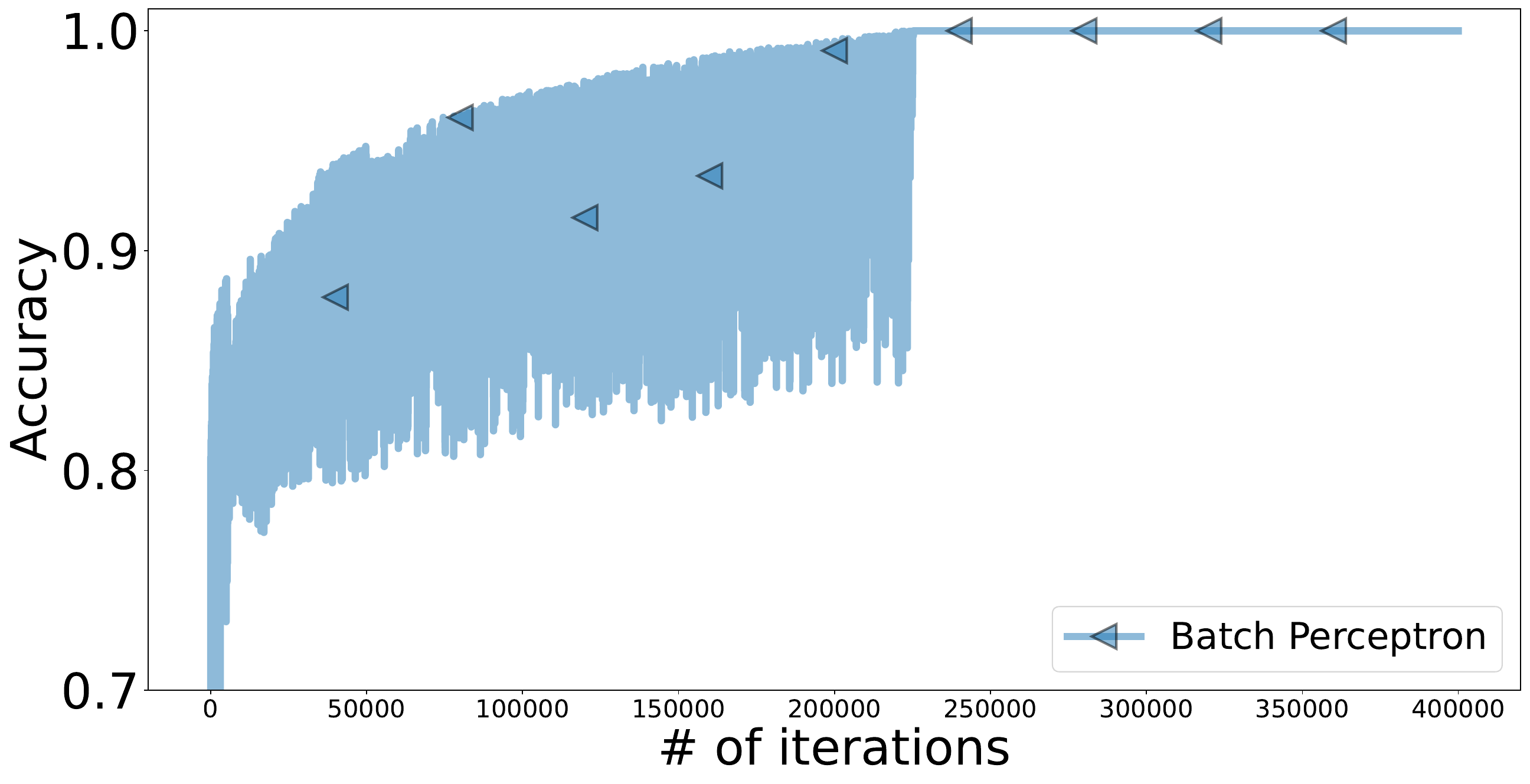}
\end{subfigure}
\caption{We show that \protect\refalgone{eq:gd} with large step sizes aligns with \protect\refalgone{eq:batch_perceptron} on \emph{CIFAR-10}.}
\label{fig:cifar10}
\centering
\begin{subfigure}[t]{0.4\textwidth}
    \centering
    \includegraphics[width=\textwidth]{./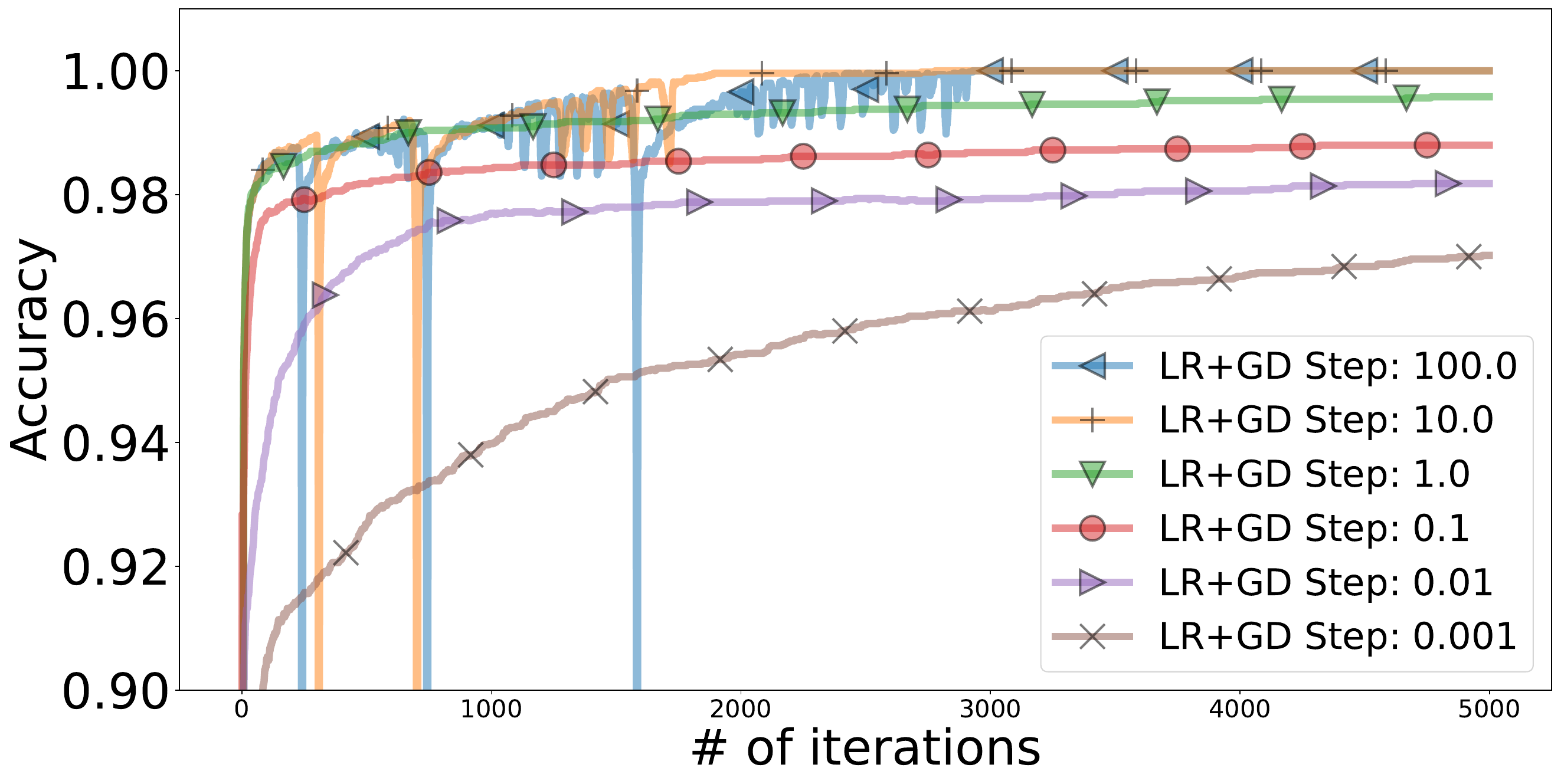}
\end{subfigure}
\begin{subfigure}[t]{0.4\textwidth}
    \centering
    \includegraphics[width=\textwidth]{./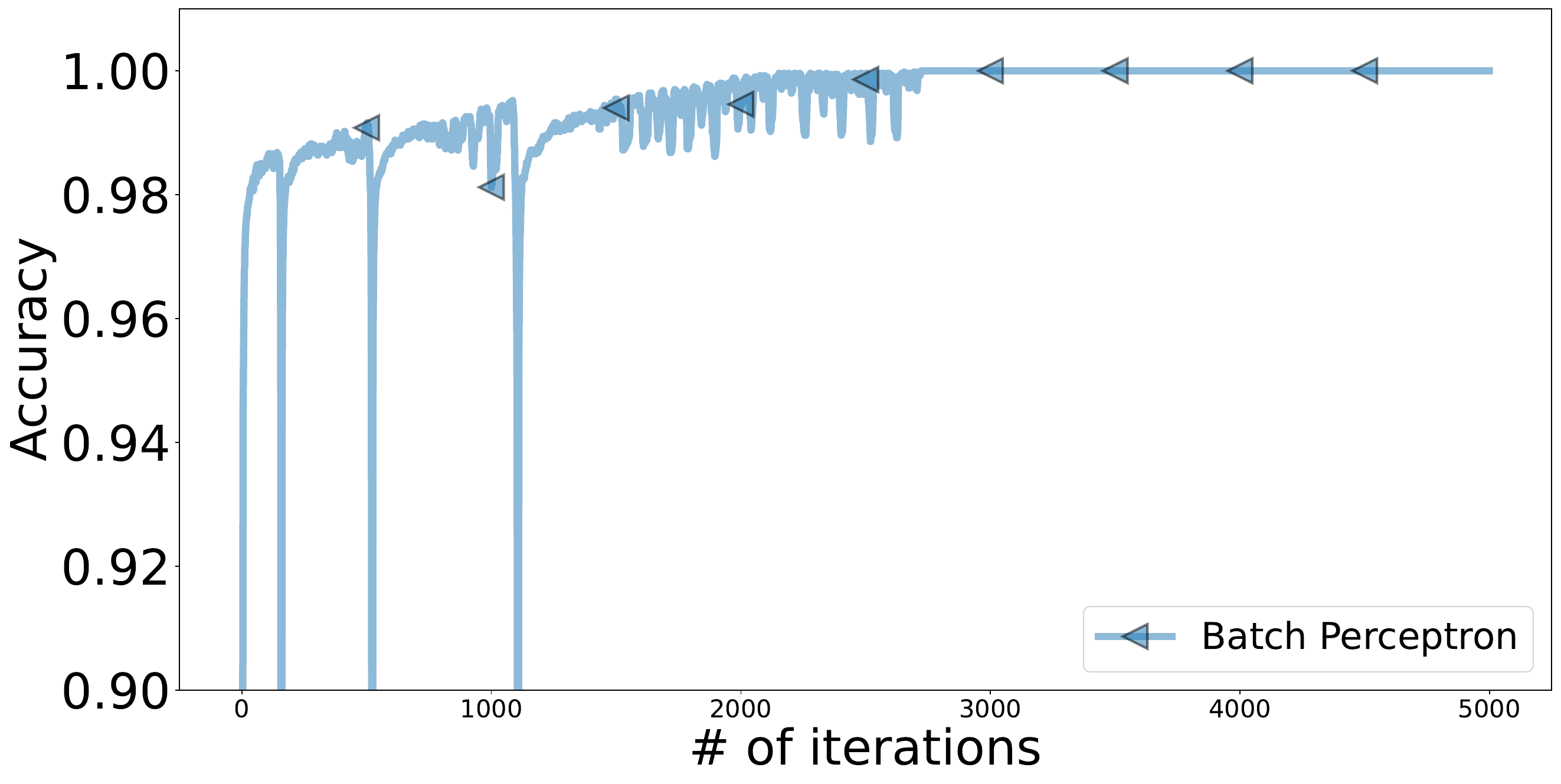}
\end{subfigure}
\caption{We show that \protect\refalgone{eq:gd} with large step sizes aligns with \protect\refalgone{eq:batch_perceptron} on \emph{FashionMNIST}.}
\label{fig:fashionmnist}
\end{figure*}
\begin{restatable}{theorem}{BATCHPERCEPTRON}[\citep{novikoff1962convergence,pattern}]
    \label{theorem:perceptron}
    Let Assumption~\ref{ass:separable} hold. The batch perceptron algorithm (\refalgone{eq:batch_perceptron}) solves \eqref{eq:linear_task} after at most 
    \begin{align}
        \label{eq:bAvqbGayoUiJYuIN}
        \frac{n R^2}{\mu^2}
    \end{align}
    iterations if $\hat{\theta}_{0} = 0.$
\end{restatable}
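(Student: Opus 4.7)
The plan is the classical Novikoff two-track argument, adapted to the batch variant. Let $\theta^{\star} \in \R^d$ with $\|\theta^{\star}\| = 1$ achieve the margin in Assumption~\ref{ass:separable}, so that $y_i a_i^{\top} \theta^{\star} \geq \mu$ for every $i \in [n]$. I will track two scalar potentials along the trajectory of \refalgone{eq:batch_perceptron} starting from $\hat{\theta}_0 = 0$: a lower bound on $\langle \theta^{\star}, \hat{\theta}_t\rangle$, which grows every iteration, and an upper bound on $\|\hat{\theta}_t\|^2$, which grows only slowly because each misclassified sample contributes a nonpositive cross term.

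For the first step, $\hat{\theta}_1 = \frac{1}{2n}\sum_i y_i a_i$ gives immediately $\langle \theta^{\star},\hat{\theta}_1\rangle \geq \mu/2$ and, by the triangle inequality and $\|a_i\| \leq R$, $\|\hat{\theta}_1\| \leq R/2$. For $t \geq 1$, I expand $\|\hat{\theta}_{t+1}\|^2 = \|\hat{\theta}_t\|^2 + \tfrac{2}{n}\sum_{i\in S_t} y_i a_i^{\top}\hat{\theta}_t + \|\tfrac{1}{n}\sum_{i\in S_t} y_i a_i\|^2$. The middle term is $\leq 0$ since $i \in S_t$ means $y_i a_i^{\top}\hat{\theta}_t \leq 0$, and the last term is at most $|S_t|^2 R^2/n^2 \leq |S_t| R^2/n$ because $|S_t|\leq n$. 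Similarly, $\langle \theta^{\star},\hat{\theta}_{t+1}\rangle - \langle \theta^{\star},\hat{\theta}_t\rangle \geq |S_t|\mu/n$.

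Summing from $t=1$ to $T$ and letting $N \eqdef \sum_{t=1}^T |S_t|$, I get
\begin{align*}
    \tfrac{\mu}{2} + \tfrac{N\mu}{n} \;\leq\; \langle \theta^{\star}, \hat{\theta}_{T+1}\rangle \;\leq\; \|\hat{\theta}_{T+1}\| \;\leq\; \sqrt{\tfrac{R^2}{4} + \tfrac{N R^2}{n}},
\end{align*}
where the middle inequality is Cauchy–Schwarz with $\|\theta^{\star}\|=1$. Squaring and expanding yields $\mu^2 N^2/n^2 \leq (R^2-\mu^2)(N/n + 1/4)$, a quadratic in $N/n$ whose positive root is at most $R^2/\mu^2$ (one checks $R\sqrt{R^2-\mu^2}\leq R^2+\mu^2$ by squaring). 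Hence $N \leq nR^2/\mu^2$. Since the algorithm runs only while $|S_t|\geq 1$, the total iteration count $T$ satisfies $T \leq N \leq nR^2/\mu^2$, which is the stated bound.

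The only mildly delicate point is the constant: the naive bound $\|\tfrac{1}{n}\sum_{i\in S_t}y_i a_i\|^2 \leq |S_t|^2 R^2/n^2$ is too loose to close the argument, but downgrading it to $|S_t| R^2/n$ using $|S_t|\leq n$ makes both potentials scale linearly in $N$, which is exactly what is needed to match the growth rate of the inner-product potential and extract the claimed $nR^2/\mu^2$ bound. The special handling of the first step (weight $\tfrac{1}{2n}$ instead of $\tfrac{1}{n}$) only contributes the harmless $\mu/2$ and $R^2/4$ offsets, absorbed in the quadratic inequality above.
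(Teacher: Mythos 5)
Your proof is correct, and it takes a genuinely different route from the paper's. The paper runs a single shifted-potential argument: it tracks $\|\hat{\theta}_t - \alpha\hat{\theta}_*\|^2$ with $\alpha = R^2/\mu$ and shows this quantity decreases by at least $R^2/n$ at every iteration, using the elementary fact that $-\tfrac{2x}{n}+\tfrac{x^2}{n^2} \leq -\tfrac{2}{n}+\tfrac{1}{n^2}$ for $x=|S_t|\in\{1,\dots,n\}$ (i.e.\ the worst case is a single mistake per round); nonnegativity of the potential then caps the iteration count at $\alpha^2/(R^2/n) = nR^2/\mu^2$. You instead run the classical two-potential Novikoff argument, lower-bounding the alignment $\langle\theta^\star,\hat{\theta}_t\rangle$ and upper-bounding $\|\hat{\theta}_t\|^2$, and combining them via Cauchy--Schwarz. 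Your key move---relaxing $|S_t|^2R^2/n^2$ to $|S_t|R^2/n$ so that both potentials grow linearly in the cumulative mistake count $N=\sum_t|S_t|$---buys you something slightly stronger than the theorem states: you bound the total mistake mass $N$ by $nR^2/\mu^2$, not merely the number of rounds $T\leq N$. The paper's version is more mechanical (one telescoping inequality, no quadratic to solve) and generalizes directly to its Theorems~\ref{theorem:normalized_perceptron} and~\ref{theorem:nonasymfix}, which is presumably why the author chose it. One small accounting point in yours: your $T$ counts only the rounds $t\geq 1$, so the total including the initial half-step is $T+1$; this is harmless because your positive root actually satisfies $x_+ = \tfrac{s(s+R)}{2\mu^2} \leq \tfrac{R^2}{\mu^2}-\tfrac{1}{2}$ with $s=\sqrt{R^2-\mu^2}$, so $T+1 \leq N + 1 \leq nR^2/\mu^2 - n/2 + 1 \leq nR^2/\mu^2$ for $n\geq 2$, but it is worth stating explicitly.
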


Theorem~\ref{theorem:perceptron} and Theorem~\ref{theorem:reduction} explain why \refalgone{eq:gd} solves \eqref{eq:linear_task} with $\gamma \to \infty$. 
Notice that the convergence rate \eqref{eq:bAvqbGayoUiJYuIN} does not degenerate when $\gamma \to \infty.$ 
Crucially, we provide the convergence guarantees for the task \eqref{eq:linear_task}, not for the task \eqref{eq:logistic_regression}. The latter is merely \emph{a proxy problem}. In practice, what matters is how fast we find a separator rather than how fast the loss converges to zero, and in fact, we will see in Section~\ref{sec:unrealiable} that the logistic loss is an unreliable metric. \\
\emph{Remark:} A scaled version of \eqref{eq:logistic_regression}, $\frac{1}{t \times n} \sum_{i=1}^{n} \log\left(1 + \exp(-t \times y_i a_i^\top \theta)\right),$ reduces to the perceptron loss when $t \to \infty.$ Thus, there can potential connection between \ref{eq:gd} with large step sizes and this fact.

\textbf{Numerical experiments.} We now numerically verify the obtained results by comparing the performance of two algorithms that solve \eqref{eq:linear_task}: logistic regression with gradient descent (\refalgone{eq:gd}) and the perceptron algorithm (\refalgone{eq:batch_perceptron}). \refalgone{eq:batch_perceptron} has no hyperparameters, while \refalgone{eq:gd} requires the step size $\gamma$. We evaluate these algorithms on four datasets: \emph{CIFAR-10} \citep{krizhevsky2009learning}, \emph{FashionMNIST} \citep{xiao2017fashion}, \emph{EuroSAT} \citep{helber2019eurosat}, and \emph{MNIST} \citep{lecun2010mnist}, selecting two classes and $5\,000$ samples from each dataset (see details in Section~\ref{sec:exp_setup}). For \refalgone{eq:gd}, we vary the step size from $0.001$ to $100$. 
Figures~\ref{fig:cifar10}, \ref{fig:fashionmnist}, \ref{fig:eurosat}, and \ref{fig:mnist} present the results side by side. The results indicate that \refalgone{eq:gd} with a small step size has monotonic and stable convergence curves. However, as the step size increases, the plots become unstable and chaotic. The behavior of \refalgone{eq:gd} with large step sizes aligns closely with that of \refalgone{eq:batch_perceptron} across all datasets almost exactly, which supports our theory. And in the limit of $\gamma \to \infty,$ converges to \refalgone{eq:batch_perceptron}. We also run experiments with $1\,000$ and $10\,000$ samples in Section~\ref{sec:more_data_points} for additional support.

\section{Logistic Loss and the Norm of Gradients are Unreliable Metrics}
\label{sec:unrealiable}

Looking closer at the results of the experiments on datasets (Figures~\ref{fig:cifar10_acc}, \ref{fig:fashionmnist_acc}, \ref{fig:eurosat_more}, and \ref{fig:mnist_more}), we notice that the large step size not only leads to faster convergence rates but also to larger function values (before the moment when Accuracy~$=1.0$). Is this a coincidence, or is there some pattern? We can prove the following simple theorem that explains the phenomenon:

\begin{figure*}
\centering
\begin{subfigure}[t]{0.33\textwidth}
    \centering
    \includegraphics[width=\textwidth]{./results_2024/gd_two_layer_eos_linear_one_no_bias_loss_bce_logits_num_samples_5000_more_iters_more_iters_filter_classes_0_1_repeat_accuracy.pdf}
\end{subfigure}
\begin{subfigure}[t]{0.33\textwidth}
    \centering
    \includegraphics[width=\textwidth]{./results_2024/gd_two_layer_eos_linear_one_no_bias_loss_bce_logits_num_samples_5000_more_iters_more_iters_filter_classes_0_1_repeat_loss.pdf}
\end{subfigure}
\begin{subfigure}[t]{0.33\textwidth}
    \centering
    \includegraphics[width=\textwidth]{./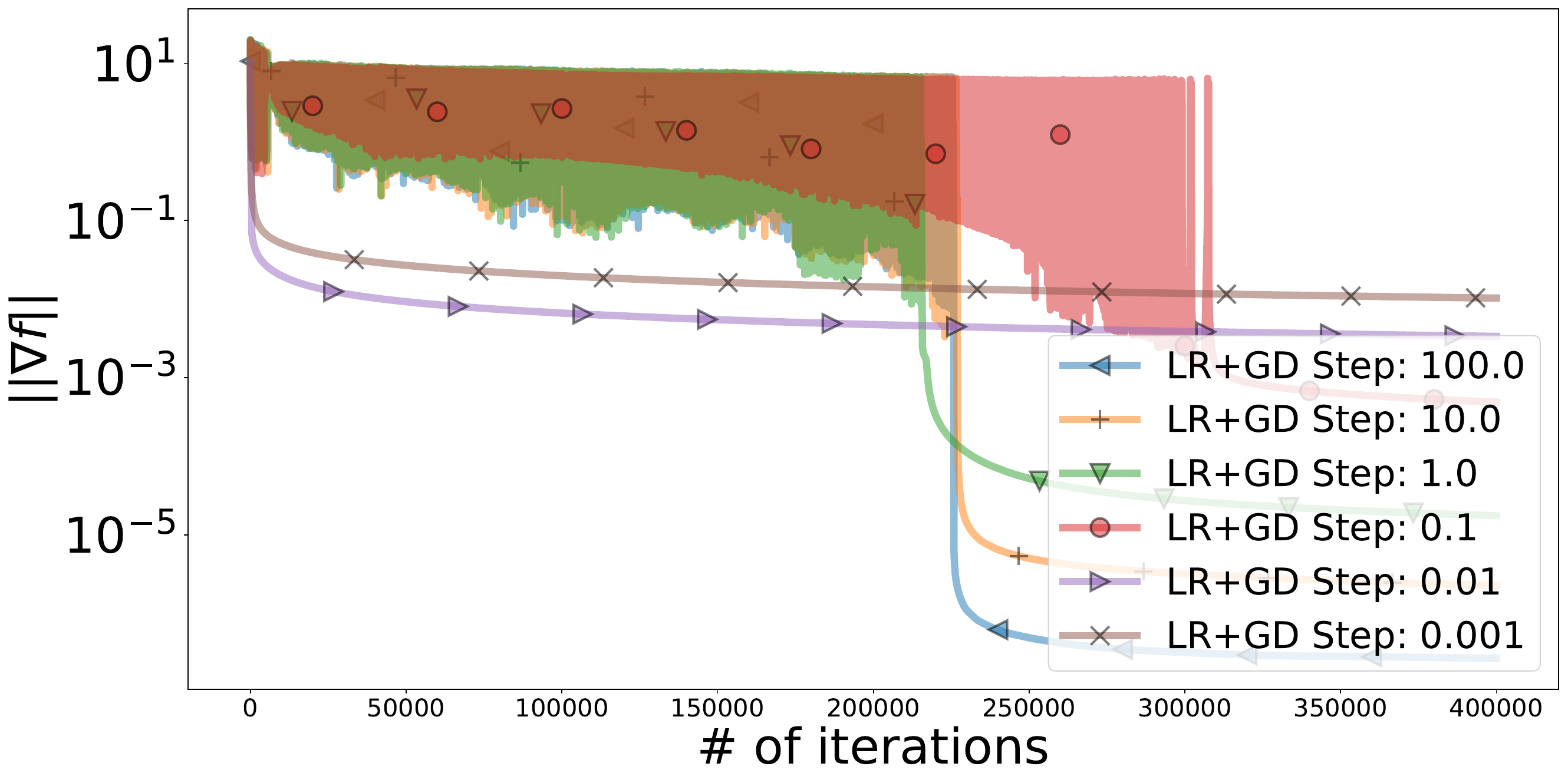}
\end{subfigure}
\caption{Accuracy, function values, and the norm of gradients of the logistic loss \eqref{eq:logistic_regression} on \emph{CIFAR-10} during the runs of \protect\refalgone{eq:gd}.}
\label{fig:cifar10_acc}
\centering
\begin{subfigure}[t]{0.33\textwidth}
    \centering
    \includegraphics[width=\textwidth]{./results_2024/gd_two_layer_eos_linear_one_no_bias_loss_bce_logits_fashion_mnist_num_samples_5000_more_iters_more_iters_filter_classes_0_4_accuracy.pdf}
\end{subfigure}
\begin{subfigure}[t]{0.33\textwidth}
    \centering
    \includegraphics[width=\textwidth]{./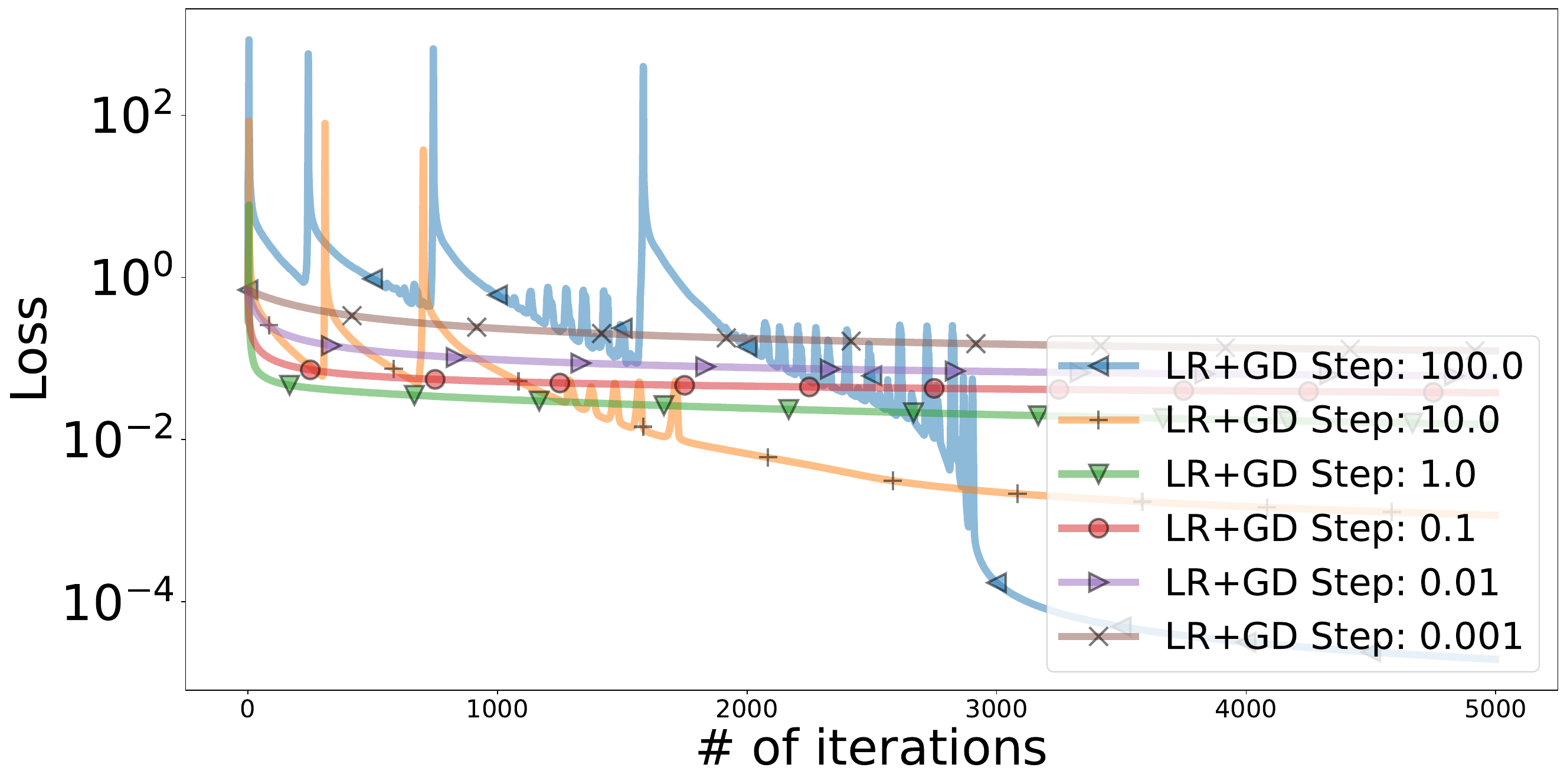}
\end{subfigure}
\begin{subfigure}[t]{0.33\textwidth}
    \centering
    \includegraphics[width=\textwidth]{./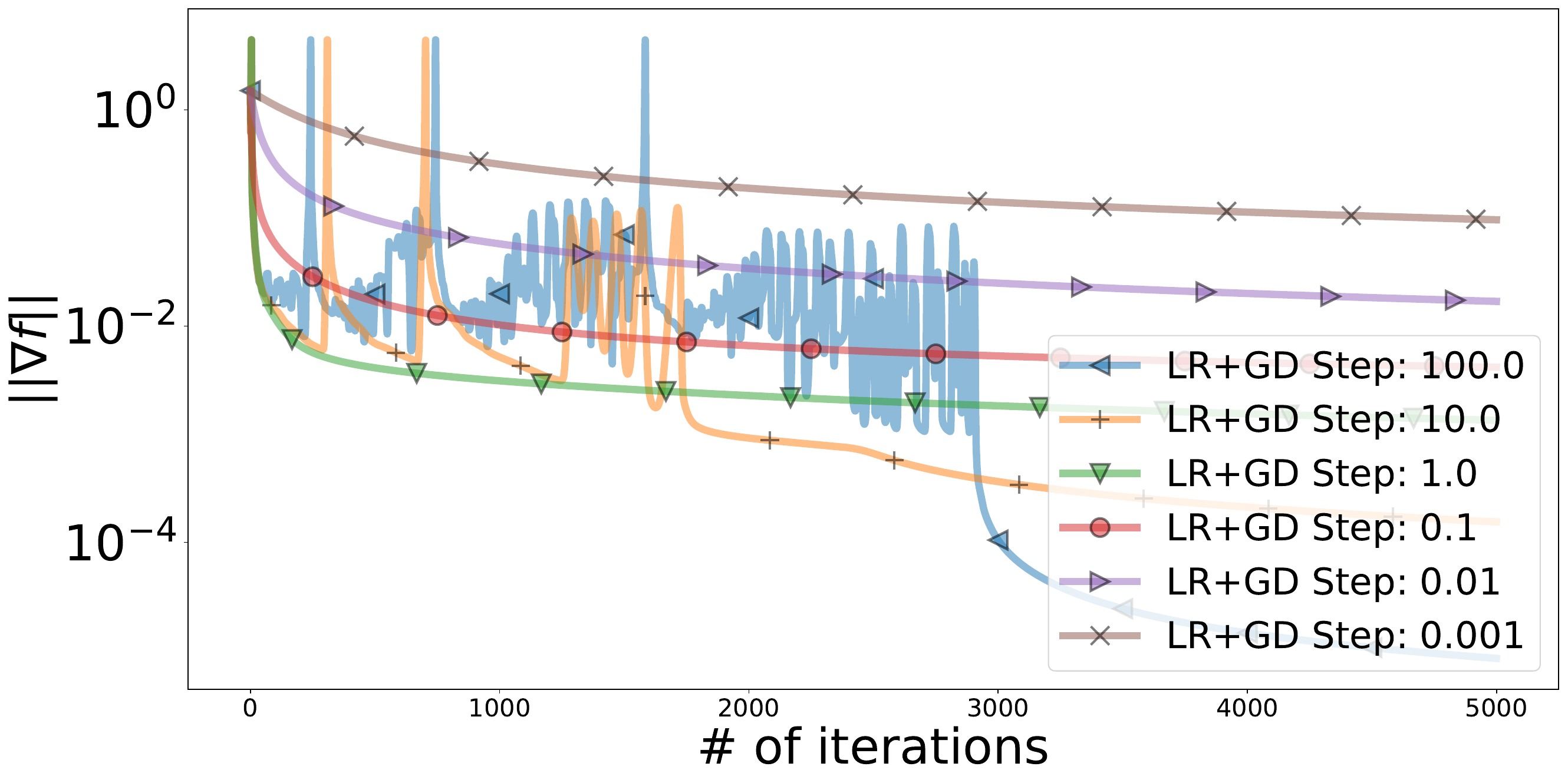}
\end{subfigure}
\caption{Accuracy, function values, and the norm of gradients on \emph{FashionMNIST}. The larger the step size in \protect\refalgone{eq:gd}, the faster \protect\refalgone{eq:gd} solves \eqref{eq:linear_task} and gets Accuracy$=1.0.$ At the same time, the larger the step size, the higher the loss values.}
\label{fig:fashionmnist_acc}
\end{figure*}

\begin{theorem}
    Assume that $\theta_1 = 0.$ There exists a separable dataset (Assumption~\ref{ass:separable}) such that 
    \begin{enumerate}
        \item $f(\theta_1) \to \infty$ and $f(\theta_2) \to 0$ when $\gamma \to \infty,$
        \item $\nicefrac{\theta_2}{\gamma}$ is a solution of \eqref{eq:linear_task} when $\gamma \to \infty,$
        \item $\norm{\nabla f(\theta_1)} \to \nicefrac{\sqrt{2}}{2}$ and $\norm{\nabla f(\theta_2)} \to 0$ when $\gamma \to \infty,$
    \end{enumerate}
    where $\theta_1$ and $\theta_2$ are the first and second iterates of \ref{eq:gd}.
\end{theorem}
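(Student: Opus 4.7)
The plan is to prove the theorem by explicit construction: since only existence is asserted, I would exhibit a single separable dataset and then invoke Theorem~\ref{theorem:reduction} to pin down the limits of $\theta_1/\gamma$ and $\theta_2/\gamma$. Once those limits are known, each of the claims should follow by inspection from the geometric position of the resulting $\hat\theta_1$ and $\hat\theta_2$ relative to the samples. The guiding idea is to pick the data so that $\hat\theta_1$ misclassifies at least one sample (producing an inner product of order $-\gamma$ at $\theta_1$, hence a divergent loss and a non-vanishing gradient) while $\hat\theta_2$ already separates everything (so at $\theta_2$ all sigmoid weights vanish and both the loss and the gradient are small).

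Concretely, I would work in $\R^2$ with $n = 2$ and take $a_1 = (1, 1)$, $y_1 = 1$ and $a_2 = (-3, 1/2)$, $y_2 = 1$; the direction $(0, 1)$ witnesses Assumption~\ref{ass:separable}, and a generic arbitrarily small perturbation of $a_2$ secures Assumption~\ref{ass:bad_dataset} if needed. A direct computation gives $\hat\theta_1 = \frac{1}{4}(y_1 a_1 + y_2 a_2) = (-1/2,\,3/8)$, from which $y_1 a_1^\top \hat\theta_1 = -1/8 < 0$ and $y_2 a_2^\top \hat\theta_1 = 27/16 > 0$. Hence the active set of the batch perceptron at step $1$ is $S_1 = \{1\}$, and $\hat\theta_2 = \hat\theta_1 + \frac{1}{2} y_1 a_1 = (0,\,7/8)$ satisfies $y_i a_i^\top \hat\theta_2 > 0$ for both $i$, i.e., separates the data.

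With this construction in hand, I would invoke Theorem~\ref{theorem:reduction} to conclude $\theta_t/\gamma \to \hat\theta_t$ for $t = 1, 2$, and then read off the three quantitative statements. For (1), $y_1 a_1^\top \theta_1 \sim -\gamma/8 \to -\infty$ forces the $i=1$ summand $\log(1 + \exp(\gamma/8))$ in $f(\theta_1)$ to diverge, while at $\theta_2$ the separator property gives $y_i a_i^\top \theta_2 \to +\infty$ so every summand of $f(\theta_2)$ vanishes. Statement (2) is immediate from $\theta_2/\gamma \to \hat\theta_2$. For (3), the gradient formula~\eqref{eq:grad} together with the observation that the sigmoid weight $(1 + \exp(y_i a_i^\top \theta_1))^{-1}$ tends to $1$ when $i \in S_1$ and to $0$ otherwise yields $\nabla f(\theta_1) \to -\frac{1}{2} y_1 a_1 = -\frac{1}{2}(1,1)$, whose norm is exactly $\sqrt{2}/2$; at $\theta_2$ all sigmoid weights vanish, producing $\nabla f(\theta_2) \to 0$.

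The main obstacle is not the convergence argument---Theorem~\ref{theorem:reduction} does that work---but arranging all of the quantitative conditions simultaneously: the norm $\|y_1 a_1\|/n$ must equal \emph{exactly} $\sqrt{2}/2$, the first batch-perceptron step must misclassify at least one sample, and the second step must already separate. These reduce to a small system of sign and norm inequalities on $y_1 a_1$ and $y_2 a_2$ that the choice above satisfies; the $n = 2$, $y_1 a_1 = (1,1)$ normalization makes the $\sqrt{2}/2$ constraint trivial, and any sufficiently generic nearby dataset works equally well, which incidentally guarantees Assumption~\ref{ass:bad_dataset}.
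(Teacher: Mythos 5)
Your proposal is correct and follows essentially the same route as the paper: an explicit two-sample dataset in $\R^2$ chosen so that the first iterate misclassifies exactly one sample (forcing $f(\theta_1)\to\infty$ and $\norm{\nabla f(\theta_1)}\to\nicefrac{\sqrt{2}}{2}$ via a unit contribution $\frac{1}{n}y_ja_j$ with $\norm{y_ja_j}=\sqrt{2}$) while the second iterate already separates the data. The only cosmetic difference is that the paper carries out the two gradient steps by direct computation rather than citing Theorem~\ref{theorem:reduction}, and uses a different (equally valid) pair of points.
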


\begin{proof}
    We take the dataset with one sample $(1, -1)^\top$ assigned to the class $1$ and one sample $(-1, -4)^\top$ assigned to the class $-1.$ Using \eqref{eq:gd} and \eqref{eq:grad}, we have $\theta_{1} = \gamma (\frac{2}{4}, \frac{3}{4})^\top.$ Thus $f(\theta_1) = \frac{1}{2} \left(\log\left(1 + \exp(\frac{\gamma}{4})\right) + \log\left(1 + \exp(-\frac{7 \gamma}{2})\right)\right),$ meaning $f(\theta_1) \to \infty$ when $\gamma \to \infty.$ On the other hand, a direct calculation yield
    \begin{align*}
        \textstyle \frac{\theta_2}{\gamma} &= \textstyle (\frac{2}{4}, \frac{3}{4})^\top + \frac{1}{2} \left(\left(1 + \exp(-\frac{\gamma}{4})\right)^{-1} (1, -1)^\top \right.\\
        &\textstyle \left.+ \left(1 + \exp(\frac{7 \gamma}{2})\right)^{-1} (1, 4)^\top\right) \to (1, \frac{1}{4})^\top
    \end{align*}
    when $\gamma \to \infty.$ The point $(1, \frac{1}{4})^\top$ is a solution of \eqref{eq:linear_task}, and $f(\theta_2) \to 0$ when $\gamma \to \infty.$
    The last statement of the theorem can be verified using \eqref{eq:grad}.
\end{proof}

Even though the first value $f(\theta_1)$ of the loss indicates divergence, the algorithm solves \eqref{eq:linear_task} after two steps when $\gamma \to \infty$! In this example, it is clear that the high value of $f(\theta_1)$ does not reflect the fact that the algorithm will solve the problem in the next step. The experiments from 
Figures~\ref{fig:cifar10_acc} and \ref{fig:fashionmnist_acc} (see also Section~\ref{sec:exp_setup}) support this theorem. That also applies to the norm of gradients. The ratio between $\norm{\nabla f(\theta_1)}$ and $\norm{\nabla f(\theta_2)}$ can be arbitrarily large for large $\gamma.$ In Figures~\ref{fig:cifar10_acc} and \ref{fig:fashionmnist_acc} (see also Section~\ref{sec:exp_setup}), the norm of gradients are chaotic and large until the moment when \refalgone{eq:gd} finds a solution of \eqref{eq:linear_task}.

\section{LR+GD is a Suboptimal Method}

In the previous section, we explain that logistic loss and the norm of gradient do not provide sufficient information about our proximity to solving \eqref{eq:linear_task}. Recall that GD is a method of choice because, for instance, it is an optimal method in the nonconvex setting \citep{carmon2020lower} and has the optimal convergence rate by the norm of gradients. In the case of the task \eqref{eq:linear_task} and \refalgone{eq:gd}, this is no longer true.
Indeed, let us now consider the iteration rate $\nicefrac{n R^2}{\mu^2}$ from Theorem~\ref{theorem:perceptron} by \refalgone{eq:gd} when $\gamma \to \infty$. 
The iteration rate is suboptimal since it linearly depends on $n,$ and can be improved by the classical (non-batch) perceptron algorithm \citep{novikoff1962convergence}.
The following lower bound proves that the dependence is unavoidable for \refalgone{eq:batch_perceptron}.
\begin{theorem}
    \label{sec:lower_bound}
    There exists a separable dataset (Assumption~\ref{ass:separable}) with $\mu = \Theta(1)$ and $R = \Theta(1)$ such that \refalgone{eq:batch_perceptron} (\refalgone{eq:gd} when $\gamma \to \infty$) requires at least $\Omega(n)$ iterations to solve \eqref{eq:linear_task} if $\hat{\theta}_{0} = 0$ and $n \geq 10.$
\end{theorem}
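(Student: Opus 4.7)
The plan is to exhibit an explicit two-dimensional family of datasets, indexed by $n$, in which a single ``hard'' sample must be classified against $n-1$ identical ``easy'' samples; I then show that \refalgone{eq:batch_perceptron} updates only by the hard sample for a stretch of length $\Omega(n)$. Concretely I take $d = 2$, $y_i = +1$ for all $i$, with $a_i = (1, -\tfrac{1}{2})$ for $i = 1, \dots, n-1$ and $a_n = (0, 1)$. Directly $R = \max_i \|a_i\| = \sqrt{5}/2 = \Theta(1)$. For the margin, parametrizing $\theta = (\cos\alpha, \sin\alpha)$ reduces the problem to maximizing $\min\left(\cos\alpha - \tfrac{1}{2}\sin\alpha,\; \sin\alpha\right)$; equating the two terms yields $\tan\alpha = 2/3$ and hence $\mu = 2/\sqrt{13} = \Theta(1)$, independent of $n$.

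With $\hat{\theta}_0 = 0$, a direct computation gives
\begin{equation*}
    \hat{\theta}_1 = \frac{1}{2n}\sum_{i=1}^n y_i a_i = \left(\frac{n-1}{2n},\; -\frac{n-3}{4n}\right),
\end{equation*}
so for $n \geq 10$ the second coordinate is strictly negative, whence $y_n a_n^\top \hat{\theta}_1 < 0$, while $y_i a_i^\top \hat{\theta}_1 = (5n-7)/(8n) > 0$ for $i \leq n-1$. Thus $S_1 = \{n\}$, and the next update $\tfrac{1}{n}(0,1)$ touches only the second coordinate. I would then establish by induction that, as long as $S_{\tau-1} = \{n\}$ for every $\tau \leq t$,
\begin{equation*}
    \hat{\theta}_t = \left(\frac{n-1}{2n},\; \frac{4t - n - 1}{4n}\right).
\end{equation*}
From this closed form, $y_n a_n^\top \hat{\theta}_t < 0$ iff $4t < n+1$, and $y_i a_i^\top \hat{\theta}_t = (5n - 4t - 3)/(8n) > 0$ for $i \leq n-1$ iff $t < (5n-3)/4$. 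Both conditions hold simultaneously throughout $t \in \{1, \dots, \lfloor(n-1)/4\rfloor\}$, so the induction closes and the algorithm has not solved \eqref{eq:linear_task} after $\Omega(n)$ iterations.

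The only delicate step is the second inequality above: if any of the $n-1$ identical easy samples were to enter $S_t$ during the window, the resulting aggregated kick of size $\Theta(1)$ in the first coordinate would immediately rebalance $\hat{\theta}_t$ and potentially solve the task in a single step. The quantitative choice $a_i = (1, -\tfrac{1}{2})$ is calibrated precisely so that the easy samples keep a comfortable $\Theta(1)$ margin against $\hat{\theta}_t$ that shrinks only at rate $\Theta(1/n)$ per iteration, while the hard sample starts with a deficit of order $\Theta(1)$ and closes at rate $1/n$ per iteration. The $\Omega(n)$ lower bound is just the ratio of these two scales.
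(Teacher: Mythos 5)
Your proof is correct: the computations of $R=\sqrt{5}/2$, $\mu=2/\sqrt{13}$, the closed form $\hat{\theta}_t=\bigl(\tfrac{n-1}{2n},\tfrac{4t-n-1}{4n}\bigr)$, and the window during which $S_t=\{n\}$ all check out, giving the claimed $\Omega(n)$ bound. This is essentially the same argument as the paper's: a single hard sample against $n-1$ easy ones, so that the $\nicefrac{1}{n}$ batch normalization dilutes the only informative correction to size $\Theta(1/n)$ per iteration.
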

\begin{proof}
    We take the dataset with one sample $(0.5, -1)^\top$ assigned to the class $1$ and $n - 1$ samples $(-0.5, -1)^\top$ assigned to the class $-1.$ We start at the point $\hat{\theta}_0 = (0, 0)^\top.$ 
    Then $\hat{\theta}_{1} = \hat{\theta}_{0} + \frac{1}{2 n}\sum_{i = 1}^n y_i a_i = (0.25, \frac{n - 2}{2 n})^\top.$ 
    Only the sample from the class $-1$ is misclassified at $\hat{\theta}_{1}$ and belongs to $S_1.$ Therefore $\hat{\theta}_{2} = (0.25, \frac{n - 2}{2 n})^\top + \frac{1}{n} (0.5, -1)^\top = (0.25 (1 + \frac{2}{n}), \frac{n - 4}{2 n})^\top.$ Again, only the sample from the class $-1$ belongs to $S_2.$ Thus $\hat{\theta}_{3} = (0.25 (1 + \frac{2}{n}), \frac{n - 4}{2 n})^\top + \frac{1}{n} (0.5, -1)^\top = (0.25 (1 + \frac{4}{n}), \frac{n - 6}{2 n})^\top.$ This will happen further with $\hat{\theta}_{4}, \dots, \hat{\theta}_{k}$ until either the last coordinate becomes negative (the samples from the class $1$ will be misclassified), i.e., $\frac{n - 2 k}{2 n} < 0,$ or the sample from the class $-1$ stops being misclassified, i.e., $0.5 \times 0.25 (1 + \frac{2 k - 2}{n}) + -1 \times \frac{n - 2 k}{2 n} > 0.$ Both conditions require $k$ to be greater or equal to $\Omega(n).$ 
\end{proof}

\begin{figure*}[t]
    \centering
    \begin{subfigure}[t]{0.33\textwidth}
        \centering
        \includegraphics[width=\textwidth]{./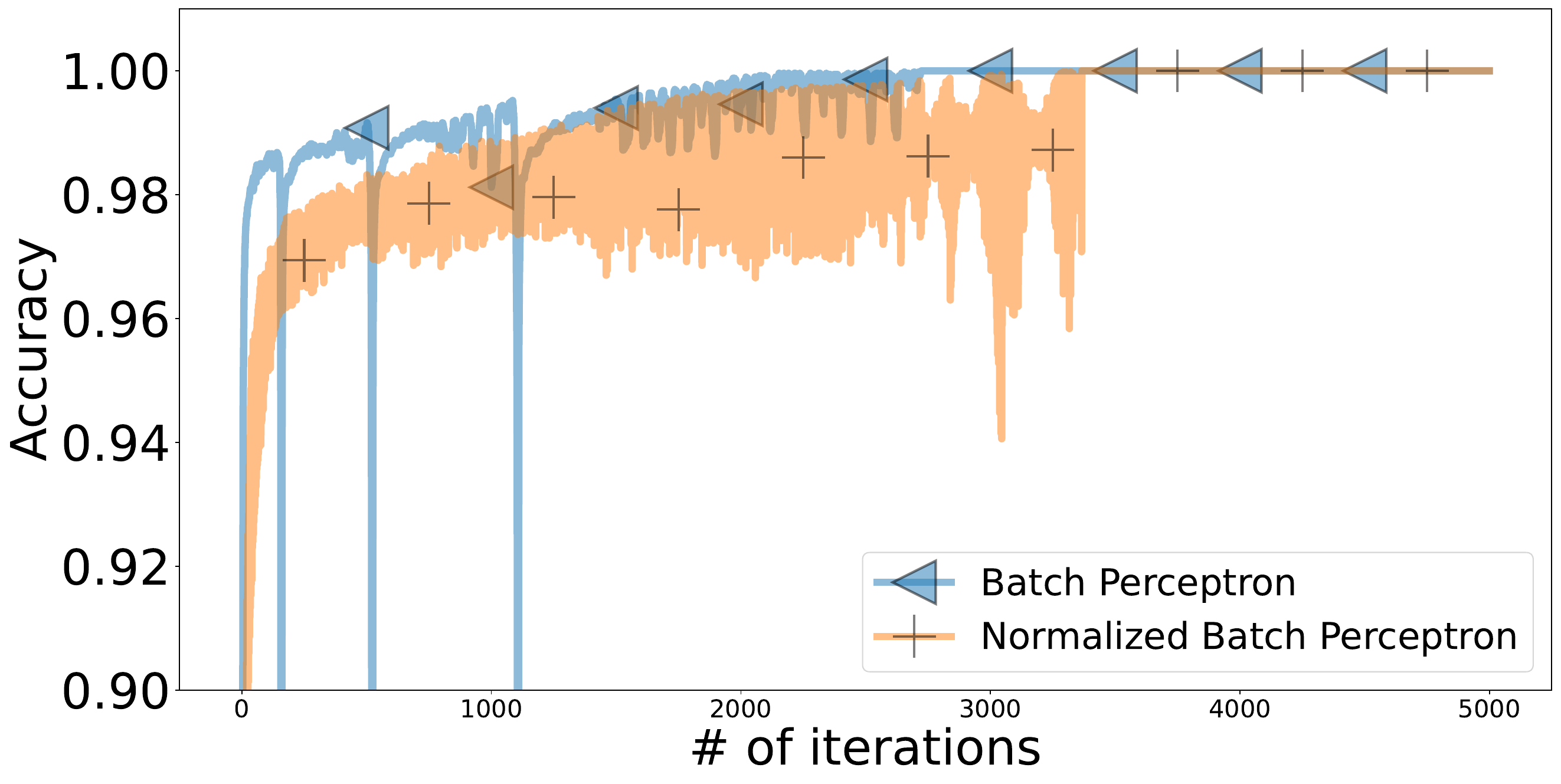}
        \caption{\emph{Fashion MNIST}}
    \end{subfigure}
    \begin{subfigure}[t]{0.33\textwidth}
        \centering
        \includegraphics[width=\textwidth]{./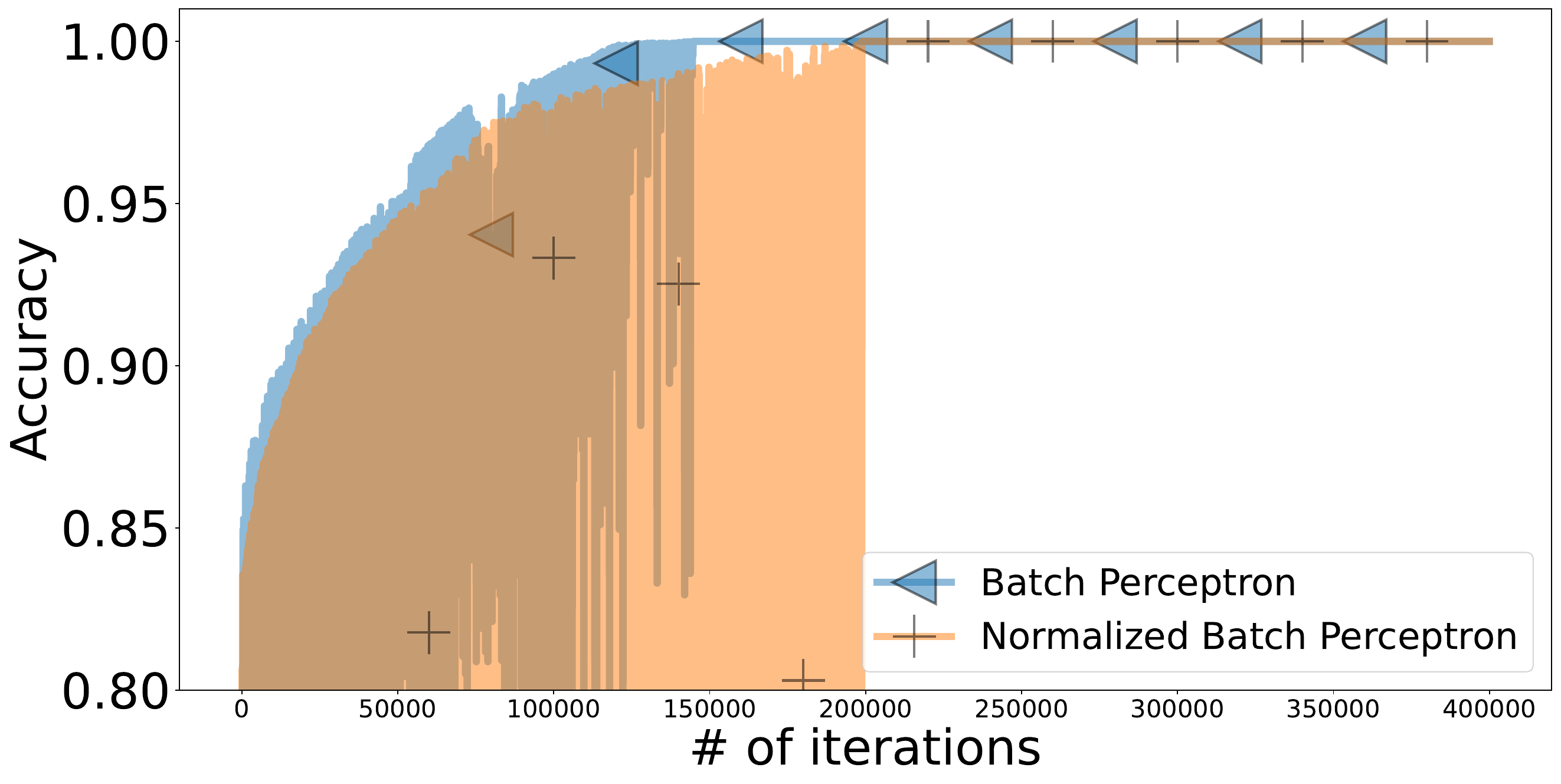}
        \caption{\emph{EuroSAT}}
    \end{subfigure}
    \begin{subfigure}[t]{0.33\textwidth}
        \centering
        \includegraphics[width=\textwidth]{./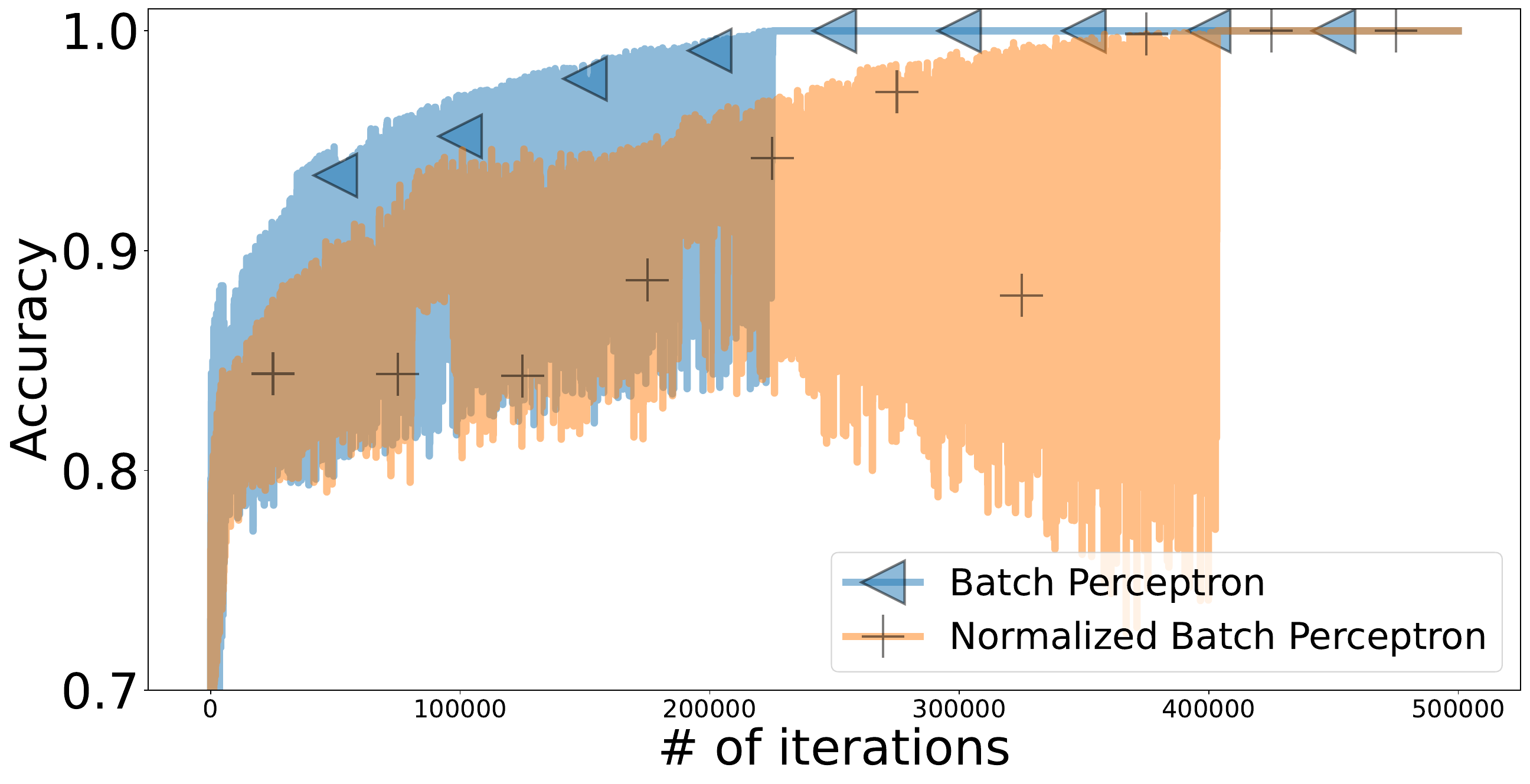}
        \caption{\emph{CIFAR-10}}
    \end{subfigure}
    \caption{Comparison of perceptron algorithms.}
    \label{fig:fashion_mnist_compare_perceptron}
    \centering
    \begin{subfigure}[t]{0.33\textwidth}
        \centering
        \includegraphics[width=\textwidth]{./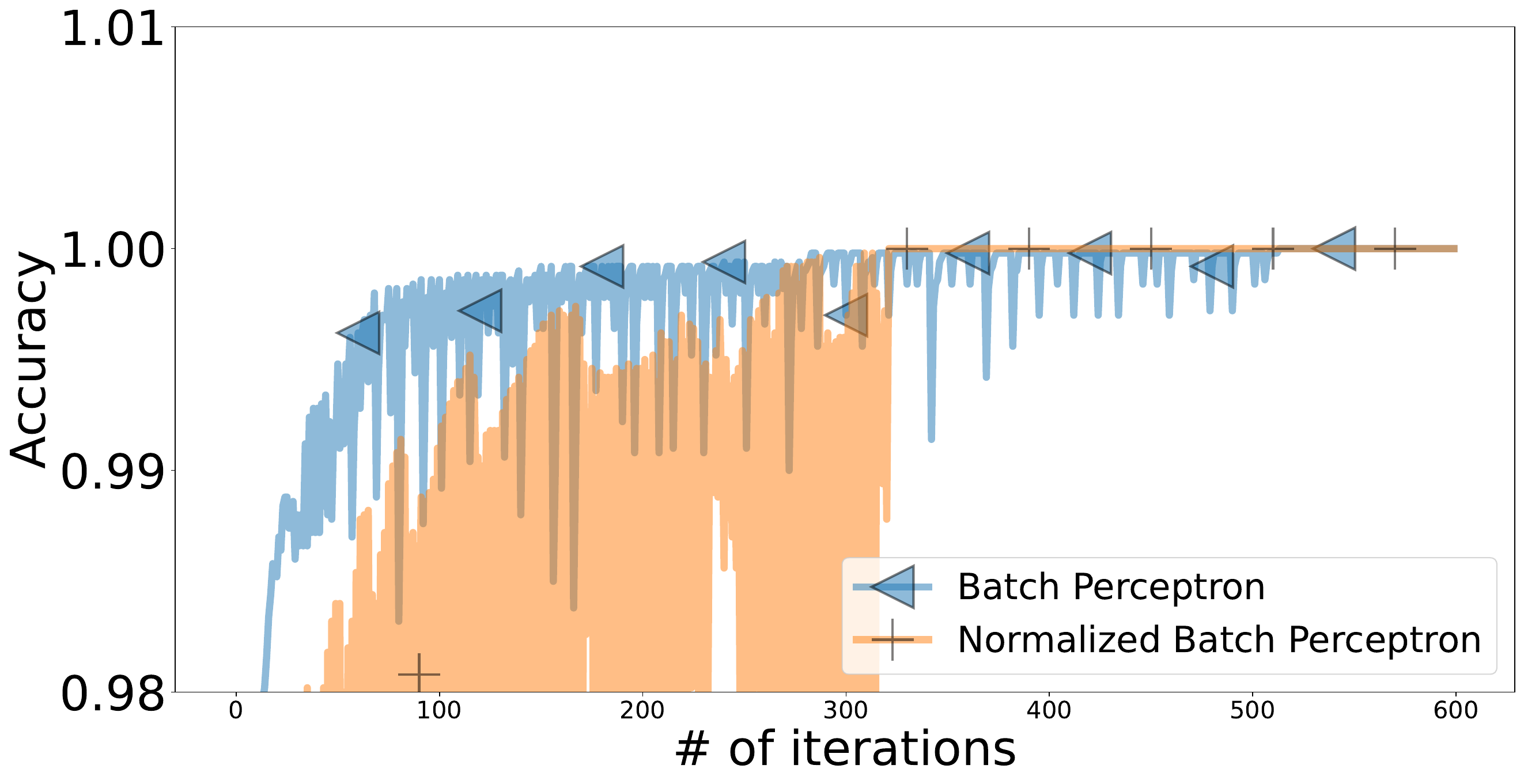}
        \caption{\emph{Fashion MNIST}}
    \end{subfigure}
    \begin{subfigure}[t]{0.33\textwidth}
        \centering
        \includegraphics[width=\textwidth]{./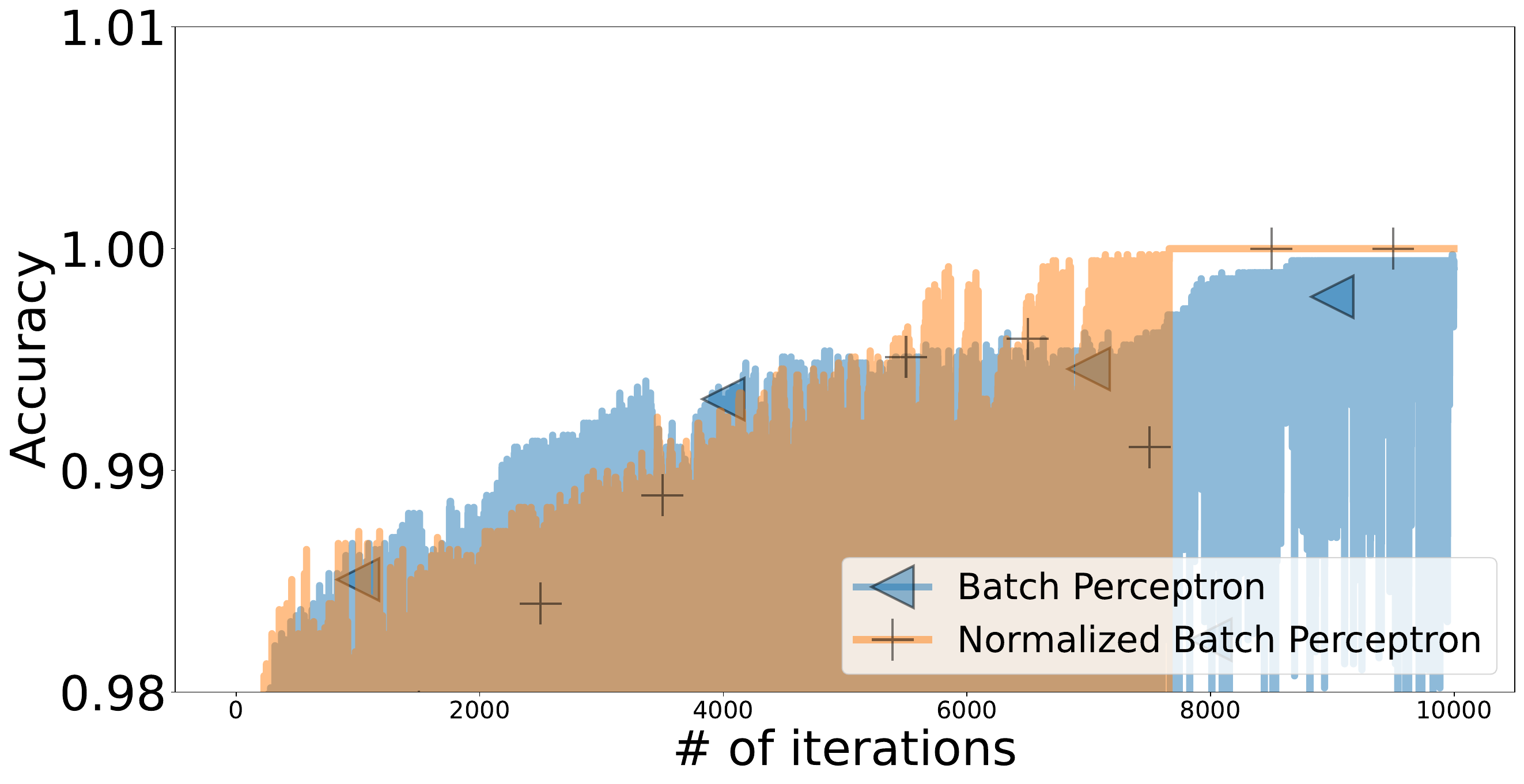}
        \caption{\emph{EuroSAT}}
    \end{subfigure}
    \begin{subfigure}[t]{0.33\textwidth}
        \centering
        \includegraphics[width=\textwidth]{./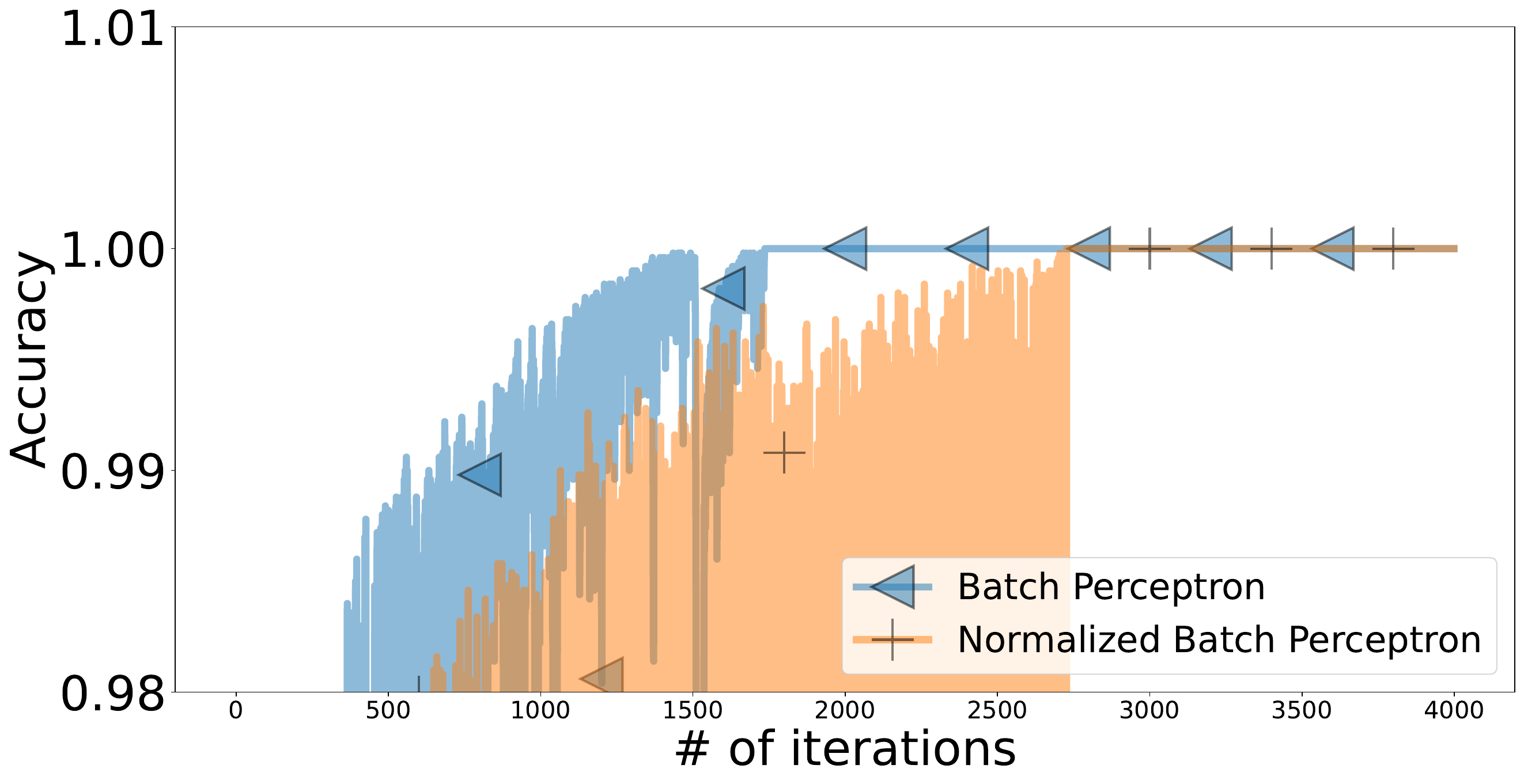}
        \caption{\emph{CIFAR-10}}
    \end{subfigure}
    \caption{Comparison of perceptron algorithms on \textbf{imbalanced} data (see details in Section~\ref{sec:exp_setup_imbalanced}). On two of three datasets \protect\refalgone{eq:norm_perceptron} converges to Accuracy=$1.0$ faster than \protect\refalgone{eq:batch_perceptron}.}
    \label{fig:fashion_mnist_compare_perceptron_imbalanced}
\end{figure*}

We have proved that \refalgone{eq:gd} with $\gamma \to \infty$ is a suboptimal method. At the same time, it is well-known that we can improve the rate using the classical versions of the perceptron algorithm that yield better iteration rates:
\begin{theorem}[\citep{pattern},Theorem 5.1]
    The classical perceptron algorithm \citep{novikoff1962convergence}, defined as  
    \begin{equation}
        \label{eq:perceptron}\tag{Perceptron}
    \begin{aligned}
        &\textnormal{For all $t \geq 0,$ find the set } S_t \eqdef \{i \in [n]\,:y_i \,{a_i}^\top \hat{\theta}_t \leq 0\,\}, \\
        & \textnormal{choose }j \in S_t \textnormal{ and take the step } \hat{\theta}_{t+1} = \hat{\theta}_{t} + y_j a_j,
    \end{aligned}
    \end{equation}
    solves \eqref{eq:linear_task} after at most 
    $\frac{R^2}{\mu^2}$ 
    iterations if $\hat{\theta}_{0} = 0.$
\end{theorem}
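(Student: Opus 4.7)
The plan is to follow the classical Novikoff potential-function argument. Let $\theta^{\star}$ be a unit vector attaining the margin, i.e.\ $\norm{\theta^\star} = 1$ and $y_i a_i^\top \theta^\star \geq \mu$ for all $i \in [n]$; such $\theta^\star$ exists by Assumption~\ref{ass:separable}. I will then track two scalar quantities along the iterates, namely the inner product $\inp{\hat{\theta}_t}{\theta^\star}$ (lower bound that grows linearly in $t$) and the squared norm $\sqnorm{\hat{\theta}_t}$ (upper bound that grows at most linearly in $t$), and combine them through Cauchy--Schwarz to cap the number of updates.

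First, I would control the inner product. Since at step $t$ the update is $\hat{\theta}_{t+1} = \hat{\theta}_t + y_j a_j$ for some $j \in S_t$, and since $y_j a_j^\top \theta^\star \geq \mu$, one gets $\inp{\hat{\theta}_{t+1}}{\theta^\star} \geq \inp{\hat{\theta}_t}{\theta^\star} + \mu$. Starting from $\hat{\theta}_0 = 0$ and iterating, this yields $\inp{\hat{\theta}_t}{\theta^\star} \geq t \mu$ after $t$ updates.

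Second, I would control the squared norm. Expanding,
\begin{align*}
    \sqnorm{\hat{\theta}_{t+1}} = \sqnorm{\hat{\theta}_t} + 2 y_j a_j^\top \hat{\theta}_t + \sqnorm{a_j}.
\end{align*}
Because $j \in S_t$, the middle term satisfies $y_j a_j^\top \hat{\theta}_t \leq 0$, and $\sqnorm{a_j} \leq R^2$ by definition of $R$. Hence $\sqnorm{\hat{\theta}_{t+1}} \leq \sqnorm{\hat{\theta}_t} + R^2$, which telescopes to $\sqnorm{\hat{\theta}_t} \leq t R^2$.

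Finally, Cauchy--Schwarz gives $\inp{\hat{\theta}_t}{\theta^\star} \leq \norm{\hat{\theta}_t}\norm{\theta^\star} = \norm{\hat{\theta}_t}$, so combining the two bounds yields $t \mu \leq \sqrt{t}\, R$, i.e., $t \leq R^2/\mu^2$. Therefore the set $S_t$ of misclassified indices must become empty after at most $R^2/\mu^2$ updates, which is exactly the claim since \refalgone{eq:perceptron} terminates precisely when \eqref{eq:linear_task} is solved. There is no real obstacle here: the whole argument is three short inequalities; the only subtlety worth flagging is that the theorem counts \emph{updates}, so one should state upfront that non-update steps (where $S_t = \emptyset$) are precisely the terminating steps and do not need to be counted.
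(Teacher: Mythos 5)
Your proof is correct: it is the classical Novikoff argument (track $\inp{\hat{\theta}_t}{\theta^\star}$ from below, track $\sqnorm{\hat{\theta}_t}$ from above, couple them via Cauchy--Schwarz), and every step checks out, including the observation that $\norm{y_j a_j} = \norm{a_j} \leq R$ and the remark about non-update steps being exactly the terminating ones. It is worth noting that the paper does not actually supply a proof of this particular theorem --- it is cited from the literature --- but the paper's proofs of the two sibling results (Theorems~\ref{theorem:perceptron} and \ref{theorem:normalized_perceptron}) take a genuinely different route: rather than maintaining two scalar quantities, they track the single potential $\sqnorm{\hat{\theta}_t - \alpha \hat{\theta}_*}$ for a free scale $\alpha > 0$ fixed only at the end ($\alpha = \nicefrac{R^2}{\mu}$ in your setting), show that each update decreases it by at least a constant, and read off the iteration count from the fact that it starts at $\alpha^2$ and stays nonnegative. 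Both arguments yield the same bound $\nicefrac{R^2}{\mu^2}$ here; yours is the more elementary textbook version, while the paper's single-potential version has the advantage that the tunable $\alpha$ cleanly balances the linear progress term against the quadratic step-size term, which is what lets the identical template absorb the averaged batch update $\frac{1}{n}\sum_{i \in S_t} y_i a_i$ (progress only $\nicefrac{\abs{S_t}\mu}{n}$ per step, whence the extra factor of $n$) and the additional $\nicefrac{2\log(2n-1)}{\gamma}$ term in the analysis of \refalgone{eq:norm_gd} without structural changes.
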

We can also consider a practical variant with proper normalization. Using a different normalization factor, $\nicefrac{1}{\abs{S_t}}$ instead of $\nicefrac{1}{n},$ we can provide better guarantees:
\begin{restatable}{theorem}{NORMALIZEDPERCERTRON}[Proof in Section~\ref{sec:normalized_perceptron}]
    \label{theorem:normalized_perceptron}
    The batch perceptron algorithm with a proper normalization, defined as  
    \begin{equation}
        \label{eq:norm_perceptron}\tag{Normalized Batch Perceptron}
    \begin{aligned}
        &\textnormal{For all $t \geq 0,$ find the set } S_t \eqdef \{i \in [n]\,:y_i \,{a_i}^\top \hat{\theta}_t \leq 0\,\},\\
        &\textnormal{and take the step } \hat{\theta}_{t+1} = \hat{\theta}_{t} + \frac{1}{{\abs{S_t}}}\sum_{i \in S_t} y_i a_i \textnormal{ while $\abs{S_t} \neq 0,$}
    \end{aligned}
    \end{equation}
    solves \eqref{eq:linear_task} after at most 
    \begin{align*}
        \frac{R^2}{\mu^2}
    \end{align*}
    iterations if $\hat{\theta}_{0} = 0.$
\end{restatable}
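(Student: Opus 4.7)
I would follow the classical two-sided potential argument of Novikoff, tracking the growth of $\inp{\hat\theta_t}{\theta^\star}$ from below and of $\sqnorm{\hat\theta_t}$ from above, where $\theta^\star$ denotes a unit vector attaining the margin in Assumption~\ref{ass:separable}, so that $y_i a_i^\top \theta^\star \geq \mu$ for every $i \in [n]$. Let me write $g_t \eqdef \frac{1}{\abs{S_t}}\sum_{i \in S_t} y_i a_i$ for the normalized update direction, so the recursion reads $\hat\theta_{t+1} = \hat\theta_t + g_t$ as long as $\abs{S_t} \neq 0$.

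First, I would show the linear growth of $\inp{\hat\theta_t}{\theta^\star}$. Averaging the inequality $y_i a_i^\top \theta^\star \geq \mu$ over $i \in S_t$ gives $\inp{g_t}{\theta^\star} \geq \mu$, and hence $\inp{\hat\theta_{t+1}}{\theta^\star} \geq \inp{\hat\theta_t}{\theta^\star} + \mu$. Starting from $\hat\theta_0 = 0$, induction yields $\inp{\hat\theta_T}{\theta^\star} \geq T\mu$. This is precisely where the normalization by $\abs{S_t}$ instead of $n$ pays off: in \refalgone{eq:batch_perceptron} the averaged inner product is only lower bounded by $\mu \abs{S_t}/n$, which loses a factor of $n$ in the worst case.

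Second, I would upper bound the squared norm. Expanding, $\sqnorm{\hat\theta_{t+1}} = \sqnorm{\hat\theta_t} + 2\inp{\hat\theta_t}{g_t} + \sqnorm{g_t}$. The cross term is nonpositive: each $i \in S_t$ satisfies $y_i a_i^\top \hat\theta_t \leq 0$ by definition of $S_t$, so averaging gives $\inp{\hat\theta_t}{g_t} \leq 0$. For the third term, convexity of $\norm{\cdot}^2$ (Jensen's inequality) gives $\sqnorm{g_t} \leq \frac{1}{\abs{S_t}} \sum_{i \in S_t} \sqnorm{y_i a_i} \leq R^2$. Therefore $\sqnorm{\hat\theta_{t+1}} \leq \sqnorm{\hat\theta_t} + R^2$, and induction yields $\sqnorm{\hat\theta_T} \leq T R^2$.

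Finally, combining the two estimates via Cauchy--Schwarz and $\norm{\theta^\star} = 1$,
\begin{align*}
    T\mu \;\leq\; \inp{\hat\theta_T}{\theta^\star} \;\leq\; \norm{\hat\theta_T} \;\leq\; \sqrt{T}\,R,
\end{align*}
so $T \leq R^2/\mu^2$, which is the claimed bound. There is no real obstacle here; the only subtlety compared to the classical perceptron (where $\sqnorm{g_t} = \sqnorm{y_j a_j} \leq R^2$ trivially) is invoking Jensen for $\sqnorm{g_t} \leq R^2$, and compared to \refalgone{eq:batch_perceptron} the key technical point is that $\inp{g_t}{\theta^\star} \geq \mu$ without any loss proportional to $\abs{S_t}/n$.
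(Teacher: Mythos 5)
Your proof is correct and rests on exactly the same three ingredients as the paper's argument: $\inp{g_t}{\hat{\theta}_*} \geq \mu$ by averaging the margin inequality over $S_t$, $\inp{\hat{\theta}_t}{g_t} \leq 0$ from the definition of $S_t$, and Jensen's inequality for $\|g_t\|^2 \leq R^2$. The only (cosmetic) difference is packaging: the paper folds these into a single decreasing potential $\|\hat{\theta}_t - \alpha \hat{\theta}_*\|^2$ with $\alpha = R^2/\mu$, whereas you keep the classical two-sided Novikoff bounds separate and close with Cauchy--Schwarz; both give the same $R^2/\mu^2$ bound.
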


One can see that \refalgone{eq:perceptron} and \refalgone{eq:norm_perceptron} have $n$ times better convergence rates than \refalgone{eq:batch_perceptron}. The only difference between \refalgone{eq:norm_perceptron} and \refalgone{eq:batch_perceptron} is the proper normalization, which is crucial to get a better iteration rate. \\
\textbf{Numerical experiments.} In Figure~\ref{fig:fashion_mnist_compare_perceptron}, we compare \refalgone{eq:norm_perceptron} and \refalgone{eq:batch_perceptron} numerically and observe that \refalgone{eq:batch_perceptron} converges slightly better in practice despite worse theoretical guarantees.
However, in a setup with imbalanced data, described in Section~\ref{sec:exp_setup_imbalanced}, we observe that \refalgone{eq:norm_perceptron} finds a solution faster than \refalgone{eq:batch_perceptron} (Figure~\ref{fig:fashion_mnist_compare_perceptron_imbalanced}). One research question is to uncover the reasons behind this. A potential high-level explanation is that \refalgone{eq:batch_perceptron} performs well ``on average'', but not robust to inbalanced data.

\begin{table*}[t]
    \centering
    \begin{minipage}{.49\linewidth}
    \caption*{\textbf{The ``worst-case'' dataset from Theorem~\ref{sec:lower_bound}}}
      \centering
      \begin{tabular}{|c|c|}
\hline
{Method} & {\# of Iterations} \\
Normalized LR+GD Step: 100.0 & \textbf{2} \\
Normalized LR+GD Step: 10.0 & \textbf{2} \\
Normalized LR+GD Step: 1.0 & 16 \\
Normalized LR+GD Step: 0.1 & 157 \\
LR+GD Step: 100.0 & 308 \\
\hline
\end{tabular}
    \end{minipage}%
    \begin{minipage}{.49\linewidth}
        \centering
        \caption*{\textbf{Imbalanced \emph{Fashion MNIST}} (see Sec.~\ref{sec:exp_setup_imbalanced})}
        \begin{tabular}{|c|c|}
\hline
{Method} & {\# of Iterations} \\
Normalized LR+GD Step: 100.0 & \textbf{311} \\
Normalized LR+GD Step: 10.0 & 349 \\
Normalized LR+GD Step: 1.0 & 353 \\
Normalized LR+GD Step: 0.1 & 504 \\
LR+GD Step: 100.0 & 538 \\
\hline
\end{tabular}
    \end{minipage}%
    \\\vspace{0.5cm}
    \begin{minipage}{.49\linewidth}
        \centering
        \caption*{\textbf{\emph{Fashion MNIST}} (default setup from Sec.~\ref{sec:exp_setup})}
        \begin{tabular}{|c|c|}
\hline
{Method} & {\# of Iterations} \\
LR+GD Step: 10.0 & \textbf{2778} \\
LR+GD Step: 100.0 & 2928 \\
Normalized LR+GD Step: 100.0 & 3160 \\
Normalized LR+GD Step: 1.0 & 3351 \\
Normalized LR+GD Step: 10.0 & 3359 \\
\hline
\end{tabular}
    \end{minipage}%
    \caption{The \# of iterations required by \ref{eq:gd} and \ref{eq:norm_gd} to solve \eqref{eq:linear_task} with various step sizes. Notice that \protect\refalgone{eq:norm_gd} solves \eqref{eq:linear_task} \emph{150 times} faster than \protect\refalgone{eq:gd} on the first dataset, 1.7 times faster on the second dataset, and only 1.1 times slower on the third dataset.}
    \label{eq:tbl}
\end{table*}

\section{A New Method, Normalized LR+GD, Yields a Faster Iteration Rate}

Since \refalgone{eq:norm_perceptron} converges faster than \refalgone{eq:batch_perceptron} by $n$ times, it raises the question of whether it is possible to modify \refalgone{eq:gd} and obtain a better rate. The answer is affirmative. 
We propose the following method:
\begin{align}
    \label{eq:norm_gd}\tag{Normalized LR+GD}
    \theta_{t+1} = \theta_{t} - \gamma \beta_t \nabla f(\theta_t),
\end{align}
where $\nabla f(\theta_t)$ is the gradient of \eqref{eq:logistic_regression} at the point $\theta_t,$ and 
\begin{align*}
    \textstyle \beta_t = \left(\frac{1}{n}\sum_{i=1}^n (1 + \exp(y_i a_i^\top \theta_t))^{-1}\right)^{-1}.
\end{align*}
We reverse-engineered this method from \refalgone{eq:norm_perceptron}, observing that $\nabla f(\theta_t) \to \frac{1}{\abs{S_t}}\sum_{i \in S_t} y_i a_i$ (see Theorem~\ref{theorem:reduction}) and $\beta_t \to \abs{S_t}$ when $\gamma \to \infty.$ There are many ways how one can interpret it. For instance, it can be seen as \refalgone{eq:gd} but with adaptive step sizes. Note that this method is specialized for the problem \eqref{eq:logistic_regression} because $\beta_t$ requires the features $\{a_i\}_{i=1}^n$ and labels $\{y_i\}_{i=1}^n$. 
We can prove that this method solves \eqref{eq:linear_task} faster than \refalgone{eq:gd}:
\begin{restatable}{theorem}{BATCHPERCEPTRONNONASYMPTFIX}
    \label{theorem:nonasymfix}
    Let Assumption~\ref{ass:separable} hold. \refalgone{eq:norm_gd} solves \eqref{eq:linear_task} after at most 
    \begin{align}
        \label{eq:upper_bound}
        \frac{R^2}{\mu^2} + \frac{2\log(2n - 1)}{\gamma \mu^2}
    \end{align}
    iterations if $\theta_0 = 0.$
\end{restatable}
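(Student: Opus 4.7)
The plan is to run the classical perceptron-style argument on the ``soft'' weighted iterate hidden inside \refalgone{eq:norm_gd}. Writing $p_i^t := (1+\exp(y_i a_i^\top \theta_t))^{-1}$, the update simplifies to $\theta_{t+1} = \theta_t + \gamma u_t$ with $u_t := (\sum_i p_i^t\, y_i a_i)/(\sum_i p_i^t)$, so $u_t$ is a convex combination of $\{y_i a_i\}_{i=1}^n$. First I would collect the two standard perceptron estimates: fixing a unit-norm margin maximizer $\theta^*$ from Assumption~\ref{ass:separable}, the bound $y_i a_i^\top \theta^* \geq \mu$ immediately yields $u_t^\top \theta^* \geq \mu$, so telescoping from $\theta_0 = 0$ gives $\theta_T^\top \theta^* \geq T\gamma\mu$; and convexity of $\|\cdot\|^2$ gives $\|u_t\|^2 \leq R^2$.

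Expanding $\|\theta_{t+1}\|^2 = \|\theta_t\|^2 + 2\gamma u_t^\top \theta_t + \gamma^2 \|u_t\|^2$, the hard part will be controlling the cross term $u_t^\top \theta_t$. The naive convexity bound $u_t^\top \theta_t \leq (n/\sum_i p_i^t)(\log 2 - f(\theta_t))$ is too weak because the normalizer can be as large as $\Theta(n)$. My key lemma will therefore be: whenever $\theta_t$ fails to solve \eqref{eq:linear_task} (so some $z_j := y_j a_j^\top \theta_t \leq 0$), one has $u_t^\top \theta_t \leq \log(2n-1)$.

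To establish the lemma, set $L := \log(2n-1)$ and study $\phi(z) := (z-L)/(1+e^z)$, since the claim becomes $\sum_i \phi(z_i) \leq 0$. The formula $\phi'(z) \propto 1 + e^z(1+L-z)$ shows that $\phi$ is strictly increasing on $(-\infty, L+1]$, hence $\phi(z_j) \leq \phi(0) = -L/2$ at the misclassified index. At the unique global maximizer $z^*$, the relation $e^{z^*}(z^*-L-1) = 1$ simplifies $\phi(z^*)$ to $z^* - L - 1 = e^{-z^*} \leq e^{-L-1} = 1/(e(2n-1))$. Summing over the remaining at most $n-1$ indices,
$$\sum_i \phi(z_i) \leq -L/2 + (n-1)/(e(2n-1)) \leq -L/2 + 1/(2e) \leq 0$$
for all $n \geq 2$ (since $L \geq \log 3 > 1/e$); the degenerate case $n = 1$ is immediate from $p_1 z_1 \leq 0$.

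Assembling the pieces, as long as none of $\theta_0, \ldots, \theta_{T-1}$ separates the data, telescoping the per-step bound $\|\theta_{t+1}\|^2 - \|\theta_t\|^2 \leq 2\gamma \log(2n-1) + \gamma^2 R^2$ starting from $\theta_0 = 0$ gives $\|\theta_T\|^2 \leq T(\gamma^2 R^2 + 2\gamma\log(2n-1))$. Combining with $T\gamma\mu \leq \theta_T^\top \theta^* \leq \|\theta_T\|$ via Cauchy-Schwarz, squaring, and dividing by $T\gamma\mu^2$ delivers exactly $T \leq R^2/\mu^2 + 2\log(2n-1)/(\gamma\mu^2)$, establishing the theorem by contraposition.
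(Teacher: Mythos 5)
Your proof is correct, and at the macro level it is the same Novikoff-style argument as the paper's: both rest on the three facts that the normalized update direction $u_t = -\beta_t\nabla f(\theta_t)$ is a convex combination of the $y_ia_i$ (hence $\|u_t\|\le R$ and $u_t^\top\hat{\theta}_*\ge\mu$), and that the cross term $u_t^\top\theta_t$ is at most $\log(2n-1)$ as long as some sample is misclassified; both then extract exactly the claimed bound. The packaging differs only cosmetically --- you run the classical two-potential version (track $\theta_t^\top\hat{\theta}_*$ and $\|\theta_t\|^2$ separately and combine via Cauchy--Schwarz), whereas the paper tracks the single shifted potential $\|\theta_t/\gamma-\alpha\hat{\theta}_*\|^2$ with a tuned $\alpha$. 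The genuinely different ingredient is your proof of the cross-term lemma. The paper obtains $\sum_i w_i z_i/\sum_i w_i\le\log(n/\sum_i w_i-1)\le\log(2n-1)$ in two lines by writing $z_i=\log((1-w_i)/w_i)$ and applying Jensen's inequality to the logarithm with weights $w_i/\sum_j w_j$, using the misclassified point only through $\sum_i w_i\ge 1/2$. You instead prove the same inequality by a direct single-variable analysis of $\phi(z)=(z-L)/(1+e^z)$: monotonicity on $(-\infty,L+1]$ forces the misclassified index to contribute at most $-L/2$, and the stationarity relation $e^{z^*}(z^*-L-1)=1$ caps every other term at $1/(e(2n-1))$; your numerical closing step ($L\ge\log 3>1/e$ for $n\ge2$, plus the trivial $n=1$ case) checks out. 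Your route is longer but more elementary and makes explicit that a single misclassified sample is what drags the weighted average down to a harmless level; the paper's Jensen step is slicker and even gives the slightly sharper intermediate bound $\log(n/\sum_i w_i-1)$, though both are ultimately relaxed to the same $\log(2n-1)$.
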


The theorem suggests that we should increase the step size and let $\gamma \to \infty$. \\
\textbf{Numerical experiments.} This theoretical insight is supported by the experiments from Table~\ref{eq:tbl}, where the best convergence rate is achieved with large step sizes. In Table~\ref{eq:tbl}, we compare the methods and observe that \refalgone{eq:norm_gd} is more robust to imbalanced datasets.

\section{Conclusion}
In this work, we analyze the classification problem \eqref{eq:linear_task} through the logistic regression \eqref{eq:logistic_regression}. Our key takeaways are

\textbf{1.} Logistic regression and GD with large step sizes reduces to the celebrated (batch) perceptron algorithm, which can explain why \refalgone{eq:gd} solves \eqref{eq:linear_task} even when $\gamma \to \infty.$

\textbf{2.} We can not fully trust function and gradient values when optimizing logistic regression problems. The same caution applies to theoretical works. If a theoretical method has a good convergence rate based on function value residuals or gradient norms ($f(\theta_t) - f^* \leq \dots $ or $\norm{\nabla f(\theta_t)}^2 \leq \dots$), it does not necessarily mean that this method will perform well in practical machine learning tasks. In fact, the opposite may be true. 

\textbf{3.} While it is well-known that GD is an \emph{optimal} method for nonconvex problems when measuring convergence by \emph{the norm of gradients} \citep{carmon2020lower}, and (almost\footnote{There exists the accelerated gradient method by \citet{nesterov1983method,nesterov2018lectures}}) optimal for convex problems when measuring convergence by \emph{function values} \citep{nesterov2018lectures}, the iteration rate of GD to solve \eqref{eq:linear_task} with logistic regression is suboptimal and does not scale with \# of of data points $n$ in the worst case. This can be improved with the new \ref{eq:norm_gd} method. 

\section{Future Work}

\textbf{Nonlinear models and neural networks.} The natural question is whether extending this work to nonlinear models and neural networks is possible.
In the general case, we have to 
\begin{align*}
    \textnormal{find a vector $\theta \in \R^m$ such that } y_i g(a_i;\theta) > 0 \quad \forall i \in [n],
\end{align*}
where $g \,:\,\ \R^d \times \R^m \to \R$ is a nonlinear mapping. The mathematical aspects in this section are not strict but rather serve as a foundation for future research since the general case with nonlinear models is very challenging and unexplored. All theorems from the previous sections heavily utilize the fact that $g$ is a linear model. If $g$ is a neural network, then it is well-known that logistic regression will diverge for large step sizes. Let us look at the gradient step with the logistic loss:
\begin{align*}
    \theta_{t+1} = \theta_t + \frac{1}{n} \sum_{i=1}^n (1 + \exp(y_i g(a_i;\theta_t)))^{-1} \nabla_{\theta} g(a_i;\theta_t).
\end{align*}
For the linear model, in the proof of Theorem~\ref{theorem:reduction}, we show that $(1 + \exp(y_i g(a_i;\theta_t)))^{-1} \to \mathbf{1}[y_i g(a_i;\theta_t) < 0]$ when $\gamma \to \infty.$ For the nonlinear models, it is not clear if it is true. Nevertheless, if we assume that $(1 + \exp(y_i g(a_i;\theta_t)))^{-1} \approx \mathbf{1}[y_i g(a_i;\theta_t) < 0],$ then
\begin{align*}
    &\theta_{t+1} \approx \theta_t + \frac{1}{n} \sum_{i \in S_t} \nabla_{\theta} g(a_i;\theta_t), \\ 
    &S_t \eqdef \{i \in [n]\,:y_i \,g(a_i;\theta_t) \leq 0\,\}
\end{align*}
which can be seen as a \emph{generalized perceptron algorithm}. As far as we know, the analysis and convergence rates for this method have not been explored well. 
We believe these questions are important research endeavors.

\textbf{Implicit bias.} 
One of the main features of \refalgone{eq:gd} is \emph{implicit bias}. For small step sizes, $\gamma < \nicefrac{2}{L},$ \citet{soudry2018implicit,ji2018risk} showed that the iterates of \refalgone{eq:gd} not only solve \eqref{eq:linear_task}, but also have a stronger property: $\theta_t \to \theta_*$ when $t \to \infty,$ where $\theta_* = \arg\max_{\norm{\theta} = 1} \min_{i \in [n]} y_i a_i^\top \theta$ is the max-margin/SVM solution. From our observation, for $\gamma \to \infty,$ \refalgone{eq:gd} reduces to \refalgone{eq:batch_perceptron}, which generally does not return the max-margin solution. Therefore, to ensure the implicit bias property, one has to choose $\gamma < \infty,$ and intuitively, the larger $\gamma,$ the slower the convergence to the max-margin solution, but faster convergence to a solution of \eqref{eq:linear_task} according to the experiments and Theorem~\ref{theorem:nonasymfix}.

\textbf{Theoretical guarantees of optimization methods.} 
Typically, when researchers develop new methods, they compare them with previous methods using convergence rates by (loss) function values or by the norm of gradients. 
We believe that this work raises an important concern about this methodology in the context of machine learning tasks. While this work only analyzes the logistic loss with a linear model, the problem can be even more dramatic with more complex losses and models.

\bibliography{aaai25}

\appendix

\onecolumn

\section{Frequently Used Notation}

\newcommand{\ditto}[1][.4pt]{\xrfill{#1}~''~\xrfill{#1}}
\begin{table}[h]
\centering
\begin{tabular}{cc}
\hline
$g = \operatorname{O}(f)$ & Exist $C > 0$ such that $g(z) \leq C \times f(z)$ for all $z \in \mathcal{Z}$\\
$g = \Omega(f)$ & Exist $C > 0$ such that $g(z) \geq C \times f(z)$ for all $z \in \mathcal{Z}$\\
$g = \Theta(f)$ & $g = \operatorname{O}(f)$ and $g = \Omega(f)$ \\
$g = \widetilde{\cO}(f)$ & Exist $C > 0$ such that $g(z) \leq C \times f(z) \times \log (\textnormal{poly}(z))$ for all $z \in \mathcal{Z}$ \\
$g = \widetilde{\Omega}(f)$ & Exist $C > 0$ such that $g(z) \geq C \times f(z) \times \log (\textnormal{poly}(z))$ for all $z \in \mathcal{Z}$ \\
$g = \widetilde{\Theta}(f)$ & $g = \widetilde{\cO}(f)$ and $g = \widetilde{\Omega}(f)$ \\
$\{a, \dots, b\}$ & Set $\{i \in \mathbb{Z}\,|\, a \leq i \leq b\}$ \\
$[n]$ & $\{1, \dots, n\}$ \\
$\N_0$ & $\{0, 1, 2, 3, \dots\}$ \\
\hline
\end{tabular}
\end{table}

\section{Proof of Theorem~\ref{theorem:perceptron}}
\label{sec:theorem:perceptron}

\BATCHPERCEPTRON*

\begin{proof}
    We use a slight modification of the classical arguments \citep{novikoff1962convergence,pattern}. Let us take $\hat{\theta}_* = \arg\max_{\norm{\theta} = 1} \min_{i \in [n]} y_i a_i^\top \theta$ and $\alpha > 0.$ Then, using simple algebra, for the first $t \geq 1$ such that $\abs{S_t} \neq 0,$ we obtain
    \begin{align*}
        &\norm{\hat{\theta}_{t+1} - \alpha \hat{\theta}_*}^2 = \norm{\hat{\theta}_{t} + \frac{1}{n} \sum_{i \in S_t}  y_i a_i - \alpha \hat{\theta}_*}^2 = \norm{\hat{\theta}_{t} - \alpha \hat{\theta}_*}^2 + 2 \inp{\hat{\theta}_{t} - \alpha \hat{\theta}_*}{\frac{1}{n} \sum_{i \in S_t}  y_i a_i} + \norm{\frac{1}{n} \sum_{i \in S_t}  y_i a_i}^2.
    \end{align*}
    By the definition of $S_t,$ we have $\hat{\theta}_{t}^\top y_i a_i \leq 0$ for all $i \in S_t.$ Moreover, by the definition of $\hat{\theta}_*,$ we get
    $\hat{\theta}_{*}^\top y_i a_i \geq \mu$ for all $i \in [n].$ Therefore
    \begin{align*}
        \norm{\hat{\theta}_{t+1} - \alpha \hat{\theta}_*}^2 
        &\leq \norm{\hat{\theta}_{t} - \alpha \hat{\theta}_*}^2 - \frac{2 \alpha \abs{S_t} \mu}{n} + \norm{\frac{1}{n} \sum_{i \in S_t}  y_i a_i}^2 \\
        &\leq \norm{\hat{\theta}_{t} - \alpha \hat{\theta}_*}^2 - \frac{2 \alpha \abs{S_t} \mu}{n} + \frac{\abs{S_t}^2}{n^2} \max_{i \in [n]}\norm{ a_i}^2,
    \end{align*}
    where we use Jensen's inequality. 
    Taking $\alpha = \frac{\max_{i \in [n]}\norm{ a_i}^2}{\mu},$ we get
    \begin{align*}
        \norm{\hat{\theta}_{t+1} - \alpha \hat{\theta}_*}^2 
        &\leq \norm{\hat{\theta}_{t} - \alpha \hat{\theta}_*}^2 - \frac{2 \abs{S_t}}{n} \max\limits_{i \in [n]}\norm{ a_i}^2 + \frac{\abs{S_t}^2}{n^2} \max_{i \in [n]}\norm{ a_i}^2.
    \end{align*}
    Note that $-\frac{2 x}{n} + \frac{x^2}{n^2} \leq -\frac{2}{n} + \frac{1}{n^2}$ for all $x \in \{1, \dots, n\}.$ Thus
    \begin{align}
        \norm{\hat{\theta}_{t+1} - \alpha \hat{\theta}_*}^2 
        &\leq \norm{\hat{\theta}_{t} - \alpha \hat{\theta}_*}^2 - \frac{2}{n} \max\limits_{i \in [n]}\norm{ a_i}^2 + \frac{1}{n^2} \max_{i \in [n]}\norm{ a_i}^2 \nonumber \\
        &\leq \norm{\hat{\theta}_{t} - \alpha \hat{\theta}_*}^2 - \frac{1}{n} \max\limits_{i \in [n]}\norm{ a_i}^2 \leq \norm{\hat{\theta}_{1} - \alpha \hat{\theta}_*}^2 - \frac{t}{n} \max\limits_{i \in [n]}\norm{ a_i}^2.
        \label{eq:LSHXXavAiWQYDxiJ}
    \end{align}
    Using the same reasoning, for $t = 0,$ we have 
    \begin{align*}
        \norm{\hat{\theta}_{1} - \alpha \hat{\theta}_*}^2 &= \norm{\hat{\theta}_{0} - \alpha \hat{\theta}_*}^2 + 2 \inp{\hat{\theta}_{0} - \alpha \hat{\theta}_*}{\frac{1}{2 n} \sum_{i=1}^n  y_i a_i} + \norm{\frac{1}{2 n} \sum_{i=1}^n y_i a_i}^2 \\
        &\leq \norm{\hat{\theta}_{0} - \alpha \hat{\theta}_*}^2 - \alpha \mu + \frac{1}{4} \max\limits_{i \in [n]}\norm{ a_i}^2 < \norm{\hat{\theta}_{0} - \alpha \hat{\theta}_*}^2.
    \end{align*}
    We substitute the last inequality to \eqref{eq:LSHXXavAiWQYDxiJ} and get
    \begin{align*}
        \norm{\hat{\theta}_{t+1} - \alpha \hat{\theta}_*}^2 
        &< \norm{\hat{\theta}_{0} - \alpha \hat{\theta}_*}^2 - \frac{t}{n} \max\limits_{i \in [n]}\norm{ a_i}^2.
    \end{align*}
    Therefore, the algorithm necessarily solves \eqref{eq:linear_task} after at most 
    \begin{align*}
    \frac{n \norm{\hat{\theta}_{0} - \alpha \hat{\theta}_*}^2}{\max\limits_{i \in [n]}\norm{ a_i}^2}
    \end{align*} iterations. 
    It is left substitute the choice of $\alpha$ and $\hat{\theta}_{0},$ and use the fact $\norm{\hat{\theta}_*} = 1.$
\end{proof}

\section{Proof of Theorem~\ref{theorem:nonasymfix}}

\BATCHPERCEPTRONNONASYMPTFIX*

The proof employs a modification of the arguments presented in \citep{novikoff1962convergence,pattern}.

\begin{proof}
    Let us take $\hat{\theta}_* = \arg\max_{\norm{\theta} = 1} \min_{i \in [n]} y_i a_i^\top \theta$ and $\alpha > 0.$ Assume that $t^*$ is the first moment when the algorithm find $\theta_t$ such that $y_i a_i^\top \theta_t > 0$ for all $i \in [n].$ If $t^* \leq \eqref{eq:upper_bound},$ then the theorem is proved. Let $t^* > \eqref{eq:upper_bound},$ then we will show a contradiction.
    
    For all $0 \leq t < t^*$, using simple algebra, we obtain
    \begin{align}
        \label{eq:qogboVymsaWjOgbn}
        &\norm{\frac{\theta_{t+1}}{\gamma} - \alpha \hat{\theta}_*}^2 
        = \norm{\frac{\theta_{t}}{\gamma} - \alpha \hat{\theta}_*}^2 + 2 \inp{\frac{\theta_{t}}{\gamma} - \alpha \hat{\theta}_*}{-\beta_t \nabla f(\theta_t)} + \norm{\beta_t \nabla f(\theta_t)}^2.
    \end{align}
    Let us temporary define $w_i \eqdef (1 + \exp(y_i a_i^\top \theta_t))^{-1}.$ Then $\beta_t = \left(\frac{1}{n} \sum_{i=1}^n w_i\right)^{-1},$
    \begin{align*}
        \nabla f(\theta_t) = - \frac{1}{n} \sum_{i=1}^n w_i y_i a_i
    \end{align*}
    and 
    \begin{align}
        \label{eq:odMAtBBIoaTVxDQMCpO}
        \norm{\beta_t \nabla f(\theta_t)}^2 = \norm{\left(\frac{1}{n} \sum_{i=1}^n w_i\right)^{-1} \frac{1}{n} \sum_{i=1}^n w_i y_i a_i}^2 \leq R^2
    \end{align}
    due to Jensen's inequality. 
    By the definition of $\hat{\theta}_*,$ $y_i a_i^\top \hat{\theta}_{*} \geq \mu$ for all $i \in [n].$ Thus
    \begin{align}
        \label{eq:UjiKdYuZIKsiMoPZlPL}
        -\beta_t \hat{\theta}_*^\top \nabla f(\theta_t) = \beta_t \left(\frac{1}{n} \sum_{i=1}^n w_i y_i a_i^\top \hat{\theta}_*\right) \geq \beta_t \left(\frac{1}{n} \sum_{i=1}^n w_i\right) \mu = \mu
    \end{align}
    On the other hand
    \begin{align*}
        -\beta_t \theta_t^\top \nabla f(\theta_t) = \left(\frac{1}{n} \sum_{i=1}^n w_i\right)^{-1} \frac{1}{n} \sum_{i=1}^n w_i \log\left(\frac{1}{w_i} - 1\right) \leq \log\left(\frac{n}{\sum_{i=1}^n w_i} - 1\right)
    \end{align*}
    since $\log\left(\frac{1}{w_i} - 1\right) = y_i a_i^\top \theta_t,$ and due to Jensen's inequality. Since $t < t^*,$ the problem is not solved at the iteration $t,$ there exists $i \in [n]$ such that $y_i a_i^\top \theta_t\leq 0.$ Thus $\sum_{i=1}^n w_i \geq 1 / 2$ and 
    \begin{align}
        \label{eq:jdzVkkdm}
        -\beta_t \theta_t^\top \nabla f(\theta_t) \leq \log(2 n - 1).
    \end{align}
    We substitute \eqref{eq:odMAtBBIoaTVxDQMCpO}, \eqref{eq:UjiKdYuZIKsiMoPZlPL}, and \eqref{eq:jdzVkkdm} to \eqref{eq:qogboVymsaWjOgbn}, and get
    \begin{align*}
        &\norm{\frac{\theta_{t+1}}{\gamma} - \alpha \hat{\theta}_*}^2 
        \leq \norm{\frac{\theta_{t}}{\gamma} - \alpha \hat{\theta}_*}^2 - 2 \alpha \mu + \frac{2 \log(2 n - 1)}{\gamma} + R^2.
    \end{align*}
    Taking $\alpha = \frac{\frac{2 \log(2 n - 1)}{\gamma} + R^2}{\mu},$ we obtain
    \begin{align*}
        &\norm{\frac{\theta_{t+1}}{\gamma} - \alpha \hat{\theta}_*}^2 
        \leq \norm{\frac{\theta_{t}}{\gamma} - \alpha \hat{\theta}_*}^2 - \alpha \mu \leq \norm{\frac{\theta_{0}}{\gamma} - \alpha \hat{\theta}_*}^2 - (t + 1) \alpha \mu.
    \end{align*}
    For all $0 \leq t < t^*,$ the last inequality necessarily yields 
    \begin{align*}
        t + 1 \leq \frac{\norm{\frac{\theta_{0}}{\gamma} - \alpha \hat{\theta}_*}^2}{\alpha \mu} = \frac{\alpha}{\mu} = \frac{R^2}{\mu^2} + \frac{2\log(2n - 1)}{\gamma \mu^2},
    \end{align*}
    where we use the choice of $\alpha, \hat{\theta}_*,$ and $\theta_0.$ Taking $t = t^* - 1,$ we get the inequality
    \begin{align*}
        t^* \leq \frac{R^2}{\mu^2} + \frac{2\log(2n - 1)}{\gamma \mu^2}
    \end{align*}
    that contradicts the assumption at the beginning of the proof.
\end{proof}

\section{Proof of Theorem~\ref{theorem:normalized_perceptron}}
\label{sec:normalized_perceptron}

\NORMALIZEDPERCERTRON*

\begin{proof}
    Let us take $\hat{\theta}_* = \arg\max_{\norm{\theta} = 1} \min_{i \in [n]} y_i a_i^\top \theta$ and $\alpha > 0.$ Using the same idea as in the proof of Theorem~\ref{theorem:perceptron}, we get
    \begin{align*}
        &\norm{\hat{\theta}_{t+1} - \alpha \hat{\theta}_*}^2 = \norm{\hat{\theta}_{t} - \alpha \hat{\theta}_*}^2 + 2 \inp{\hat{\theta}_{t} - \alpha \hat{\theta}_*}{\frac{1}{\abs{S_t}} \sum_{i \in S_t}  y_i a_i} + \norm{\frac{1}{\abs{S_t}} \sum_{i \in S_t}  y_i a_i}^2
    \end{align*}
    for the first $t \geq 0$ such that $\abs{S_t} \neq 0.$
    By the definition of $S_t,$ we have $y_i a_i^\top \hat{\theta}_{t} \leq 0$ for all $i \in S_t.$ Moreover, by the definition of $\hat{\theta}_*,$ we get
    $y_i a_i^\top \hat{\theta}_{*} \geq \mu$ for all $i \in [n].$ Therefore
    \begin{align*}
        \norm{\hat{\theta}_{t+1} - \alpha \hat{\theta}_*}^2 
        &\leq \norm{\hat{\theta}_{t} - \alpha \hat{\theta}_*}^2 - 2 \alpha \mu + \norm{\frac{1}{\abs{S_t}} \sum_{i \in S_t}  y_i a_i}^2 \\
        &\leq \norm{\hat{\theta}_{t} - \alpha \hat{\theta}_*}^2 - 2 \alpha \mu + \max_{i \in [n]}\norm{ a_i}^2,
    \end{align*}
    where we use Jensen's inequality. 
    Taking $\alpha = \max_{i \in [n]}\norm{ a_i}^2 / \mu,$ we get
    \begin{align*}
        \norm{\hat{\theta}_{t+1} - \alpha \hat{\theta}_*}^2 
        &\leq \norm{\hat{\theta}_{t} - \alpha \hat{\theta}_*}^2 - \max_{i \in [n]}\norm{ a_i}^2 \leq \norm{\hat{\theta}_{0} - \alpha \hat{\theta}_*}^2 - (t + 1) \max_{i \in [n]}\norm{ a_i}^2.
    \end{align*}
    Therefore, the algorithm solves \eqref{eq:linear_task} after at most $\norm{\hat{\theta}_{0} - \alpha \hat{\theta}_*}^2 / \max_{i \in [n]}\norm{a_i}^2 = \max_{i \in [n]}\norm{a_i}^2 / \mu^2$ iterations.
\end{proof}

\section{Extra Experiments and Details}
\label{sec:exp_setup}
The code was written in Python. A distributed environment was emulated on a machine with Intel(R) Xeon(R) Gold 6226R CPU @ 2.90GHz and 64 cores. 

By default, we consider the following experimental setup in the paper. In \emph{CIFAR-10} and \emph{EuroSAT}, we take the first two classes. In \emph{FashionMNIST}, we take class $0$ and class $4.$ In \emph{MNIST}, we choose class $7$ and class $8.$ In the latter two datasets, we tried to choose visually more challenging classes to classify. We subsample $5\,000$ data points in each dataset to ensure we work with separable data.

We also provide additional experiments with $1\,000$ and $10\,000$ data points in Section~\ref{sec:more_data_points} to support our experimental results from the main part.

\subsection{Construction of imbalanced datasets}
\label{sec:exp_setup_imbalanced}
To demonstrate the superiority and robustness of \refalgone{eq:norm_perceptron} and \refalgone{eq:norm_gd} to imbalanced datasets in Figure~\ref{fig:fashion_mnist_compare_perceptron_imbalanced} and Table~\ref{eq:tbl}, we construct imbalanced versions of the datasets in the following way. The setup here is almost the same as in the default setup: we take the same two classes and subsample 10\% of data points, but then we repeat each sample of one class $10$ times. The last step is the main difference that helps to construct imbalanced datasets.

\subsection{Experiments on other datasets to support Section~\ref{sec:gd_is_perceptron}}

We now provide additional experiments on other datasets to support our results from Section~\ref{sec:gd_is_perceptron}:

\begin{figure}[H]
\centering
\begin{subfigure}[t]{0.4\textwidth}
    \centering
    \includegraphics[width=\textwidth]{./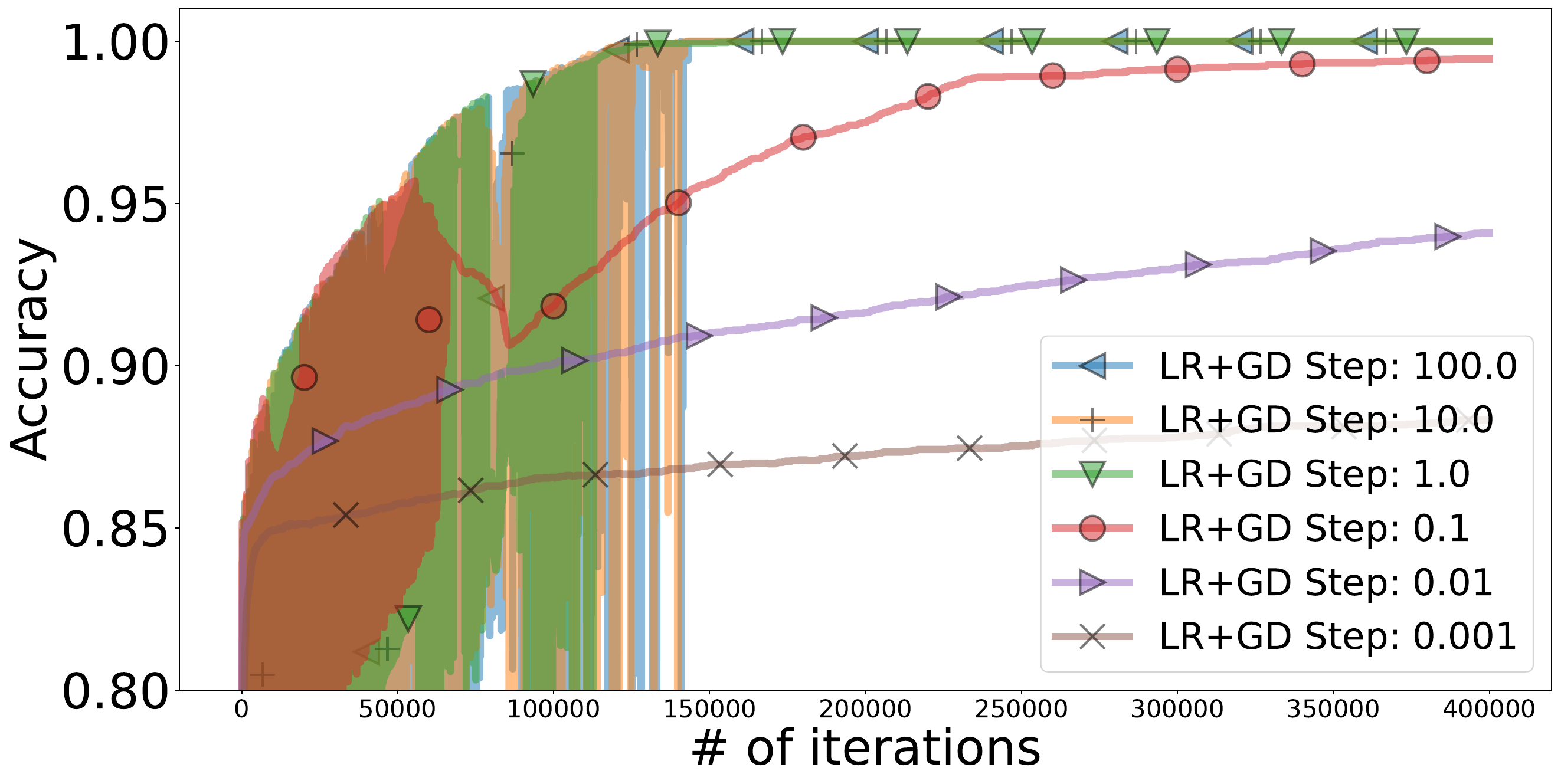}
\end{subfigure}
\begin{subfigure}[t]{0.4\textwidth}
    \centering
    \includegraphics[width=\textwidth]{./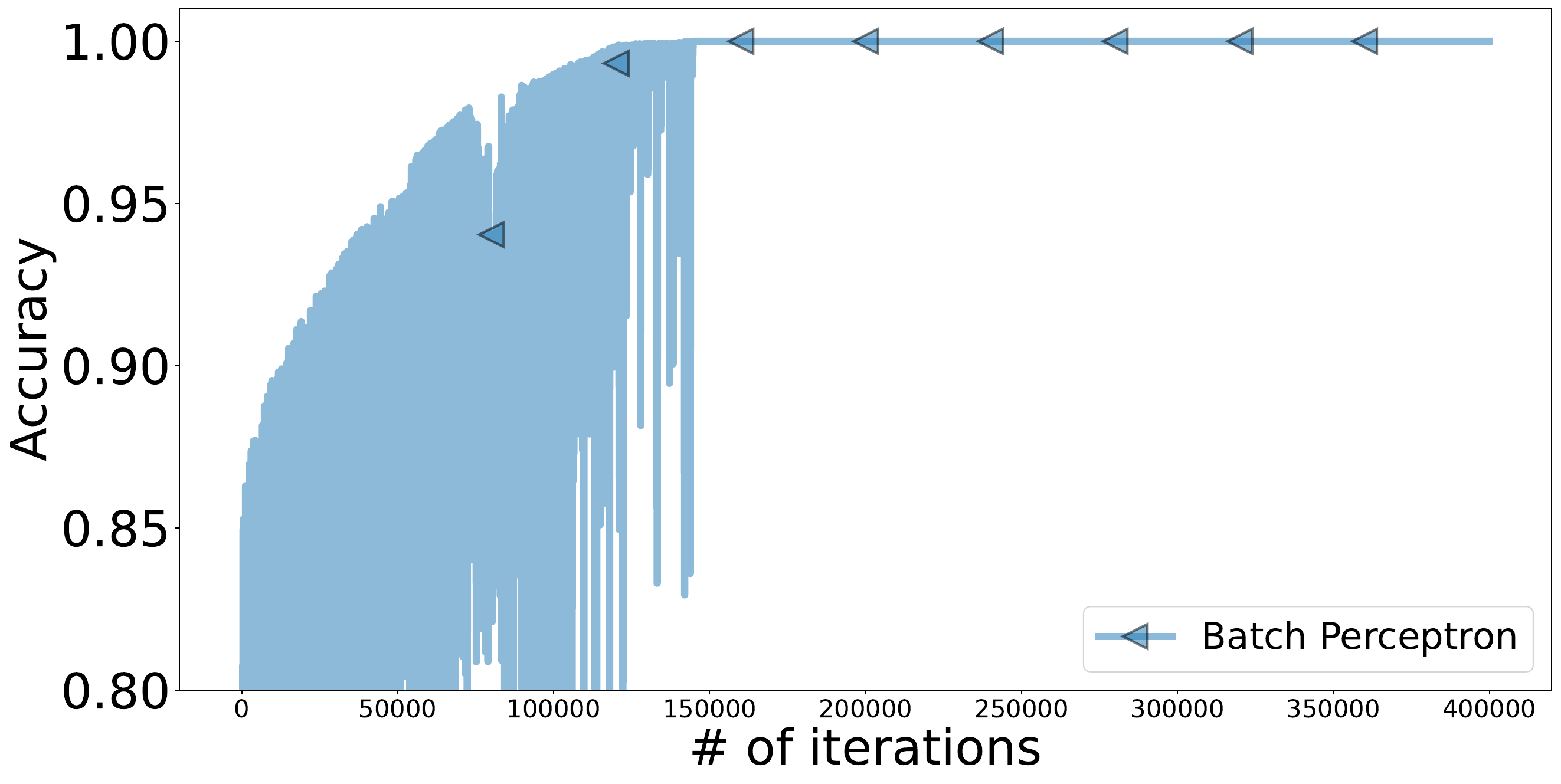}
\end{subfigure}
\caption{We show that \protect\refalgone{eq:gd} with large step sizes aligns with \protect\refalgone{eq:batch_perceptron} on \emph{EuroSAT}.}
\label{fig:eurosat}
\centering
\begin{subfigure}[t]{0.32\textwidth}
    \centering
    \includegraphics[width=\textwidth]{./results_2024/gd_two_layer_eos_linear_one_no_bias_loss_bce_logits_eurosat_num_samples_5000_more_iters_more_iters_filter_classes_0_1_accuracy.pdf}
\end{subfigure}
\begin{subfigure}[t]{0.32\textwidth}
    \centering
    \includegraphics[width=\textwidth]{./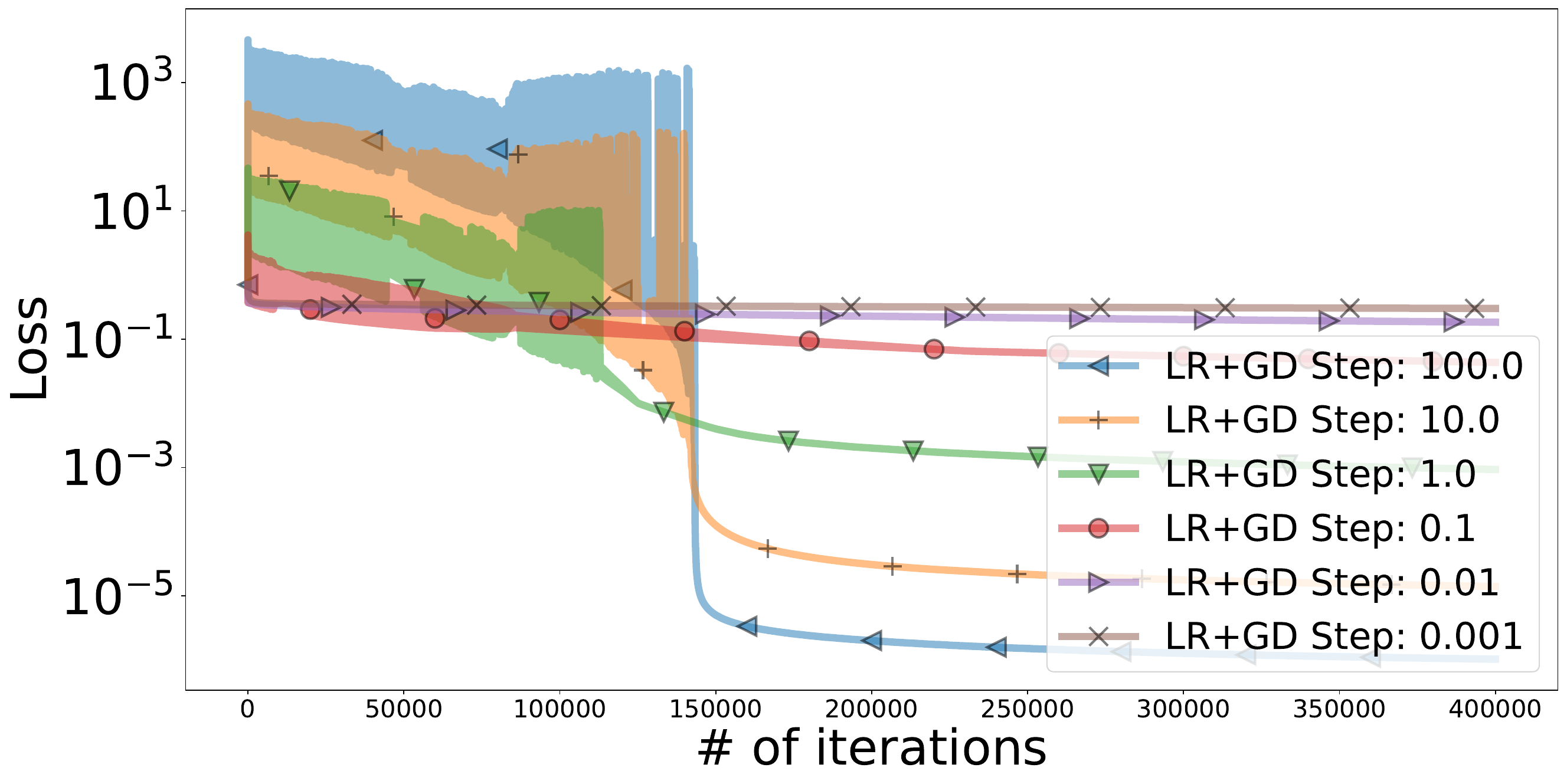}
\end{subfigure}
\begin{subfigure}[t]{0.32\textwidth}
    \centering
    \includegraphics[width=\textwidth]{./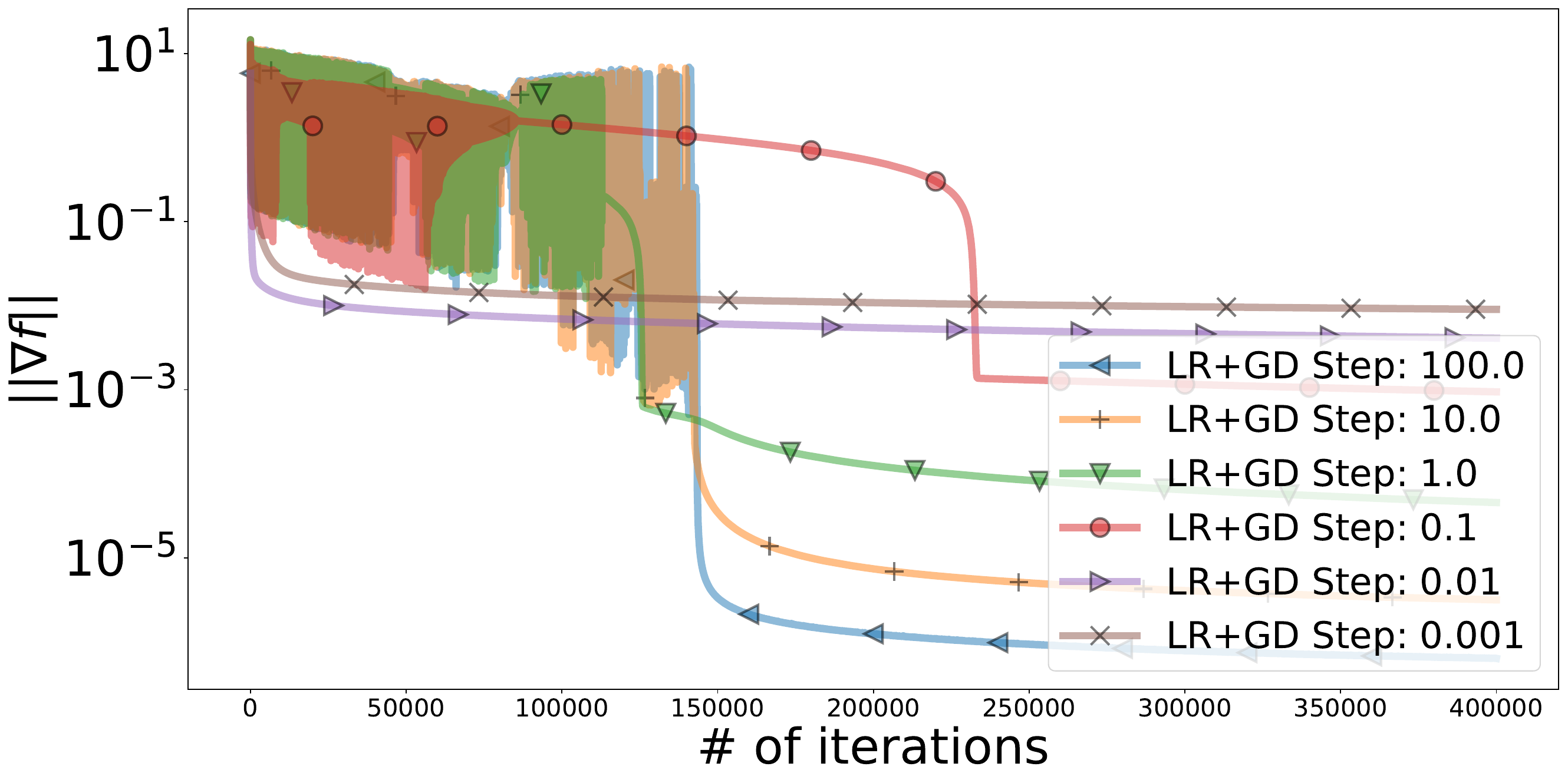}
\end{subfigure}
\caption{Accuracy, function values, and the norm of gradients of the logistic loss \eqref{eq:logistic_regression} on \emph{EuroSAT} during the runs of \protect\refalgone{eq:gd}.}
\label{fig:eurosat_more}
\centering
\begin{subfigure}[t]{0.4\textwidth}
    \centering
    \includegraphics[width=\textwidth]{./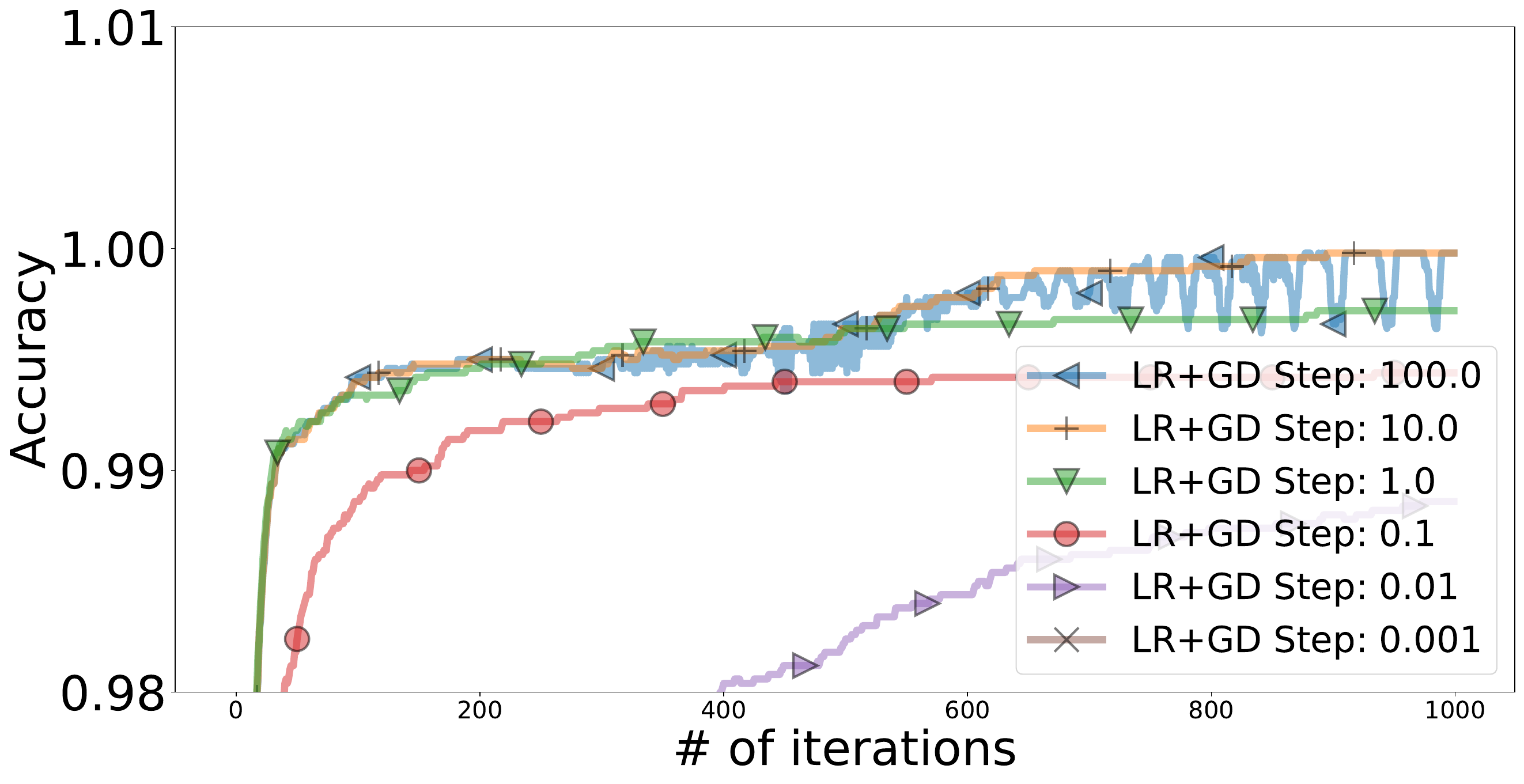}
\end{subfigure}
\begin{subfigure}[t]{0.4\textwidth}
    \centering
    \includegraphics[width=\textwidth]{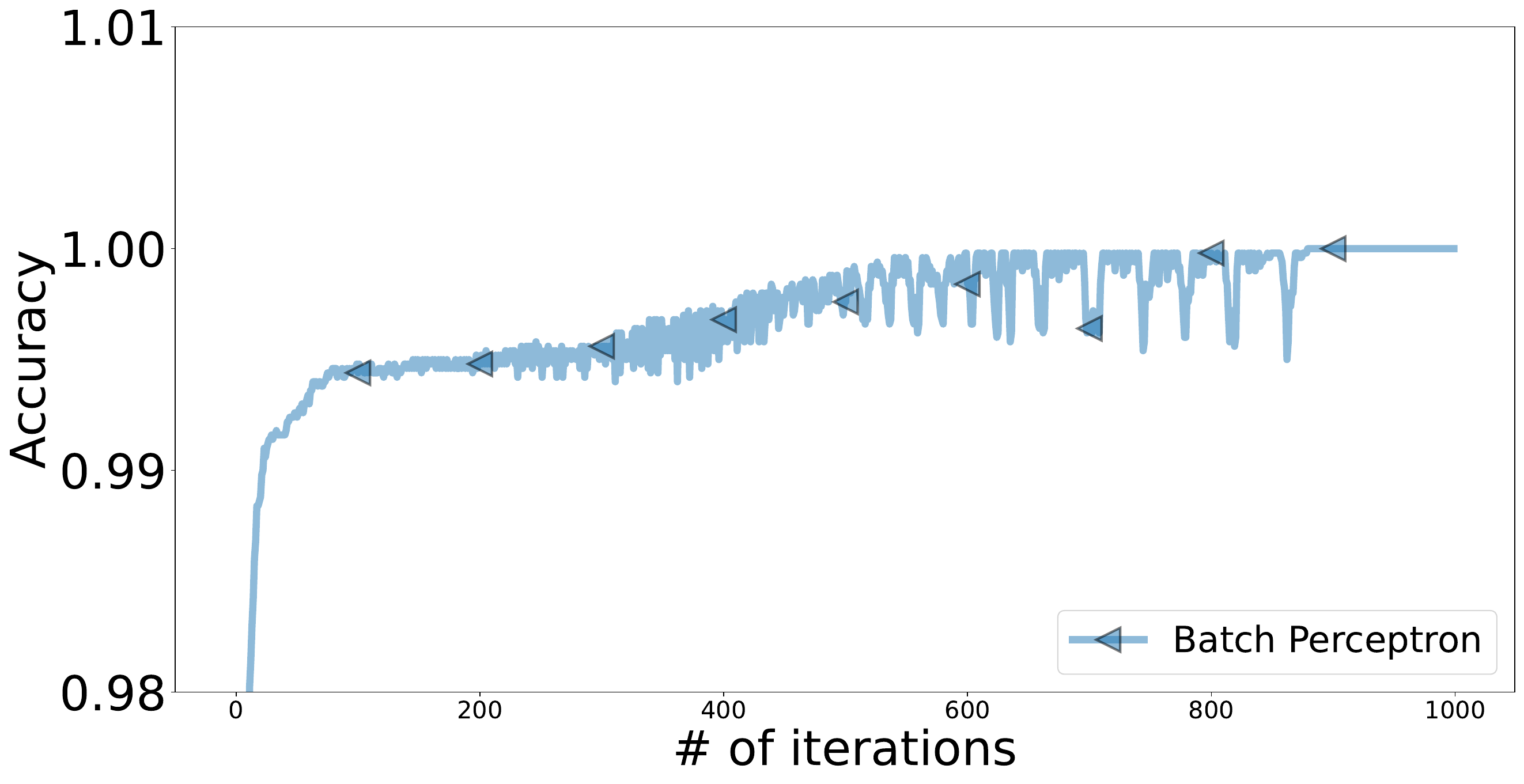}
\end{subfigure}
\caption{We show that \protect\refalgone{eq:gd} with large step sizes aligns with \protect\refalgone{eq:batch_perceptron} on \emph{MNIST}.}
\label{fig:mnist}
\begin{subfigure}[t]{0.32\textwidth}
    \centering
    \includegraphics[width=\textwidth]{./results_2024/gd_two_layer_eos_linear_one_no_bias_loss_bce_logits_mnist_num_samples_5000_more_iters_more_iters_filter_classes_7_8_accuracy.pdf}
\end{subfigure}
\begin{subfigure}[t]{0.32\textwidth}
    \centering
    \includegraphics[width=\textwidth]{./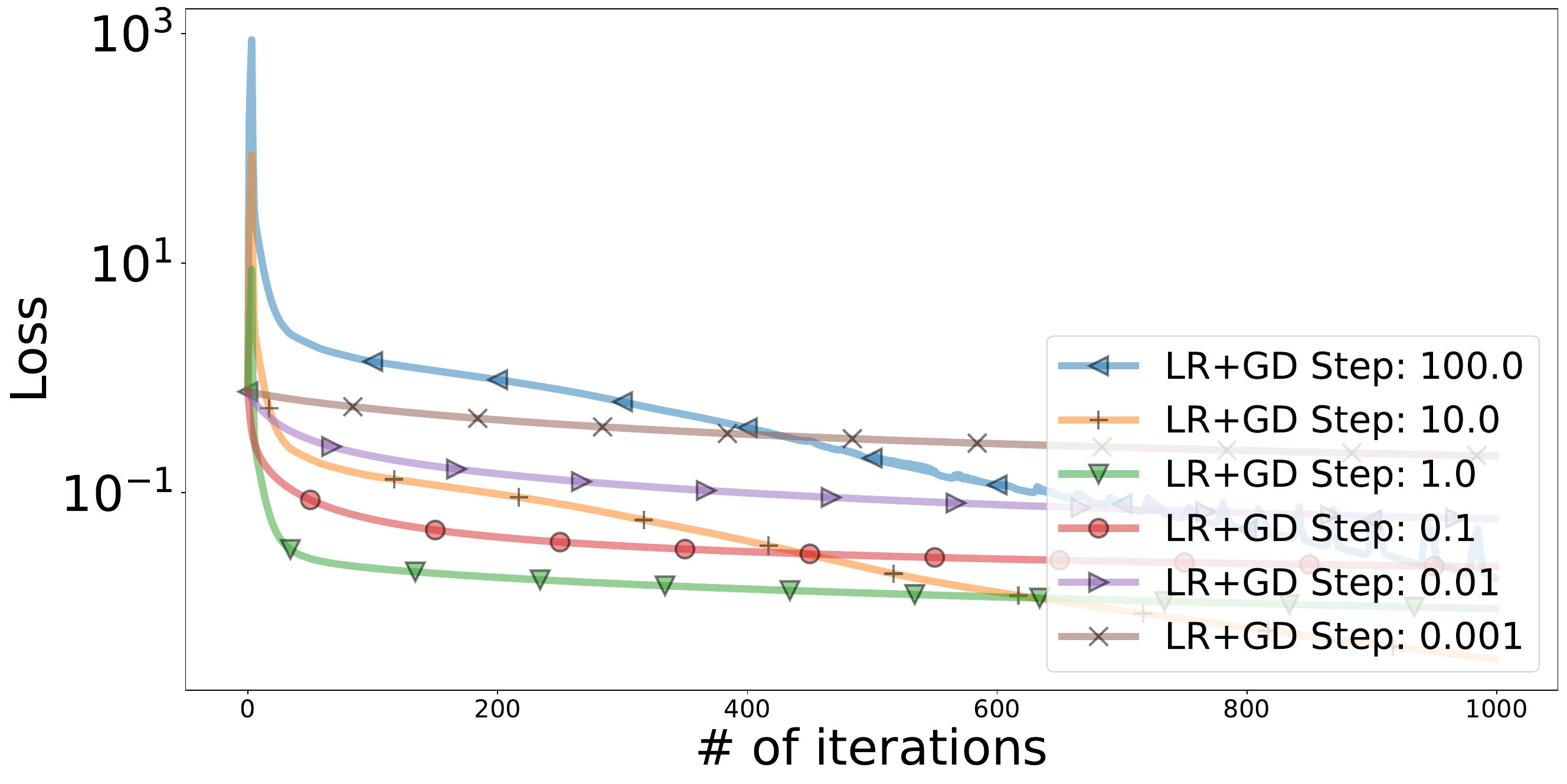}
\end{subfigure}
\begin{subfigure}[t]{0.32\textwidth}
    \centering
    \includegraphics[width=\textwidth]{./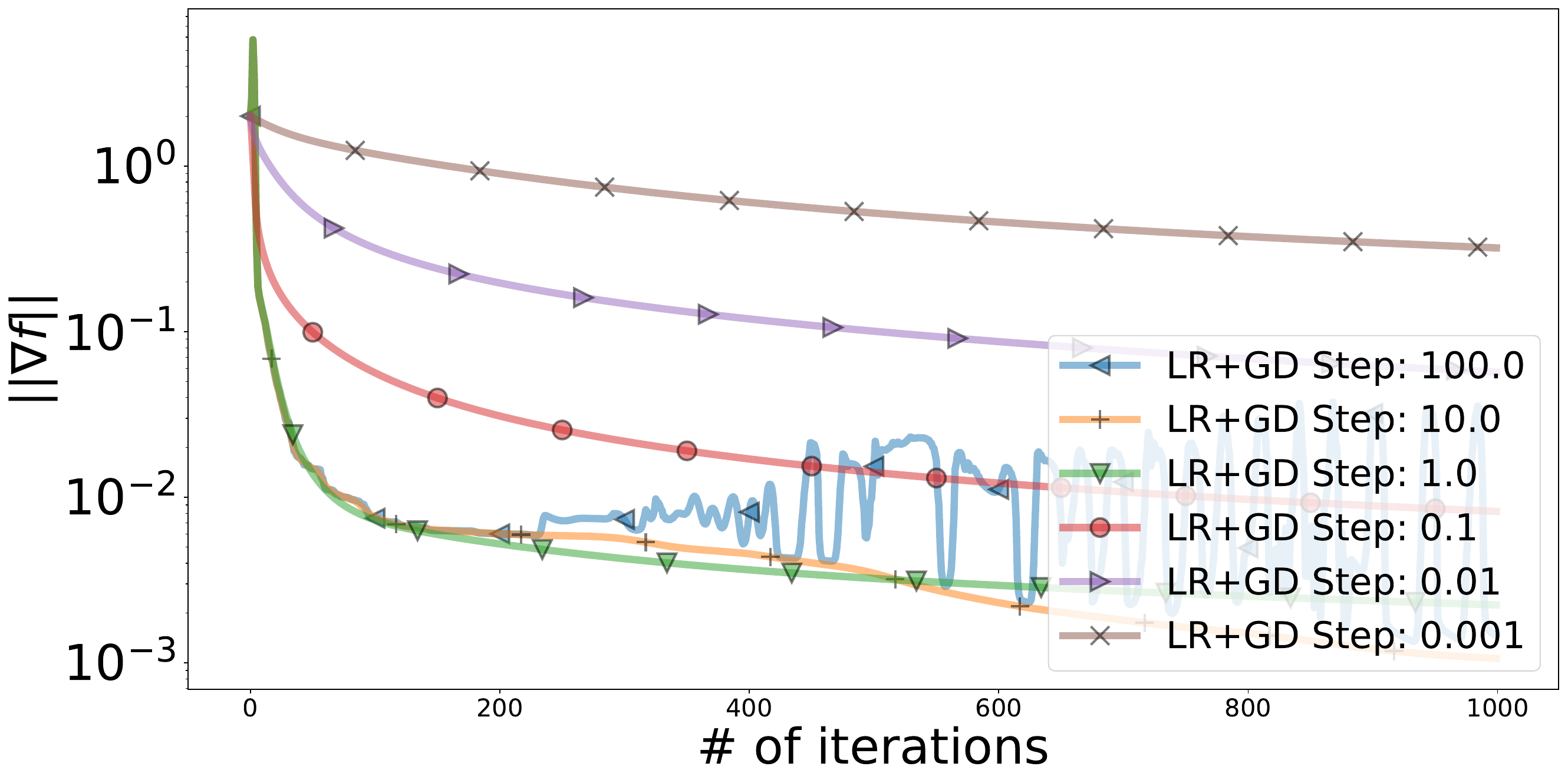}
\end{subfigure}
\caption{Accuracy, function values, and the norm of gradients of the logistic loss \eqref{eq:logistic_regression} on \emph{MNIST} during the runs of \protect\refalgone{eq:gd}.}
\label{fig:mnist_more}
\end{figure}
\subsection{Experiments with various numbers of samples to support Section~\ref{sec:gd_is_perceptron}}
\label{sec:more_data_points}
    \begin{figure}[h]
    \centering
    \begin{subfigure}[t]{0.49\textwidth}
        \centering
        \includegraphics[width=\textwidth]{./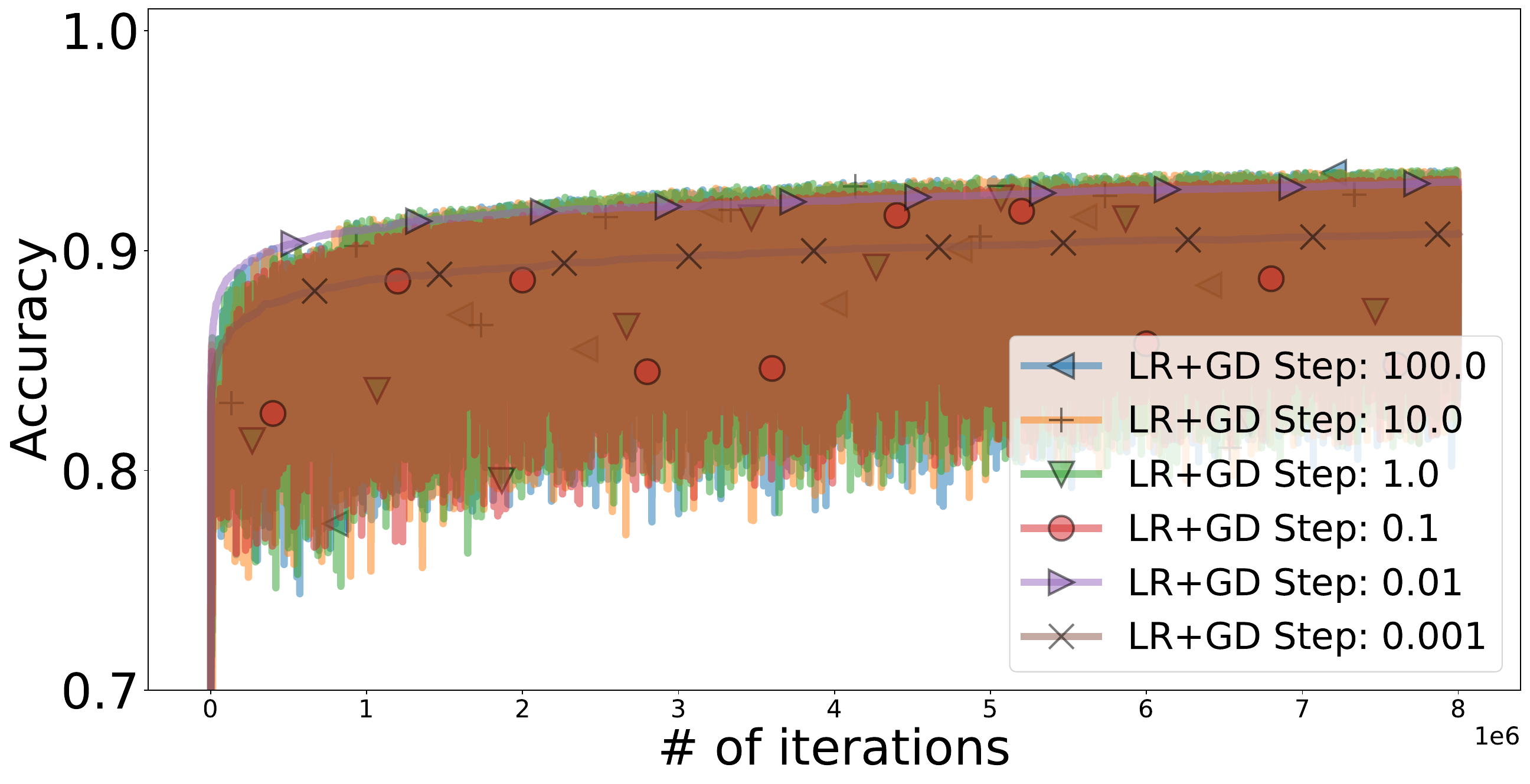}
    \end{subfigure}
    \begin{subfigure}[t]{0.49\textwidth}
        \centering
        \includegraphics[width=\textwidth]{./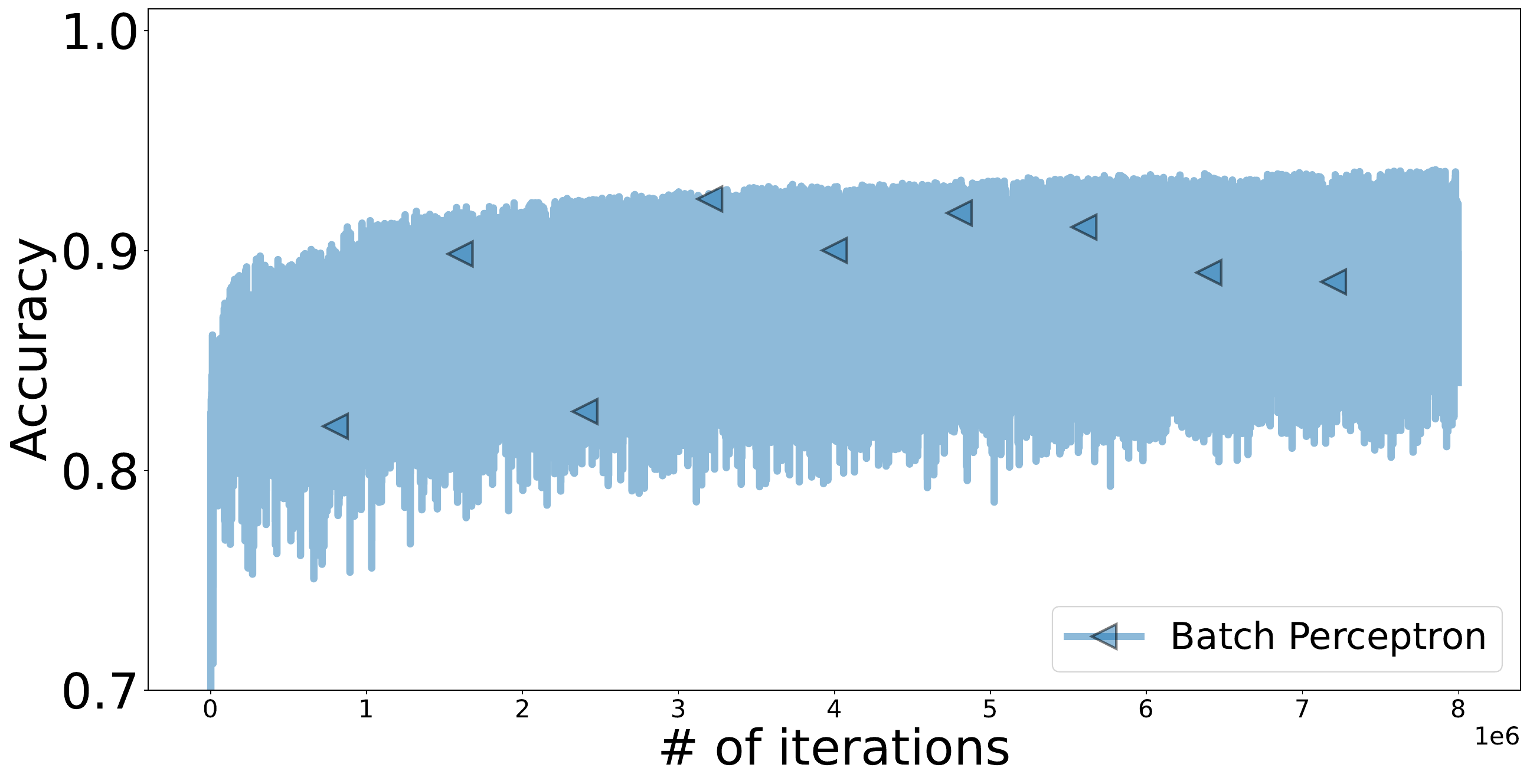}
    \end{subfigure}
    \caption{In these plots we repeat the experiments from Figure~\ref{fig:cifar10} on \emph{CIFAR-10} but with $10\,000$ samples. When \# of samples is $10\,000,$ we did not wait for the moment the algorithm achieves Accuracy=$1.0$ (time out).}
    \centering
    \begin{subfigure}[t]{0.32\textwidth}
        \centering
        \includegraphics[width=\textwidth]{./results_2024/gd_two_layer_eos_linear_one_no_bias_loss_bce_logits_num_samples_10000_more_iters_more_iters_filter_classes_0_1_repeat_longer_accuracy.pdf}
    \end{subfigure}
    \begin{subfigure}[t]{0.32\textwidth}
        \centering
        \includegraphics[width=\textwidth]{./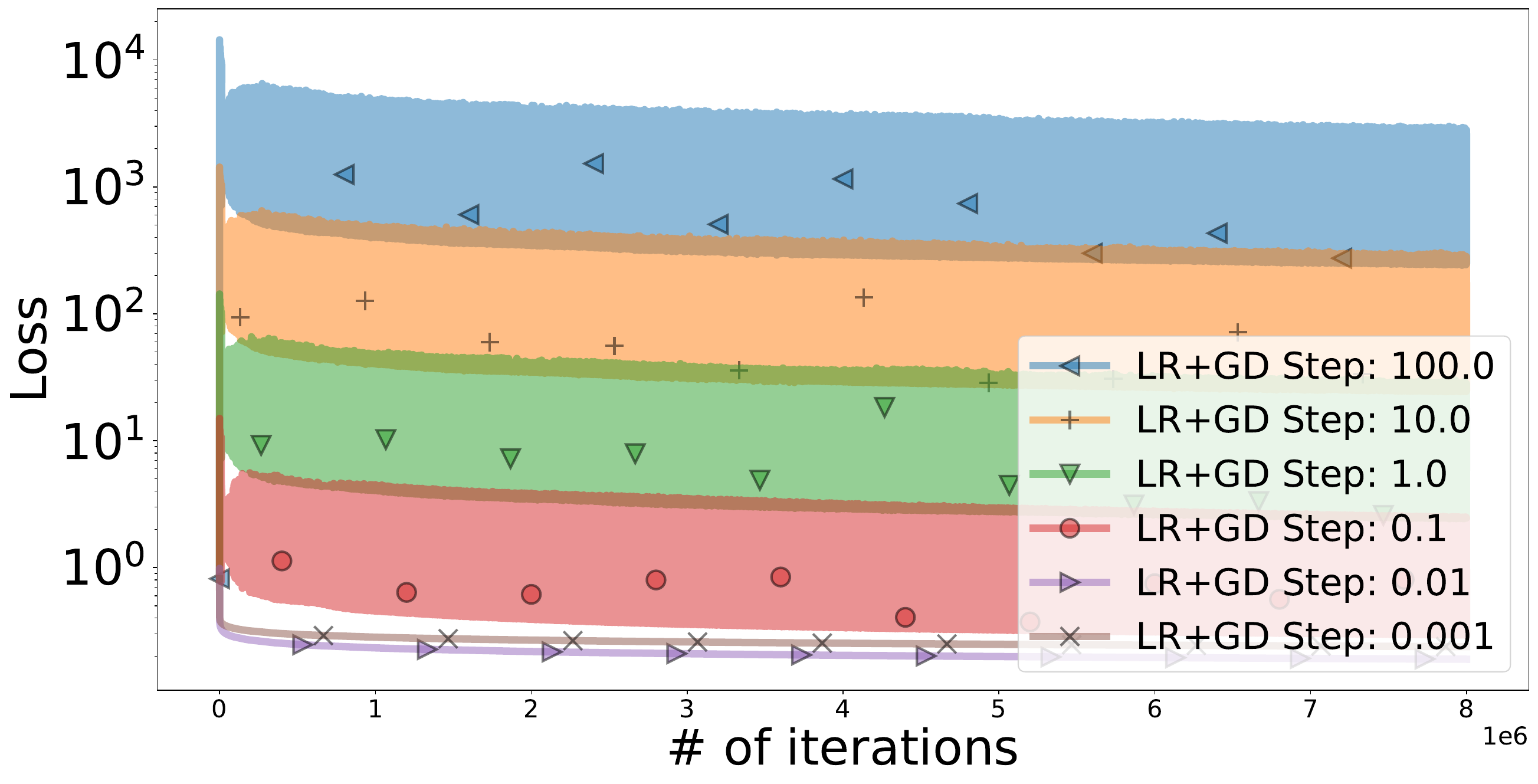}
    \end{subfigure}
    \begin{subfigure}[t]{0.32\textwidth}
        \centering
        \includegraphics[width=\textwidth]{./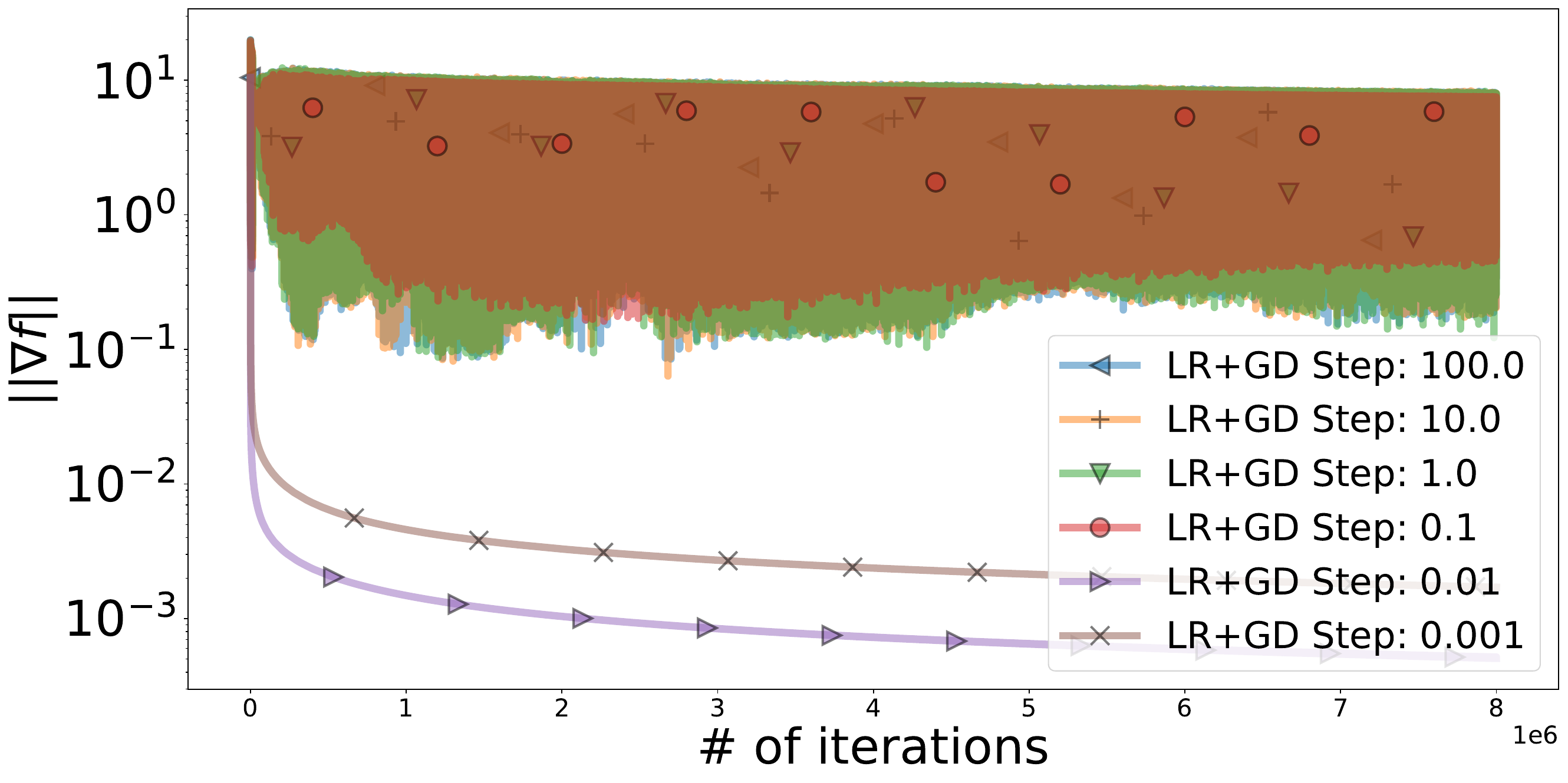}
    \end{subfigure}
    \caption{Accuracy, function values, and the norm of gradients of the logistic loss \eqref{eq:logistic_regression} on \emph{CIFAR-10} but with $10\,000$ samples.}
    \end{figure}
    \begin{figure}[h]
    \centering
    \begin{subfigure}[t]{0.49\textwidth}
        \centering
        \includegraphics[width=\textwidth]{./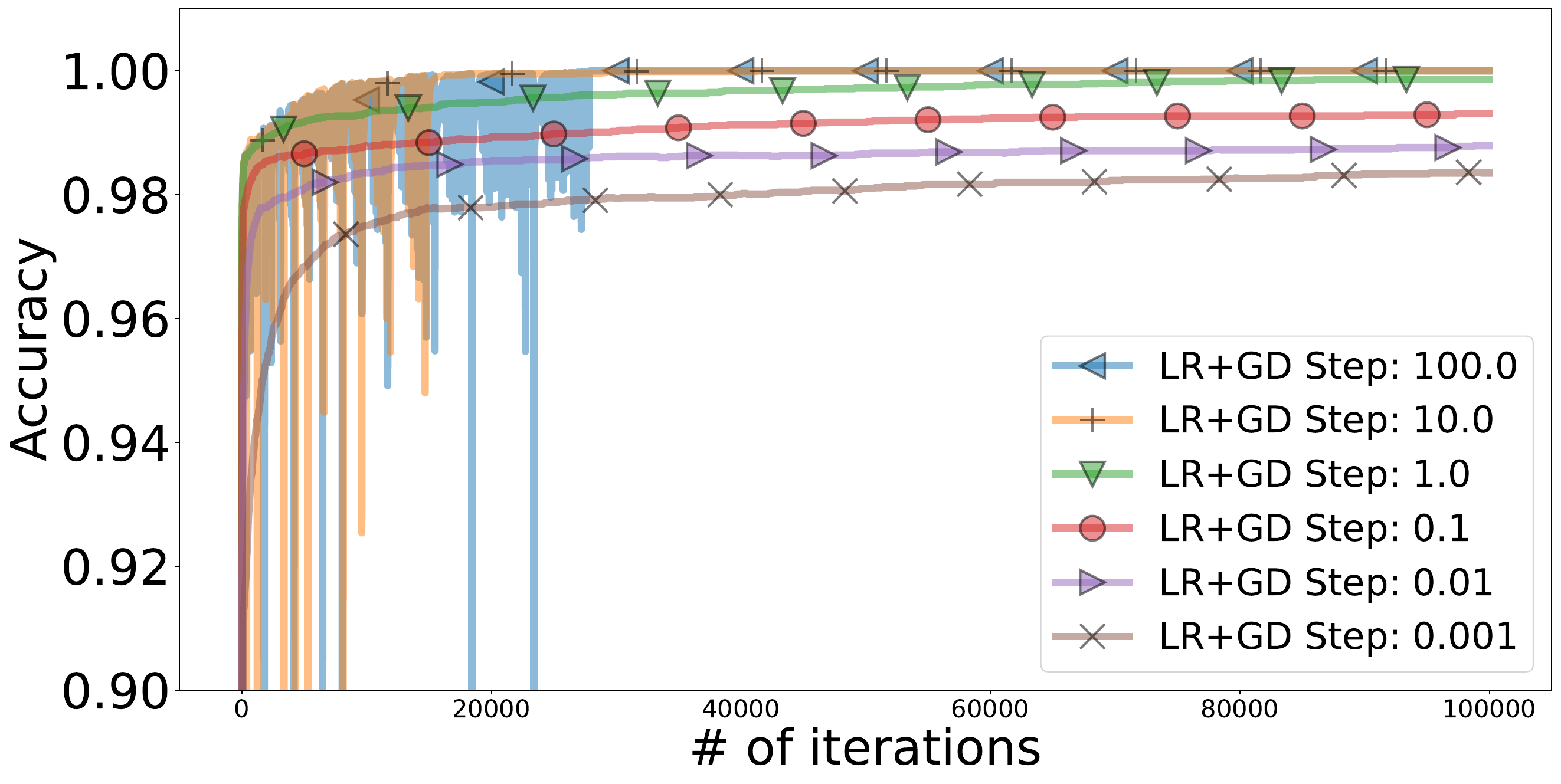}
    \end{subfigure}
    \begin{subfigure}[t]{0.49\textwidth}
        \centering
        \includegraphics[width=\textwidth]{./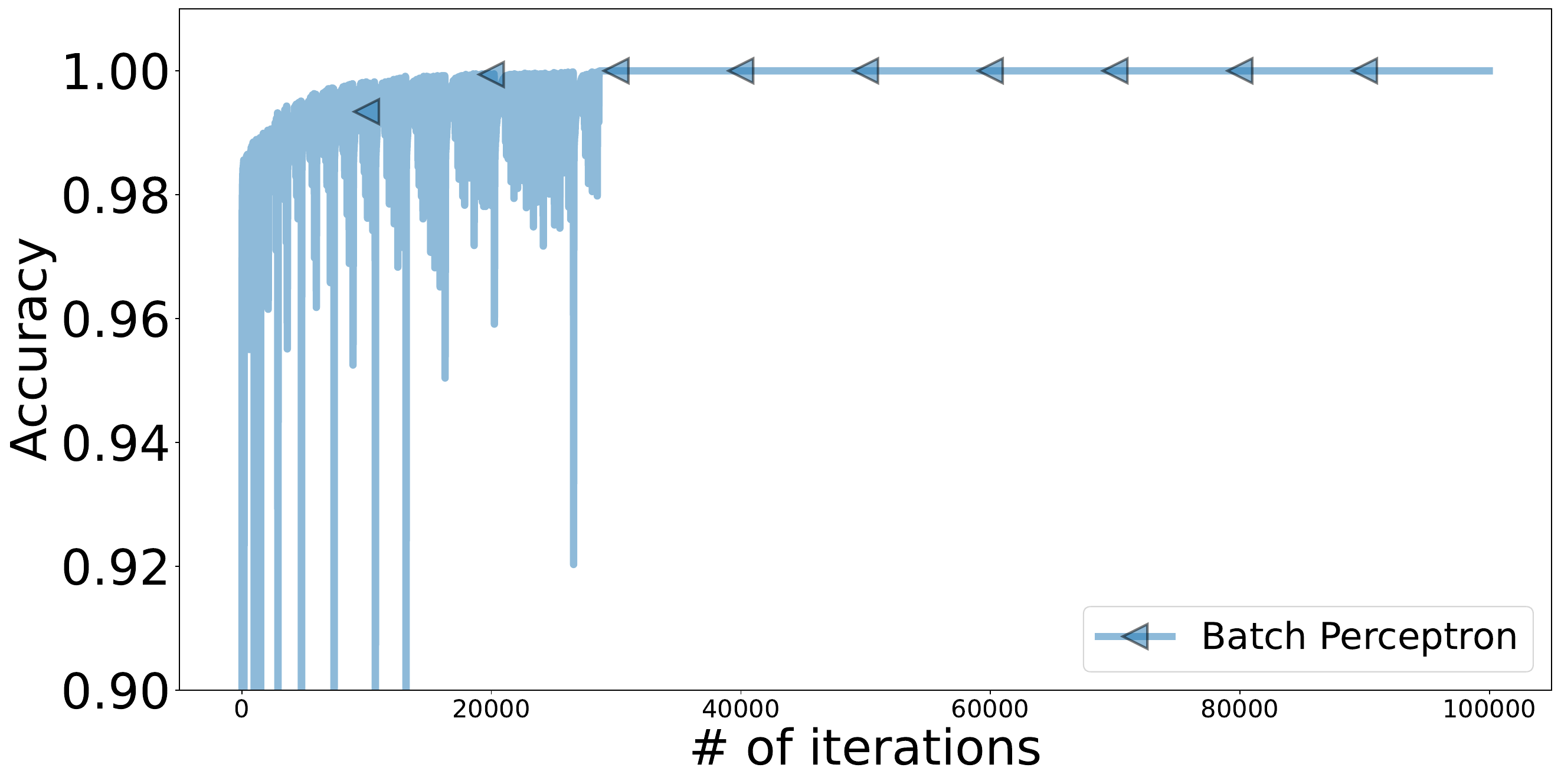}
    \end{subfigure}
    \caption{In these plots we repeat the experiments from Figure~\ref{fig:fashionmnist} on \emph{FashionMNIST} but with $10\,000$ samples.}
    \centering
    \begin{subfigure}[t]{0.32\textwidth}
        \centering
        \includegraphics[width=\textwidth]{./results_2024/gd_two_layer_eos_linear_one_no_bias_loss_bce_logits_fashion_mnist_num_samples_10000_more_iters_more_iters_filter_classes_0_4_accuracy.pdf}
    \end{subfigure}
    \begin{subfigure}[t]{0.32\textwidth}
        \centering
        \includegraphics[width=\textwidth]{./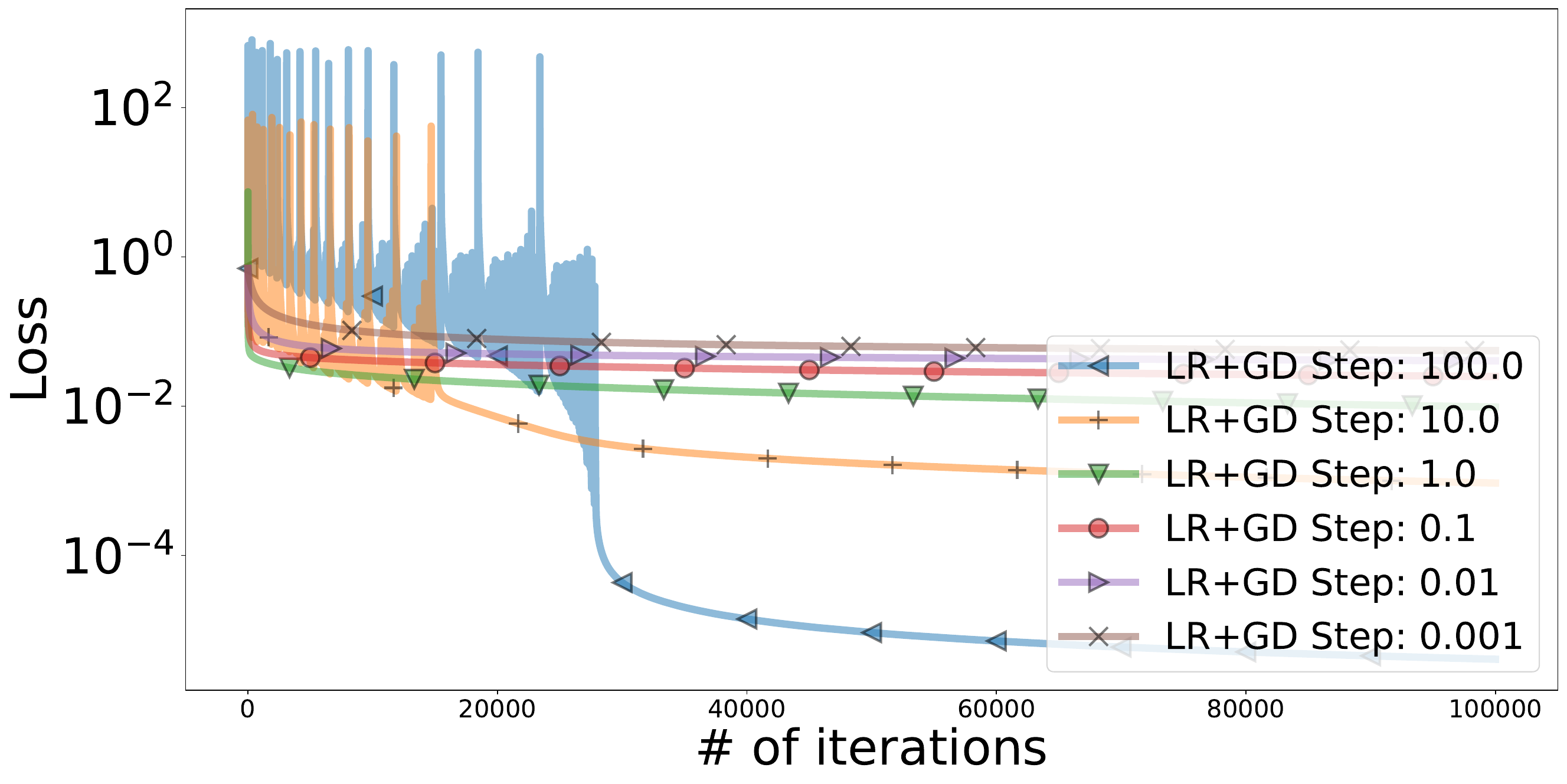}
    \end{subfigure}
    \begin{subfigure}[t]{0.32\textwidth}
        \centering
        \includegraphics[width=\textwidth]{./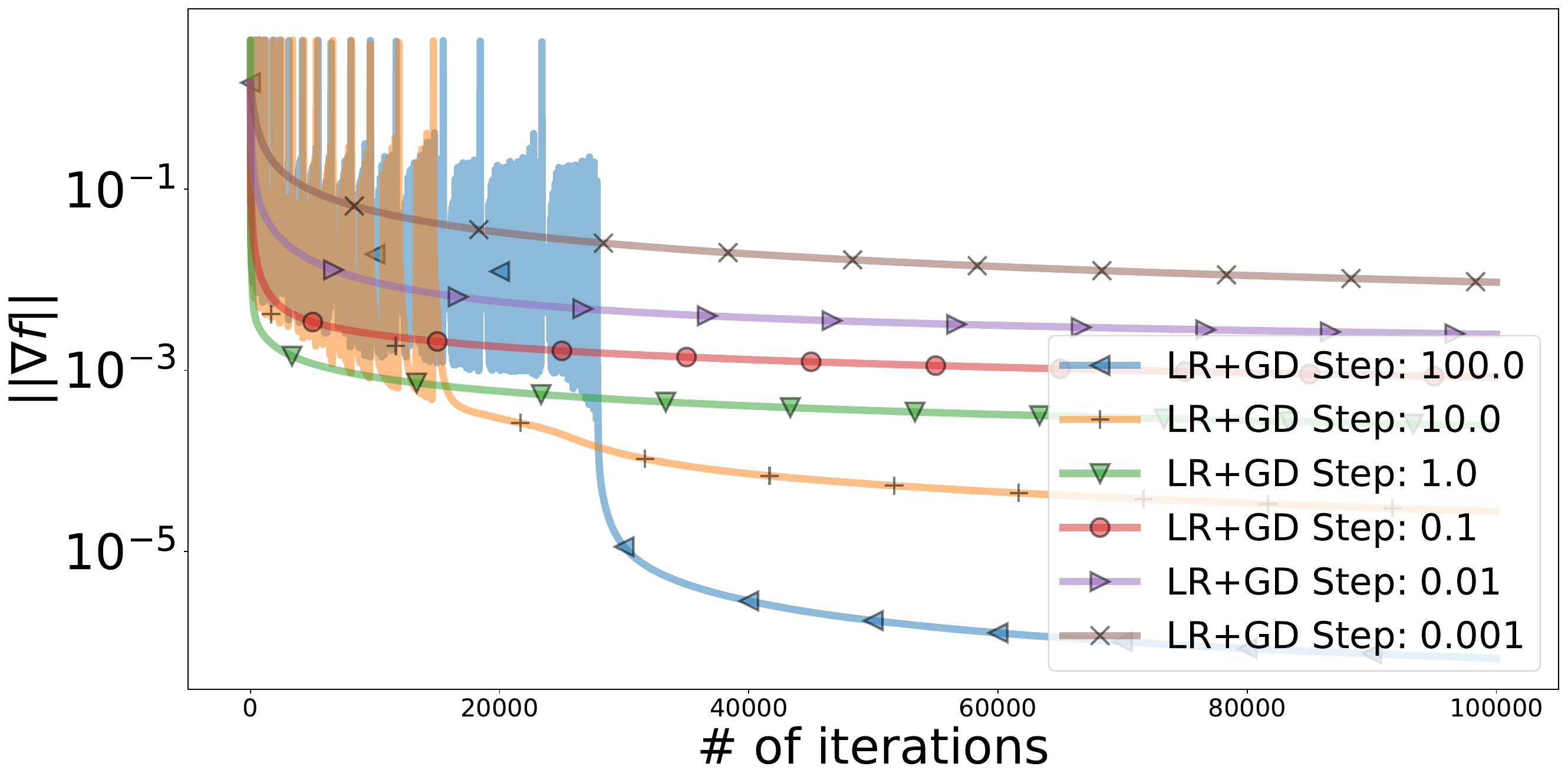}
    \end{subfigure}
    \caption{Accuracy, function values, and the norm of gradients of the logistic loss \eqref{eq:logistic_regression} on \emph{FashionMNIST} but with $10\,000$ samples.}
    \end{figure}

\begin{figure}[H]
\centering
\begin{subfigure}[t]{0.49\textwidth}
    \centering
    \includegraphics[width=\textwidth]{./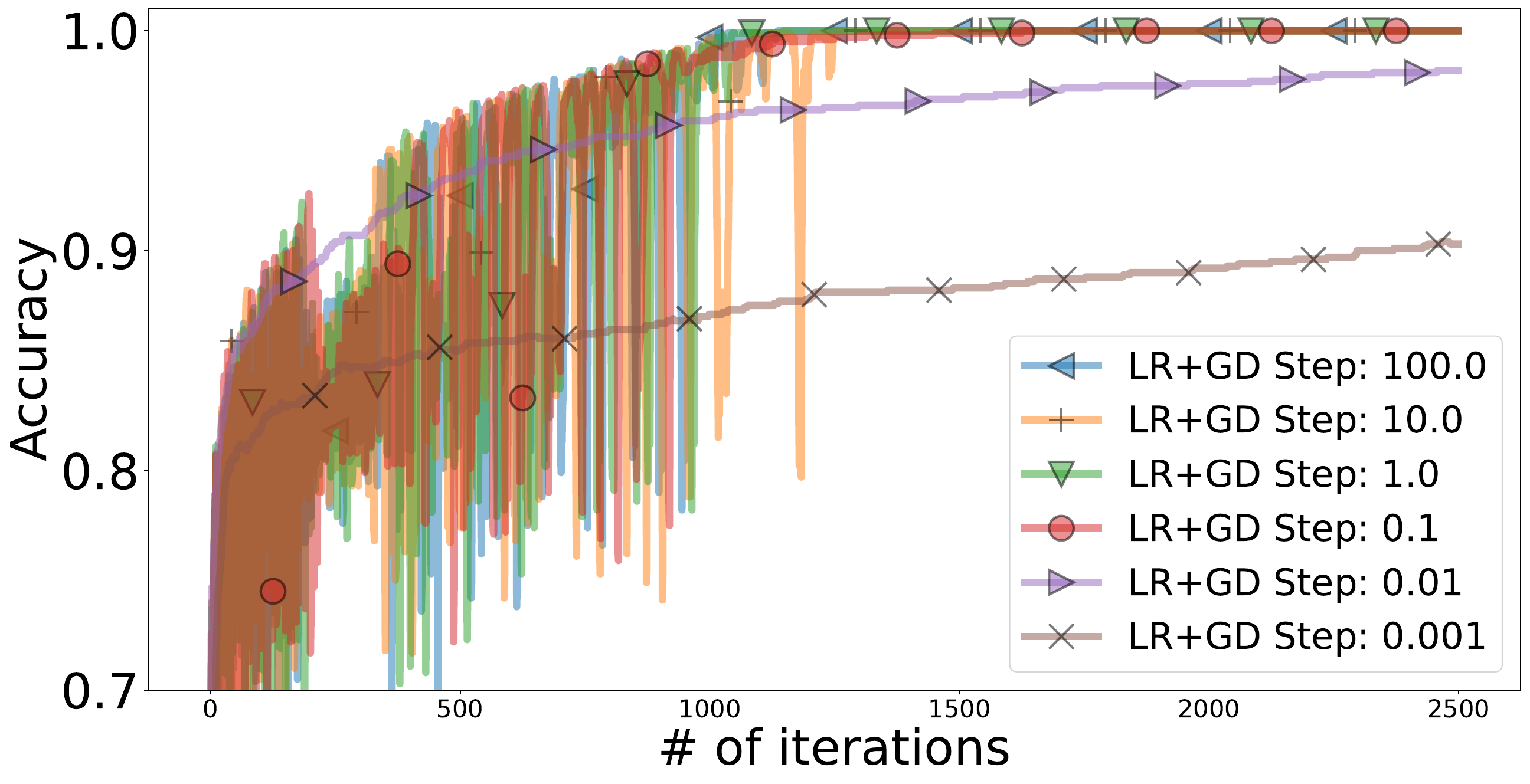}
\end{subfigure}
\begin{subfigure}[t]{0.49\textwidth}
    \centering
    \includegraphics[width=\textwidth]{./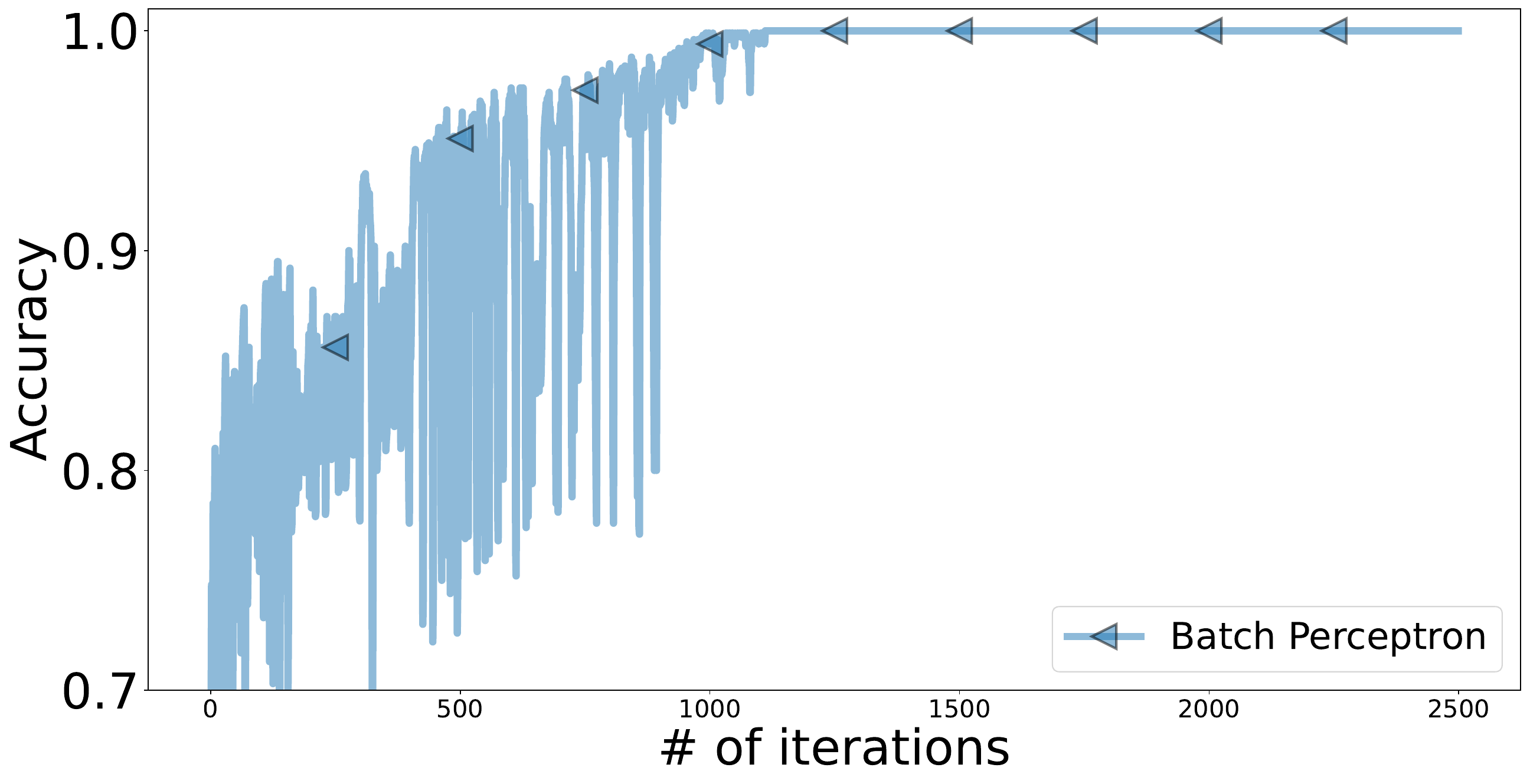}
\end{subfigure}
\caption{In these plots we repeat the experiments from Figure~\ref{fig:cifar10} on \emph{CIFAR-10} but with $1\,000$ samples.}
\centering
\begin{subfigure}[t]{0.32\textwidth}
    \centering
    \includegraphics[width=\textwidth]{./results_2024/gd_two_layer_eos_linear_one_no_bias_loss_bce_logits_num_samples_1000_more_iters_more_iters_filter_classes_0_1_repeat_accuracy.pdf}
\end{subfigure}
\begin{subfigure}[t]{0.32\textwidth}
    \centering
    \includegraphics[width=\textwidth]{./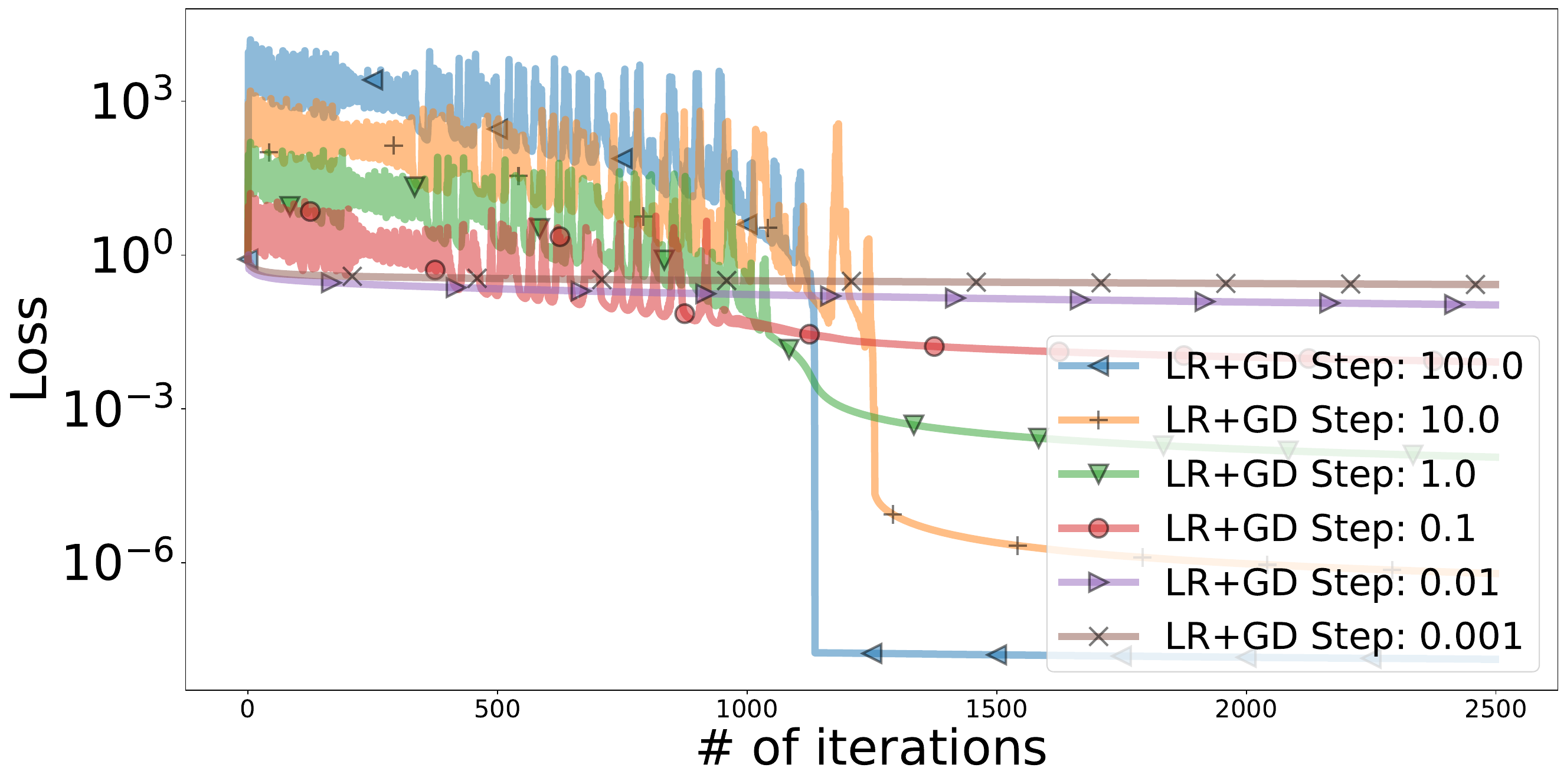}
\end{subfigure}
\begin{subfigure}[t]{0.32\textwidth}
    \centering
    \includegraphics[width=\textwidth]{./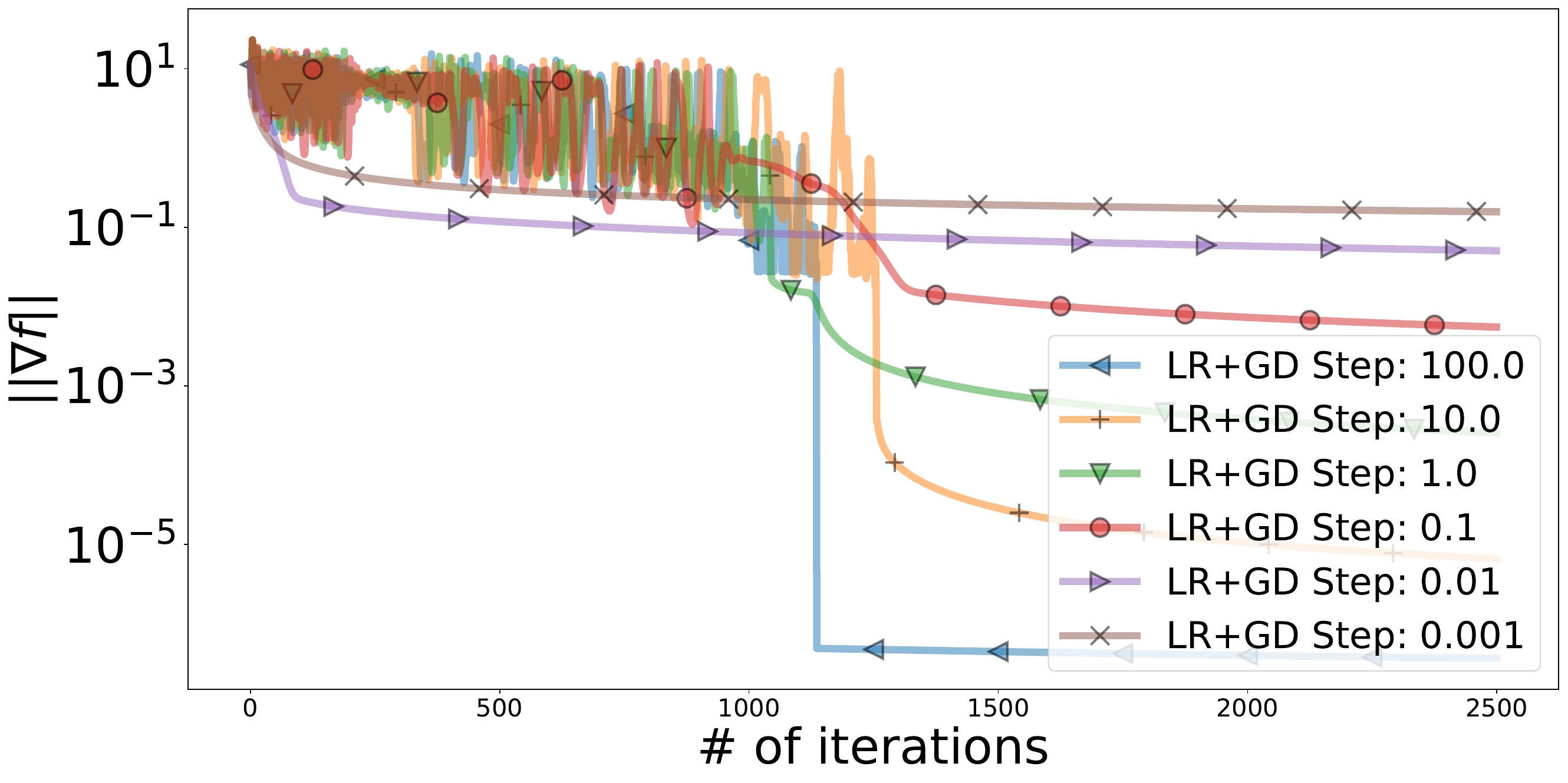}
\end{subfigure}
\caption{Accuracy, function values, and the norm of gradients of the logistic loss \eqref{eq:logistic_regression} on \emph{CIFAR-10} but with $1\,000$ samples.}
\end{figure}
\begin{figure}[h]
\centering
\begin{subfigure}[t]{0.49\textwidth}
    \centering
    \includegraphics[width=\textwidth]{./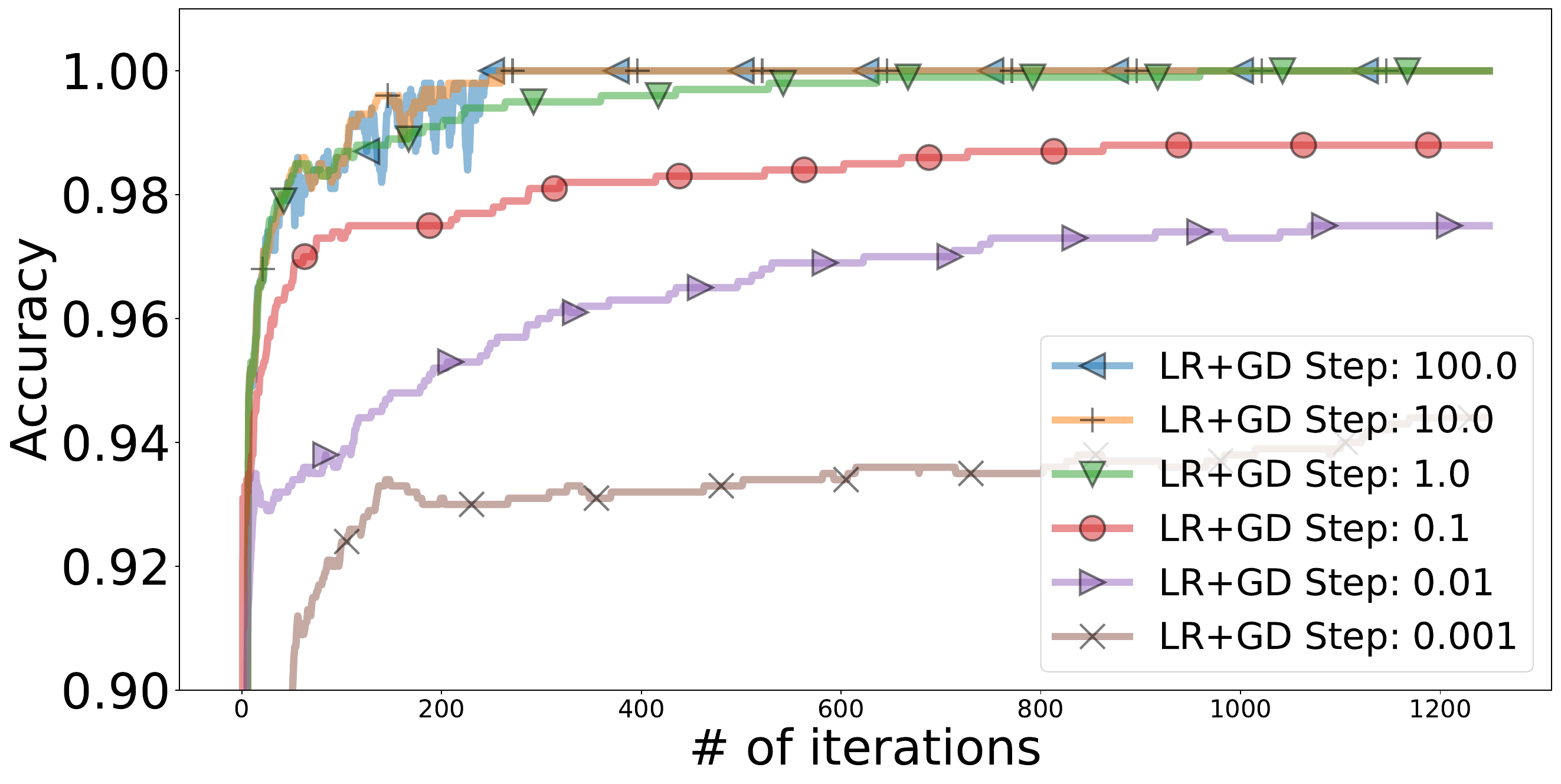}
\end{subfigure}
\begin{subfigure}[t]{0.49\textwidth}
    \centering
    \includegraphics[width=\textwidth]{./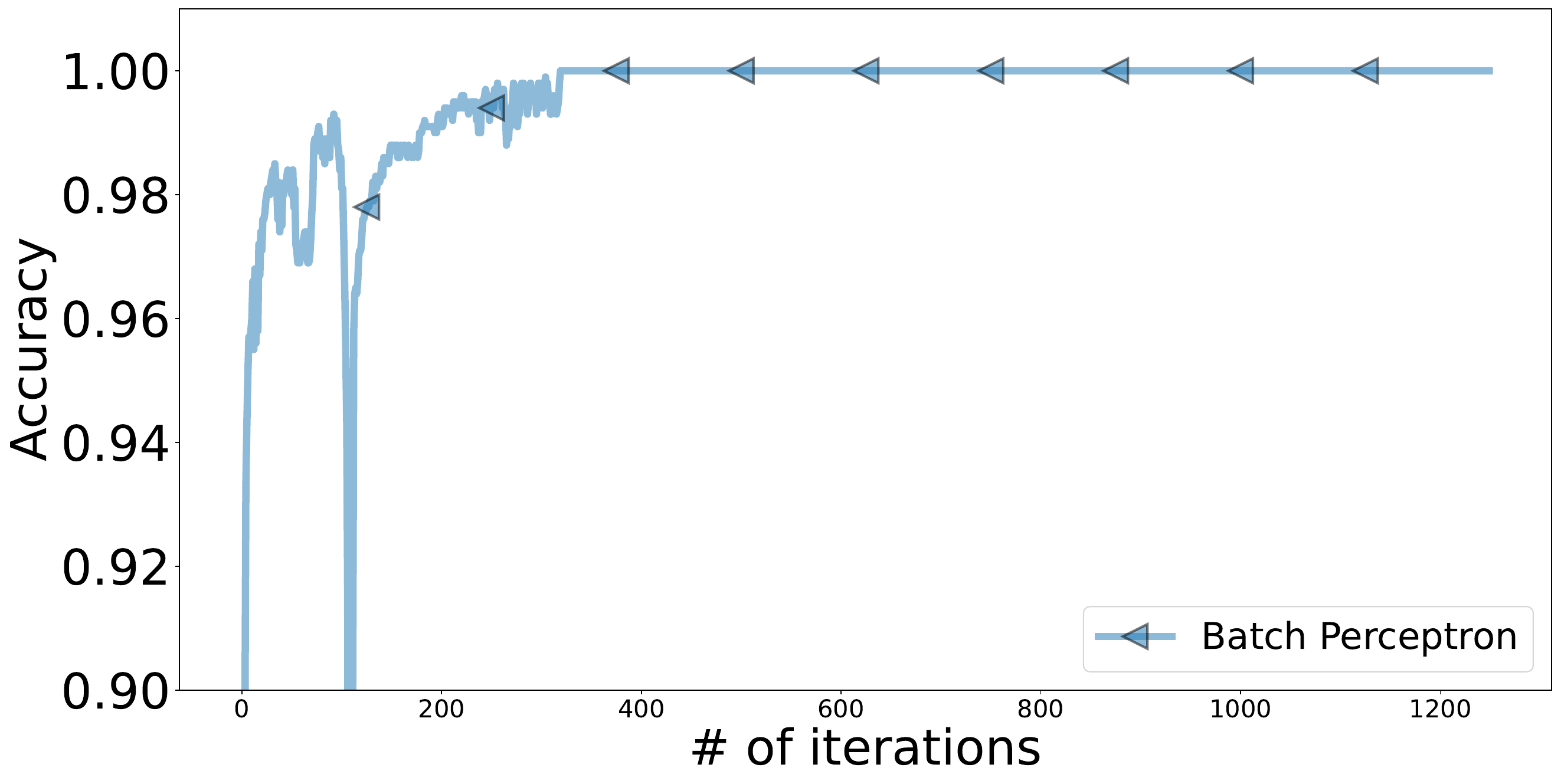}
\end{subfigure}
\caption{In these plots we repeat the experiments from Figure~\ref{fig:fashionmnist} on \emph{FashionMNIST} but with $1\,000$ samples.}
\centering
\begin{subfigure}[t]{0.32\textwidth}
    \centering
    \includegraphics[width=\textwidth]{./results_2024/gd_two_layer_eos_linear_one_no_bias_loss_bce_logits_fashion_mnist_num_samples_1000_more_iters_more_iters_filter_classes_0_4_accuracy.pdf}
\end{subfigure}
\begin{subfigure}[t]{0.32\textwidth}
    \centering
    \includegraphics[width=\textwidth]{./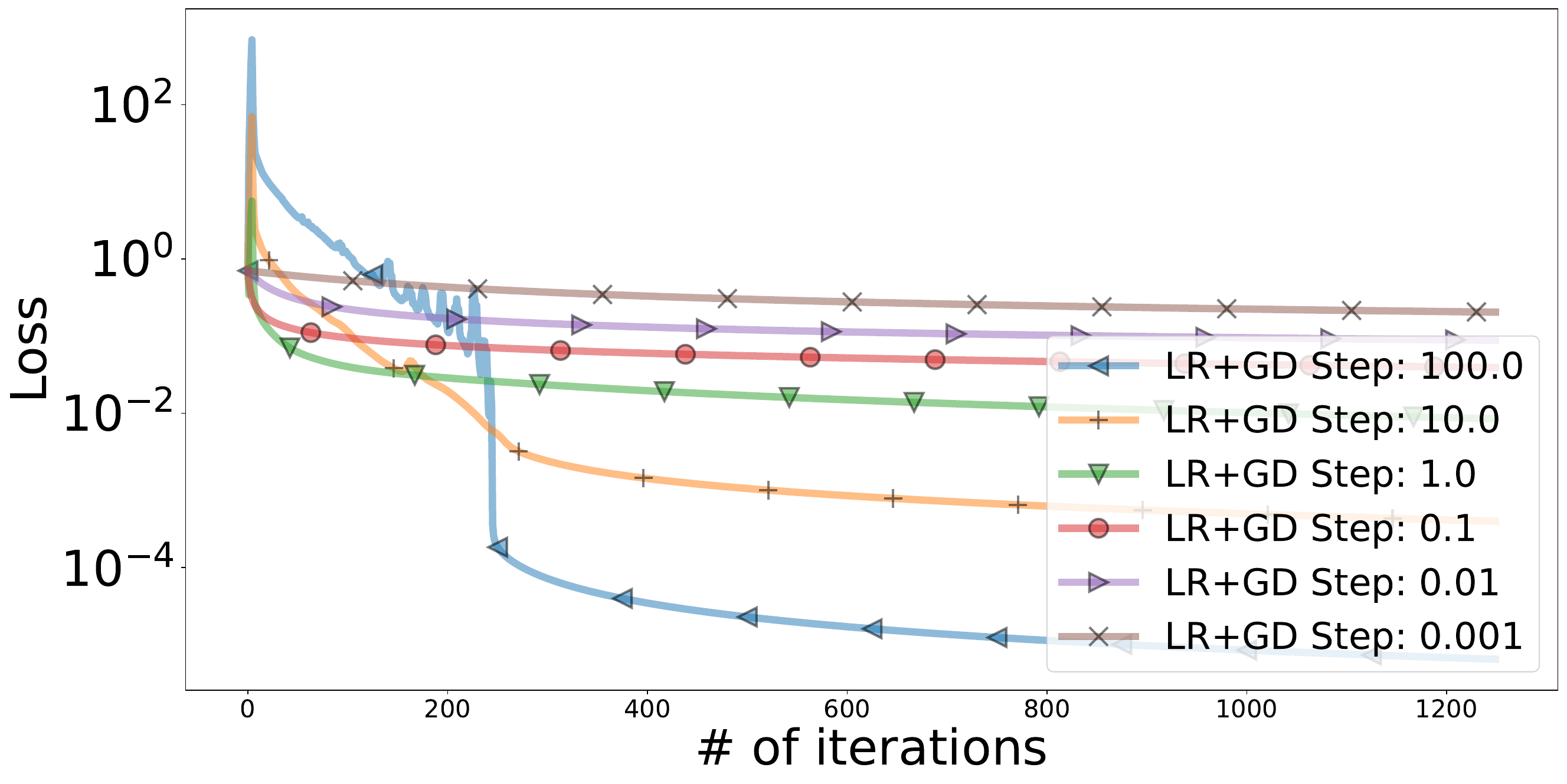}
\end{subfigure}
\begin{subfigure}[t]{0.32\textwidth}
    \centering
    \includegraphics[width=\textwidth]{./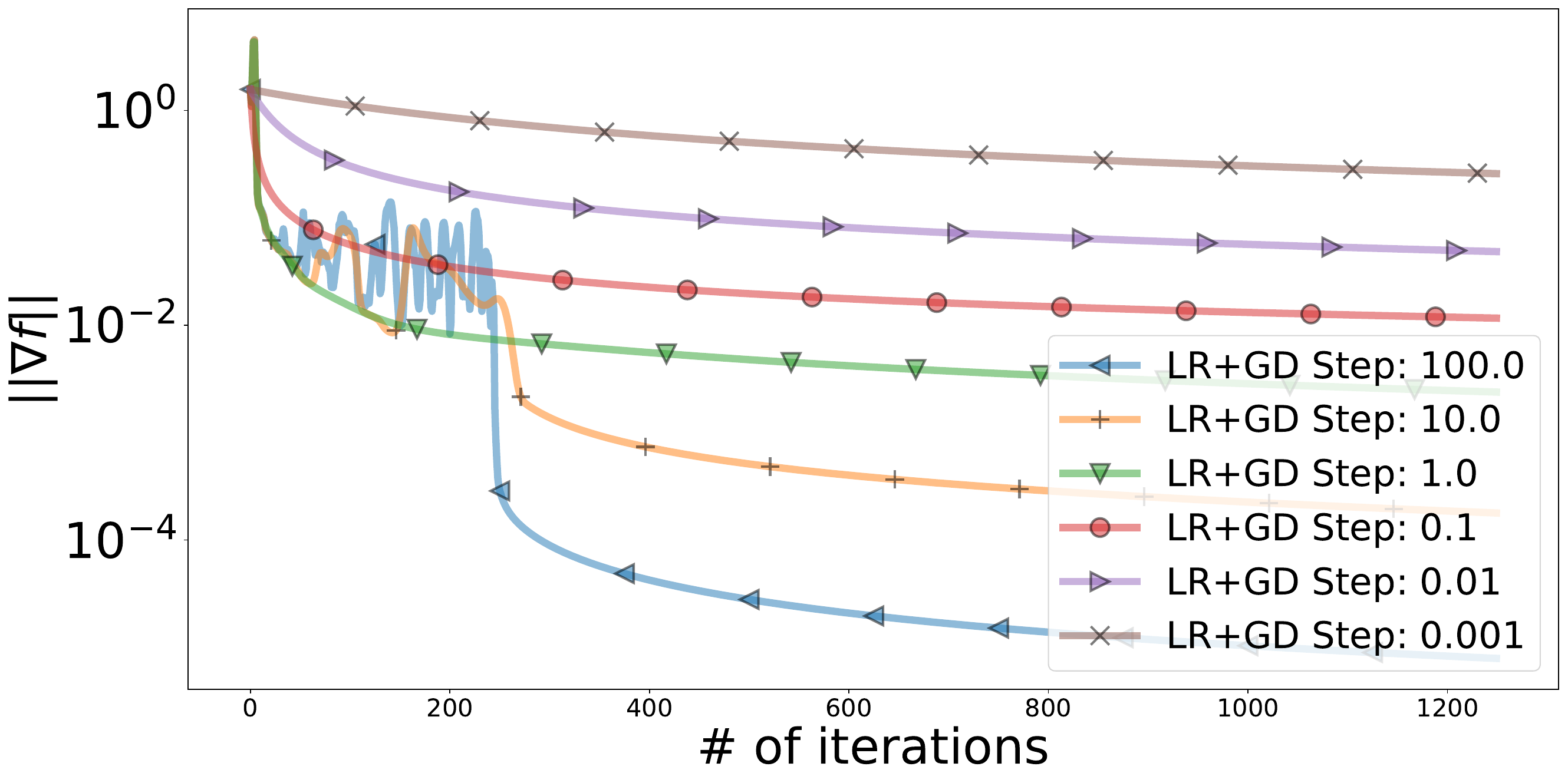}
\end{subfigure}
\caption{Accuracy, function values, and the norm of gradients of the logistic loss \eqref{eq:logistic_regression} on \emph{FashionMNIST} but with $1\,000$ samples.}
\end{figure}

\end{document}